\newcommand{\myitem}[1]{%
	\item[#1]\protected@edef\@currentlabel{#1}%
}
\DeclareMathOperator*{\argmin}{argmin}
\DeclareMathOperator{\dist}{dist}
\DeclareMathOperator{\Lip}{Lip}
\DeclareMathOperator{\Cov}{\mathbf{Cov}}
\newcommand{\ind}{\mathbbm{1}}
\newcommand{\sgn}{\operatorname{sign}}
\newcommand{\R}{\mathbb{R}}
\newcommand{\N}{\mathbb{N}}
\newcommand{\Prob}{\mathbb{P}}
\newcommand{\Exp}{\mathbb{E}}
\newtheorem{theorem}{Theorem}
\newtheorem{remark}{Remark}
\newtheorem*{theorem*}{Theorem}
\newtheorem{definition}{Definition}
\newtheorem{corollary}{Corollary}
\newtheorem{proposition}{Proposition}
\newtheorem{lemma}{Lemma}
\title{Super-fast rates of convergence for Neural Networks Classifiers under the Hard Margin Condition}
\author{\name Nathanael Tepakbong \email ntepakbo-c@my.cityu.edu.hk \\ 
	\addr Department of Data Science\\
	City University of Hong Kong
	\AND
	\name Xiang Zhou \email xiang.zhou@cityu.edu.hk \\
	\addr Department of Mathematics\\
	City University of Hong Kong
	\AND
	\name Ding-Xuan Zhou \email dingxuan.zhou@sydney.edu.au\\
	\addr School of Mathematics and Statistics \\
	The University of Sydney}
\begin{document}

	\maketitle
	
	\begin{abstract}
		We study the classical binary classification problem for hypothesis spaces of Deep Neural Networks (DNNs) under Tsybakov's low-noise condition with exponent $q>0$, as well as its limit case $q=\infty$, which we refer to as the \emph{hard margin condition}. We demonstrate that, for a wide range of commonly used activation functions (including but not limited to ReLU, LeakyReLU, ELU, CELU, SELU, Softplus, GELU, SiLU, Swish, Mish, and Softmax), DNN solutions to the empirical risk minimization (ERM) problem with square loss surrogate and $\ell_p$ penalty on the weights $(0<p<\infty)$ can achieve excess risk bounds of order $\mathcal{O}\left(n^{-\alpha}\right)$ for $\alpha$ close to $1$ under the low-noise condition, and for arbitrarily large $\alpha>1$ under the hard-margin condition, provided that the Bayes regression function $\eta$ satisfies a \emph{distribution-adapted smoothness} condition relative to the marginal data distribution $\rho_{X}$. Furthermore, when the activation function is chosen as $\tanh$ or sigmoid, we show that the same rates follow from the standard assumption that $\eta\in \mathcal{C}^s$. Finally, we establish minimax lower bounds, showing that these rates cannot be improved upon whenever $q\ge2$. Our proof relies on a novel decomposition of the excess risk for general ERM-based classifiers which might be of independent interest.
	\end{abstract}
	
	\section{Introduction}
	
	In this article, we study the problem of classifying high-dimensional data points with binary labels. It is well-known that, in the absence of structural assumptions about the data or the underlying model, convergence rates for classification tasks typically decay as $\mathcal{O}\left(n^{-c/d}\right)$ for some constant $c>0$, which becomes arbitrarily slow as the dimensionality $d$ increases. This phenomenon is often referred to as the curse of dimensionality (CoD). However, it has been observed that many models used in practice --- particularly Deep Neural Networks in recent years --- are capable of efficiently solving extremely high-dimensional classification tasks, achieving convergence rates that appear to defy the CoD \citep{goodfellow2016deep, krizhevsky2012imagenet}.
	
	This gap between theoretical results and practical observations can often be bridged by introducing suitable regularity assumptions on the problem. In the context of supervised binary classification, such assumptions frequently take the form of \emph{margin conditions}. First introduced in the seminal work of \citep{mammen1999smooth}, margin conditions characterize the behavior of the data distribution near the decision boundary --- the region where classification is most challenging. Over the years, these conditions have enabled the derivation of CoD-free rates of convergence for classifiers based on various hypothesis spaces \citep{tsybakov2004optimal,audibert2007fast}. Remarkably, margin conditions can not only eliminate the curse of dimensionality,
	but can also lead to ``fast" rates of convergence --- faster than the standard $\mathcal{O}\left(n^{-1/2}\right)$ --- and, under their strongest form, even ``super-fast" rates, exceeding $\mathcal{O}\left(n^{-1}\right)$.
	
	Notable examples of hypothesis spaces for which these super-fast (sometimes even exponential) rates of convergence have been observed include local polynomial estimators \citep{audibert2007fast}, support vector machines \citep{steinwart2005fast,steinwart2008support,cabannes2022case} or Reproducing Kernel Hilbert Spaces (RKHS) \citep{koltchinskii2005exponential,smale2007learning,vigogna2022multiclass}. More recently, it has even been shown that for data coming from an infinite-dimensional Hilbert space, the Delaigle-Hall condition \citep{delaigle2012achieving}, which can be thought of as an infinite-dimensional analogue of the classical margin conditions, can lead to super-fast rates of convergence for RKHS classifiers \citep{wakayama2021fast}.  
	
	Perhaps surprisingly however, for Deep Neural Networks (DNNs), no such ``super-fast" rates of convergence have been shown to hold, even under the strongest margin and regularity assumptions. This fact seemingly contradicts the observation that DNNs outperform all other traditional methods by far when it comes to high-dimensional classification. This naturally raises the question: are Neural Networks truly inferior to traditional classification methods in the hard-margin regime? In this work, we answer negatively to this question by showing ``super-fast" rates of convergence for DNN classifiers under the hard-margin condition. Before presenting our setup and results in greater detail, we briefly review related literature in the following section.
	
	\subsection{Related works}
	
	When considering a binary classification problem on $[0,1]^d$ with labels $\{1,-1\} $, there are different possible objects which can be used to characterize the regularity of the problem: \begin{itemize}
		\item the Bayes regression function $\eta:x\in[0,1]^d\mapsto \mathbb{E}[Y\mid X=x]$ which, up to an affine transformation, represents the conditional probability of $\{Y=1\}$ given $\{X=x\}$,
		\item the Bayes classifier $c$ induced by the Bayes regression function: $c:x\mapsto\sgn(\eta(x))$. It is the optimal classifier in the sense that it minimizes the expected 0-1 loss over all admissible classifiers, and is therefore what we are implicitly trying to learn.
		\item the decision region $\Omega := c^{-1}(\{1\})$ and the induced decision boundary $\partial\Omega$.
	\end{itemize}
	
	The margin condition which we refer to in this work, and originally introduced in \citep{mammen1999smooth}, assumes that for all $t>0$, $\mathbb{P}(|\eta(X)|\le t) \lesssim t^q$, where $q>0$ is a constant called the \textit{margin exponent} (note that depending on the source, this is also referred to as a \textit{low-noise} condition). In \citep{kim2018fast}, it has been shown that such a margin condition coupled with additional assumptions on respectively the regression function $\eta$, the decision boundary $\partial\Omega$, or the probability for data points to be near the decision boundary $\partial\Omega$, leads to minimax optimal fast rates of convergence for sparse DNN classifiers obtained by hinge-loss empirical risk minimization. For instance, if $\eta$ is assumed to be Hölder continuous, they prove the excess risk bound:
	\[\mathcal{E}(\hat{f}_{DNN}) \lesssim \left(\frac{\log^3 n}{n}\right)^{\frac{\beta(q+1)}{\beta(q+2)+d}}, \] 
	
	where $\beta$ is the Hölder exponent of $\eta$. As we can see, when the margin exponent $q\to\infty$, their result leads to the ``fast rate" of $\mathcal{O}\left(n^{-1}\right)$.  
	
	In a similar vein, by assuming different kinds of regularity on these objects, and leveraging recent advances on the approximation rates and complexity measures of DNNs hypothesis spaces, various minimax optimal rates of convergence of this kind have been obtained for DNNs under different settings. A non-exhaustive list of such works includes \citep{feng2021generalization,meyer2022optimal,petersen2021optimal,bos2022convergence,hu2022minimax,ko2023excess}. As it has been mentioned earlier, while these results for DNNs clearly highlight their ability to generalize with CoD-free rates of convergence, none of them obtain a rate faster than $\mathcal{O}\left(n^{-1}\right) $, even under the most idealized regularity assumptions, unlike the more traditional methods.
	
	To the best of the authors' knowledge, it has only been shown in \citep{hu2021understanding} that the \textit{hard-margin condition} (which can informally be seen as the limit $q=\infty$ of Tsybakov's low-noise condition), can lead to exponential rates of convergence for the excess risk for Neural Networks: they prove the result for shallow networks in the Neural Tangent Kernel (NTK) regime \citep{jacot2018neural},  which are trained to minimize the Empirical Risk with square loss surrogate. \citep{nitanda2020optimal} similarly show how, in the NTK regime, the hard-margin condition leads to an exponential convergence of the averaged stochastic gradient descent (SGD) algorithm with respect to the number of epochs. However, these results are not fully satisfactory, as it is known that the NTK regime does not accurately represent the expressive power of deeper Networks \citep{bietti2021deep}. This work is thus, to the best of our knowledge, the first to prove super-fast rates of convergence for conventional DNNs hypothesis spaces under the hard-margin condition.
	
	\subsection{Our Contributions}
	
	We study the binary classification problem over a hypothesis space of parametric functions. The classifiers are learned in a standard supervised learning fashion by minimizing an empirical risk with the square loss as a surrogate and an $\ell_p$ penalty on the model's weights, where $0<p<\infty$. For a real-valued, measurable function $f$, denote the \emph{excess risk} $\mathcal{E}({f})$ of the classifier induced by $f$ as
	\[\mathcal{E}({f}) := \mathbb{P}_{(X,Y)\sim\rho}\left(\sgn{f}(X)\ne Y\right) - \mathbb{P}_{(X,Y)\sim\rho}\left(c^*(X)\ne Y\right),\]
	where  $c^*$ is the Bayes classifier, induced by the Bayes regression function $\eta$.
	Our main contributions can be stated as follows:
	\begin{itemize}
		\item In Theorem \ref{thm:generic_bound}, we provide a novel error decomposition for the excess risk of classifiers induced by general classes of parametric functions under both ``weak" $(q>0)$ and ``hard" $(q=\infty)$ margin conditions. The proof is elementary and relies on an inequality we learned from \citep{vigogna2022multiclass}.
		\item As a direct application of Theorem \ref{thm:generic_bound}, we show in Theorem \ref{thm:cv_fcnn} that when the hypothesis space consists of DNNs with ReLU activation, and the regression function $\eta$ satisfies a \emph{distribution-adapted smoothness} condition relative to the marginal data distribution $\rho_{X}$, the excess risk  $\mathcal{E}(\hat{f}_{NN})$ converges at a rate $\mathcal{O}\left(n^{-\alpha}\right)$. Specifically, $\alpha\to 1/r$ as $s\to\infty$ under the weak-margin condition, and $\alpha\to\infty$ as $s\to\infty$ under the hard-margin assumption. Here, $r>1$ roughly quantifies the flatness of the loss landscape near minmizers, and $s>0$ quantifies how efficiently $\eta$ can be approximated by the DNN space and can be informally interpreted as a smoothness parameter. Minimax lower bounds are provided in Theorem~\ref{thm:lower_bound}, showing that our results are close to optimal.
		\item Leveraging a result of \citep{zhang2023deep}, we extend Theorem~\ref{thm:cv_fcnn} to DNNs with activation function in a very broad class, including most commonly used activation functions, such as LeakyReLU, ELU, CELU, GELU, Softplus, Swish or Mish. This result is stated as Corollary~\ref{cor:cv_fcnn_activation_transfer}. We furthermore show in Theorem~\ref{thm:cv_tanh} that if the activation function is chosen as $\tanh$ or sigmoid, the same rates hold under the standard assumption that $\eta \in \mathcal{C}^s$, instead of the distribution-adapted smoothness assumption.
		\item Lastly, we apply Theorem \ref{thm:generic_bound} again to a simplified version of the teacher-student setting, which we recast as a binary classification problem in which the training labels are given by fuzzy predictions of a ``teacher" neural network: we show in Theorem \ref{thm:cv_ideal} that if the teacher network is realizable by the student network, then the excess risk $\mathcal{E}(\hat{f}_{DNN})$ converges at a rate $\mathcal{O}\left(e^{-\beta n}\right)$ for some constant $\beta>0$.
	\end{itemize}
	
	In all of our results, the excess risk bounds hold in the almost-sure sense and are non-asymptotic: they hold for any integer $n\ge n_0$ for some constant $n_0$ whose expression we give explicitly in terms of the problem's parameters.
	
	\subsection{Notations}
	\subsubsection*{Function Spaces:}
	Let $d\ge 1$ be an integer. For a closed subset $\mathcal{X}\subseteq\R^{d}$, a Borel measurable $\mathcal{Y}\subseteq \R$, and an integer $k\ge 0$ we will denote by 
	\begin{itemize}
		\item $\mathcal{M(\mathcal{X},\mathcal{Y})}$ the space of Borel measurable functions from $\mathcal{X}$ to $\mathcal{Y}$,
		\item $\mathcal{C}^k (\mathcal{X},\mathcal{Y})$ the space of $\mathcal{Y}$-valued, $k$ times continuously differentiable functions on $\mathcal{X}$,
		\item $L^p(\mathcal{X},\mathcal{Y},\mu)$ the space of Borel measurable $\mathcal{Y}$-valued functions on $\mathcal{X}$ whose absolute $p$-th power is $\mu$-integrable, where $\mu$ is a measure on $\mathcal{X}$ and $p\in[1,\infty]$. Whenever $\mu$ is the Lebesgue measure, we will omit it from notation and simply write $L^p(\mathcal{X},\mathcal{Y})$.
	\end{itemize}
	For any of these function spaces, we might drop the domain $\mathcal{X}$ and/or the co-domain $\mathcal{Y}$ from notation if context already makes it clear.
	
	\subsubsection*{Norms:}
	For any positive integers $d,u,v $, real $0<p<\infty$, $x = (x_1,\ldots,x_d)^T\in \R^d$, $A=(a_{i,j})\in\R^{u \times v} $ and $f\in\mathcal{M(\mathcal{X},\R)}$, we will denote by
	\begin{itemize}
		\item respectively $|x|_p:=\left(|x_1|^p + \ldots + |x_d|^p\right)^{1/p}$, $|x|_0 :=|x_1|^0+\ldots |x_d|^0 $ (with the convention $0^0:= 0$) and $|x|_\infty := \max_{1\le i \le d} |x_i|$,  the $\ell_p$, $\ell_0$ and $\ell_\infty$ (quasi-)norm of $x$.
		\item $|A|_p:=\left(\sum_{i,j} |a_{i,j}|^p\right)^{1/p} $ the $\ell_{p,p} $ norm of $A$,
		\item respectively $\|f\|_{\mathcal C^k(\mathcal X)} $ and $\|f\|_{L^p(\mu)} $ the $\mathcal{C}^k (\mathcal{X},\R)$ norm and $L^p(\mathcal{X},\R,\mu)$ norm of $f$, which are defined in the standard way.
	\end{itemize}
	
	\subsubsection*{Other Symbols:}
	We will also denote by
	\begin{itemize}
		\item $\N:=\{1,2,\ldots\}$ the set of all natural numbers, and $\N_0 :=\{0\} \cup \N$,
		\item $\ind_A$ the indicator function of a set $A$, which equals $1$ on $A$ and $0$ everywhere else,
		\item $\sgn(x) :=\ind_{(0,\infty)}(x) - \ind_{(-\infty,0)}(x) $ the sign of a real number $x$. We will also denote by $\sgn f:=\sgn\circ f$ the composition of a real-valued function $f$ with $\sgn$,
		\item $\Exp[Z]$ the expectation of a random variable $Z$. If $Z=f(X,Y)$, we may write $\Exp_X[Z]$ or $\Exp_Y[Z]$ to indicate with respect to which variables the expectation is taken, or equivalently $ \Exp_\mu[Z]$ to indicate with respect to which distribution the expectation is taken,
		\item For two sequences of real numbers $(A_n)_{n\ge 1}$ and $(B_n)_{n\ge 1}$, we will write $A_n \lesssim B_n$ if $A_n \le C B_n$ for some absolute constant $C>0$, and $A_n = \mathcal O(B_n)$ if there exists $n_0\in\N$ such that $|A_n|\lesssim |B_n|$ for all $n\ge n_0$. 
	\end{itemize}
	
	\section{Problem setting}
	Let $d\ge 2$ be an integer. We are given a sample of $n$ observations $(x_i,y_i)\in\mathcal X\times \mathcal Y$ where $\mathcal{X}\equiv[0,1]^d$ is the $d$-dimensional unit cube and $\mathcal{Y}\equiv\{-1,1\}$ is the set of possible labels. Each sample is assumed to be i.i.d. data points generated from a distribution $\rho$ on the probability space $(\Omega,\mathfrak{A},\mathbb P) $. We will call any measurable map $c: \mathcal X\to \mathcal Y $ a \emph{classifier}, and for any such function $c$ we define its \emph{misclassification risk} by
	\begin{equation}
		\label{eqn:misclass_risk}
		\mathcal R(c):=\mathbb{P}_{(X,Y)\sim\rho}(c(X)\ne Y) 
	\end{equation}
	
	For any function $f :\mathcal{X}\to \mathbb R$, we thus see that $\sgn f $ is always a classifier, and we will call $\sgn f $ the classifier \emph{induced} by $f$. 
	It is well known that the misclassification risk is minimized by the Bayes classifier $c^*:=\sgn\eta$ \citep{devroye2013probabilistic}, where
	\[\eta(x) := \mathbb{E}_{(X,Y)\sim\rho}[Y\mid X=x]\]
	is the so-called Bayes regression function. 
	
	We will denote by $\mathcal R^* :=\mathcal R(c^*)$ the optimal risk. As $c^*$ depends on the unknown distribution $\rho$, it is a priori not possible to achieve the optimal risk $\mathcal R^*$, hence we instead aim to learn a classifier $\widehat c_n$ from the observations $(x_1,y_1),\ldots,(x_n,y_n)$, such that the \emph{excess risk} $\mathcal R(\widehat c_n)-\mathcal R^*$ converges to zero as fast as possible when $n$ goes to infinity.	 
	
	\subsection{Empirical Risk Minimization}
	The misclassification risk \eqref{eqn:misclass_risk} being a function of $\rho$, it can't be explicitly computed and hence minimized. We instead minimize the following \emph{Empirical Risk} with square surrogate loss:
	\begin{equation}
		\label{eqn:surr_erm}
		\widehat {\mathcal{R}}_\ell(f) :=\frac 1 n \sum_{i=1}^{n} \left(f(x_i)- y_i\right)^2 
	\end{equation}
	Our choice of the square loss $\ell(f(x),y):=(f(x)-y)^2$ as a surrogate is motivated by at least three reasons :
	\begin{itemize}
		\item Empirical evidence suggests that square loss may perform just as well if not better than cross-entropy for classification tasks \citep{hui2020evaluation}. Our result thus provides some theoretical backing for this observation.
		\item \citep{hu2021understanding} prove rates of convergence under the hard-margin condition for Neural Networks classifiers in the NTK regime learned with square loss. Our work shows that their results extend outside of the NTK regime, as they correctly conjectured.
		\item Most convergence rate results for kernel-based classifiers under margin conditions also consider the square loss as a surrogate \citep{steinwart2005fast,steinwart2008support}. We thus have an analogous setting for DNNs and can meaningfully compare the two approaches.
	\end{itemize}
	
	To match what is often done in practice, we also introduce a penalty function $\mathcal{P} : \mathcal{H}\to \mathbb R_{\ge0}$ and a regularization parameter $\lambda\ge0$. This leads to the following $\lambda$\emph{-Regularized Empirical Risk Minimization} ($\lambda$-ERM) problem :
	\begin{equation}
		\label{eqn:lambda_rerm}
		\widehat{f}_\lambda :=\argmin_{f\in\mathcal H} ~ \left\{\mathcal{\widehat R}_\ell(f) + \lambda \mathcal{P}(f) \right\}.
	\end{equation}
	As stated earlier, we will set the hypothesis space $\mathcal H$ as a parametric family of functions, and the penalty $\mathcal P$ as the $\ell_p$ norm. We aim to give fast rates of convergence for the excess risk of $\sgn \widehat{f}_\lambda$, the classifier induced by $\widehat{f}_\lambda$.
	
	\subsection{Hypothesis Spaces of Parametric Functions}
	
	\subsubsection{Parametric Function Families}
	
	We start by defining the parametric families of functions we will be considering in this paper, and the associated terminology. Given integers $L,a_0,a_1,\ldots,a_L\in\N$, we call \emph{parameter vector} and denote by
	\[\boldsymbol{\theta}:=((W_1,B_1),\ldots,(W_L,B_L))\]
	a tuple of matrix-vector pairs, where $W_l\in\R^{a_{l}\times a_{l-1}}$ and $B_l\in\R^{a_l}$ are respectively referred to as \emph{weight matrices} and \emph{bias vectors}. 
	
	We call $\mathbf{a} = (a_0,a_1,\ldots,a_L)\in\N^{L+1}$ an \emph{architecture vector}, and given any such $\mathbf{a}$, we define the sets of all respectively bounded and unbounded \emph{parametrizations} as:
	\begin{equation}
		\label{eqn:parametrizations}
		\mathcal{P}_{\mathbf a,R} := \bigtimes_{l=1}^L\left([-R,R]^{a_l\times a_{l-1}}\times [-R,R]^{a_l}\right),\quad \mathcal{P}_{\mathbf a,\infty} := \bigtimes_{l=1}^L\left(\R^{a_l\times a_{l-1}}\times \R^{a_l}\right)
	\end{equation}
	where $R>0$ is a fixed \emph{parameter bound}. We then call a \textit{realization mapping} any fixed map
	\begin{equation}
		\label{eqn:realization_map}
		\mathcal{F}:\mathcal{P}_{\mathbf a,\infty} \to \mathcal{C}(\mathcal{X},\R)
	\end{equation} 
	and from then we define
	\[\mathcal{H}_{\mathcal{F},\mathbf{a},R} := \{\mathcal F(\boldsymbol{\theta})\mid \boldsymbol{\theta}\in \mathcal{P}_{\mathbf a,R}\}\]
	as the hypothesis space of functions \textit{induced by} $\mathcal{F}$, parametrized by $\mathbf{a}$ and with parameters bounded by $R$.  
	
	A quantity of interest, after having defined $\mathcal{H}_{\mathcal{F},\mathbf{a},R}$ as we did, is the \textit{sparsity} of $\mathcal{H}_{\mathcal{F},\mathbf{a},R}$. That is, the number of non-zero parameters needed to describe an arbitrary element of $\mathcal{H}_{\mathcal{F},\mathbf{a},R}$. We will denote that quantity --- which implicitly depends on $\mathcal{F}$ --- $P(\mathbf{a})$, and remark that we always have
	$P(\mathbf{a}) \le \sum_{l=1}^L a_l(a_{l-1}+1) $. We also remark that any tuple $\boldsymbol{\theta}\in \mathcal{P}_{\mathbf{a},\infty}$ can naturally be identified, up to permutation, with a vector $\widetilde{\boldsymbol{\theta}}\in \R^{P(\mathbf{a})}$, hence the name \textit{parameter vector}.
	
	This rather general and seemingly arbitrary representation for parametric families of function is motivated by hypothesis spaces of Deep Neural Networks, to which it is particularly adapted. However, this representation can be used to represent essentially any kind of parametric family of real-valued functions one might use for binary classification in practice, such as: 
	\begin{itemize}
		\item Linear classifiers, induced by maps of the form $x\mapsto W^Tx + B $. In this case, the architecture vector is simply given by $\mathbf{a}=(d,1)$ with corresponding  parametrization $\mathcal{P}_{\mathbf{a},\infty} = \R^{d}\times \R$. The associated realization mapping is defined for all $\boldsymbol{\theta} \equiv (W, B)\in \mathcal{P}_{\mathbf{a},\infty}$ by
		\[\mathcal{F}(\boldsymbol{\theta}) = \left(f:x\mapsto W^T x + B\right).\]
		\item Logistic regression, induced by maps of the form $x\mapsto 2(1+\exp(W^Tx + B))^{-1} - 1 $. For this, the architecture vector and parametrization are again given by $\mathbf{a}=(d,1)$  and $\mathcal{P}_{\mathbf{a},\infty} = \R^{d}\times \R$. The associated realization mapping is then defined for all $\boldsymbol{\theta} \equiv (W, B)\in \mathcal{P}_{\mathbf{a},\infty}$ by
		\[\mathcal{F}(\boldsymbol{\theta}) = \left(f:x\mapsto 2(1+\exp(W^Tx + B))^{-1} - 1\right).\]
		\item Kernel classifiers, induced by maps of the form $x\mapsto \sum_{i=1}^n \alpha_i K(x_i,x)$ where $K$ is a Mercer kernel defined on $\mathcal{X}\times \mathcal{X}$ and $x_1,\ldots, x_n \in \mathcal{X}$ are the training data points. The architecture and associated parametrization in this case are respectively $\mathbf{a}=(n,1)$  and $\mathcal{P}_{\mathbf{a},\infty} = \R^{n}\times \R$. The associated realization mapping is then defined for all ${\boldsymbol{\theta} \equiv (\alpha, B)\in \mathcal{P}_{\mathbf{a},\infty}}$ by
		\[\mathcal{F}(\boldsymbol{\theta}) = \left(f:x\mapsto \sum_{i=1}^n \alpha_i K(x_i,x)\right).\]
		Note that for this hypothesis space, the bias parameter is not used. Hence we have an effective number $P(\mathbf{a}) = n$ of parameters.
	\end{itemize}

	In all of the remaining text, we will slightly abuse notation and identify any $f\in \mathcal{H}_{\mathcal{F},\mathbf{a},R}$, which is defined on all of $\mathbb R^d$, with its restriction to the unit cube $\mathcal X$.

	\subsubsection{Clipping the function outputs}
	
	To study the generalization error of our hypothesis space $\mathcal{H}_{\mathcal{F},\mathbf{a}, R}$, it is necessary to ensure that the functions within have uniformly bounded supremum norm, as the complexity may grow unboundedly otherwise. A simple way to guarantee this is the following: given a \emph{clipping constant} $D>0$, we  compose all the functions in  $\mathcal{H}_{\mathcal{F},\mathbf{a}, R}$ with $\operatorname{clip}_D :\R\to\R $ defined by
	\begin{equation} \label{clip}
		\operatorname{clip}_D(x) = \begin{cases}
			D &\text{ if } x\ge D\\
			x &\text{ if } -D\le x\le D\\
			-D &\text{ if } x\le -D
		\end{cases}     
	\end{equation}

	Although the clipping operator ensures boundedness of outputs, one may worry about it negatively affecting the approximation power of the hypothesis space. The following lemma guarantees that as long as the clipping constant $D$ is chosen larger than $\|\eta\|_{L^\infty}$, the approximation error does not increase.
	\begin{lemma}
		\label{lemma:clipping_benign}
		Let $f^*\in L^{\infty}(\mathcal{X},\R)$ and $D\ge\|f^*\|_{L^{\infty}(\mathcal{X},\R)}$. For any $f\in L^{\infty}(\mathcal{X},\R)$, we have
		\[\|\operatorname{clip}_D\circ f - f^*\|_{L^{\infty}(\mathcal{X},\R)}\le \| f - f^*\|_{L^{\infty}(\mathcal{X},\R)}\]
		where $\operatorname{clip}_D$ is as defined in \eqref{clip}.
	\end{lemma}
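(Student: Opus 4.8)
The plan is to reduce the claim to a pointwise (a.e.) inequality and then exploit the fact that $\operatorname{clip}_D$ is the metric projection of $\R$ onto the interval $[-D,D]$, hence nonexpansive. First I would fix $x\in\mathcal X$ outside the $\mu$-null set on which $|f^*(x)|>D$; such a null set exists precisely because $D\ge\|f^*\|_{L^\infty(\mathcal X,\R)}$. For such $x$ we have $f^*(x)\in[-D,D]$, so $f^*(x)$ is a fixed point of $\operatorname{clip}_D$, i.e.\ $\operatorname{clip}_D(f^*(x))=f^*(x)$.

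Next I would record that $\operatorname{clip}_D:\R\to\R$ is $1$-Lipschitz. This is immediate from its piecewise-linear definition in \eqref{clip} (each of the three pieces has slope $0$ or $1$, and the function is continuous), or from the general fact that projection onto a nonempty closed convex subset of a Hilbert space is nonexpansive. Combining the two observations, for a.e.\ $x\in\mathcal X$,
\[
\bigl|\operatorname{clip}_D(f(x))-f^*(x)\bigr|=\bigl|\operatorname{clip}_D(f(x))-\operatorname{clip}_D(f^*(x))\bigr|\le |f(x)-f^*(x)|.
\]
Taking the essential supremum over $x\in\mathcal X$ on both sides yields the claimed bound. (One also notes in passing that $\operatorname{clip}_D\circ f$ is again in $L^\infty(\mathcal X,\R)$, being measurable as the composition of a continuous map with a measurable one, and bounded by $D$.)

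If one prefers to avoid invoking properties of projections, the same pointwise inequality follows from a short three-case analysis on $f(x)$ relative to $D$: if $|f(x)|\le D$ then $\operatorname{clip}_D(f(x))=f(x)$ and the inequality is an equality; if $f(x)>D$, then since $f^*(x)\le D<f(x)$ both $\operatorname{clip}_D(f(x))-f^*(x)=D-f^*(x)$ and $f(x)-f^*(x)$ are nonnegative and the former is the smaller, so $|\operatorname{clip}_D(f(x))-f^*(x)|\le|f(x)-f^*(x)|$; the case $f(x)<-D$ is symmetric. I do not expect any genuine obstacle in this proof; the only mild point requiring care is the measure-theoretic bookkeeping with essential suprema and the null set $\{|f^*|>D\}$, which is entirely routine.
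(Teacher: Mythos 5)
Your main argument is exactly the paper's proof: observe that $f^*(x)=\operatorname{clip}_D(f^*(x))$ a.e.\ because $D\ge\|f^*\|_{L^\infty}$, invoke the $1$-Lipschitz property of $\operatorname{clip}_D$ to get the pointwise bound, and take an essential supremum. The supplementary three-case argument you offer is a correct elementary alternative, but it adds nothing beyond what the Lipschitz step already gives.
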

	
	\begin{proof}
		By the assumption on $D$, we have $f^*(x)=\operatorname{clip}_D\circ f^*(x) $ for almost all $x\in\mathcal X$. Hence, by $1$-Lipschitz continuity of $\operatorname{clip}_D$, 
		\[|\operatorname{clip}_D\circ f(x) - f^*(x)|=|\operatorname{clip}_D\circ f(x) - \operatorname{clip}_D\circ f^*(x)|\le |f(x)-f^*(x)|\]
		holds for almost all $x$, and the conclusion follows by definition of the essential supremum.
	\end{proof}
	
	Likewise, it is immediate to see that for any $D>0$, $f$ and $\operatorname{clip}_D \circ f $ induce the same classifier. Since the composition with $\operatorname{clip}_D$ does not affect the number of free parameters, and $\|\eta\|_{L^\infty(\mathcal{X})}\le 1$, we will fix $D=1$ and assume in the following that all functions we consider have been composed with $\operatorname{clip}_D$, without making it explicit in the notation, which is justified thanks to Lemma \ref{lemma:clipping_benign} above.
	
	\subsubsection{\texorpdfstring{$\ell_p$}{ℓ\_p} Regularization}
	
	Lastly, we fix $0<p<\infty$ and regularize the objective \eqref{eqn:surr_erm} with an $\ell_p$ penalty term.
	
	We thus define the regularized empirical risk as
	\begin{equation}
		\label{eqn:regularized_empirical_risk}
		\widehat{\mathcal{R}}_{\ell,\lambda}(\boldsymbol{\theta}) :=\frac{1}{n} \sum_{i=1}^n (f(x_i;\boldsymbol{\theta})-y_i)^2 + \lambda|\boldsymbol{\theta}|_p^p 
	\end{equation}
	
	where, for a parameter vector $\boldsymbol{\theta}=((W_l,B_l))_{l=1}^L\in\mathcal{P}_{\mathbf{a},\infty}$, 
	\[|\boldsymbol{\theta}|_p^p := \sum_{l=1}^L |W_l|_p^p + |B_l|_p^p.\]    
	This penalty is very popular in practical applications. For $p=2$, in which case it is often referred to as weight decay, it is known to help training and improve generalization \citep{krogh1991simple}. Similarly, $p=1$ is a popular choice as it tends to promote sparse solutions, which are less expensive to store and more efficient to compute with \citep{candes2008enhancing}. Although not as common, taking $0<p<1$ also has its merits, as it can be used as a differentiable approximation of the $\ell_0$ penalty, which induces very sparse models but is not compatible with standard gradient-based optimization algorithms \citep{louizos2017learning}.
	
	For fixed $R>0$ and $\lambda\ge0$, the $\lambda$-ERM problem \eqref{eqn:lambda_rerm} thus consists in finding $\widehat{\boldsymbol{\theta}}_\lambda$ satisfying
	\begin{equation}
		\label{eqn:rerm_neural_nets}
		\widehat{\boldsymbol{\theta}}_\lambda \in \argmin_{\boldsymbol{\theta}\in\mathcal{P}_{\mathbf{a},R}}
		~ \widehat{\mathcal{R}}_{\ell,\lambda}(\boldsymbol{\theta})
	\end{equation}
	Note that the objective \eqref{eqn:rerm_neural_nets} is, for most hypothesis spaces, highly non-convex. This implies that the set of minimizers is generally not reduced to a singleton. Therefore, we will only consider the minimum norm solutions throughout this paper, i.e. we only consider
	\begin{equation}
		\label{eqn:rerm_min_norm}
		\widehat{\boldsymbol{\theta}}_\lambda \in \argmin\ \left\{|\boldsymbol{\theta}|_\infty, \text{ for } \boldsymbol{\theta}\in \argmin_{\boldsymbol{\theta}\in\mathcal{P}_{\mathbf{a},R}} \widehat{\mathcal{R}}_{\ell,\lambda}(\boldsymbol{\theta})\right\}.
	\end{equation}
	
	\subsection{Technical Assumptions}
	In this section, we present the technical assumptions under which we establish our main results.
	
	\begin{enumerate}[label=\upshape(\Roman*),ref= (\Roman*)]
		\myitem{\textbf{(A1)}} \label{assumption:1_low_noise} The Bayes regression function $\eta:x\mapsto\mathbb{E}[Y\mid X=x]$ satisfies Tsybakov's \emph{low-noise condition}: there exists a \emph{noise exponent} $q>0$ and a positive constant $C>0$ such that
		\[\mathbb{P}\left(|\eta(X)|\le\delta\right) \le C\delta^q \ \text{ for all } \delta>0.\]
	\end{enumerate}

	At times, we will also refer to Assumption \ref{assumption:1_low_noise} as the ``weak-margin' condition, using the two terms interchangeably and without particular preference. The so-called \emph{hard-margin condition}, can be thought of as a ``limit" of the low-noise condition when $q=\infty$:
	\begin{enumerate}[label=\upshape(\Roman*),ref= (\Roman*)]
		\myitem{\textbf{(A2)}} \label{assumption:2_hard_margin} The Bayes regression function $\eta:x\mapsto\mathbb{E}[Y\mid X=x]$ satisfies the \emph{hard-margin condition}: there exists $\delta>0$ such that
		\[\mathbb{P}\left(|\eta(X)|>\delta\right) = 1.\]
	\end{enumerate}
	
	\begin{figure}[ht]
		\centering
		
		\begin{subfigure}[b]{0.4\textwidth}
			\centering
			\begin{tikzpicture}[scale=1.25]
				\definecolor{classA}{rgb}{0.3, 0.6, 1.0} 
				\definecolor{classB}{rgb}{1.0, 0.4, 0.4} 
				
				\draw[->] (-1.2, 0) -- (1.2, 0); 
				\draw[->] (0, 0) -- (0, 2.5) node[right] {$\eta$}; 
				\node[below] at (-1.0, 0) {$-1$};
				\node[below] at (1.0, 0) {$+1$};
				
				\foreach \x/\height in {-1/2.0, -0.9/1.8, -0.8/1.5, -0.7/1.3, -0.6/1.0, -0.5/0.8, -0.4/0.6, -0.3/0.4, -0.2/0.3, -0.1/0.2} {
					\fill[classB] (\x, 0) rectangle (\x+0.1, \height);
				}
				
				\foreach \x/\height in {0/0.2, 0.1/0.3, 0.2/0.4, 0.3/0.6, 0.4/0.8, 0.5/1.0, 0.6/1.3, 0.7/1.5, 0.8/1.8, 0.9/2.0} {
					\fill[classA] (\x, 0) rectangle (\x+0.1, \height);
				}
			\end{tikzpicture}
			\vspace*{0.2cm}
			\caption{Weak Margin Condition}
		\end{subfigure}
		\begin{subfigure}[b]{0.4\textwidth}
			\centering
			\begin{tikzpicture}[scale=1.25]
				\definecolor{classA}{rgb}{0.3, 0.6, 1.0} 
				\definecolor{classB}{rgb}{1.0, 0.4, 0.4} 
				
				\draw[->] (-1.2, 0) -- (1.2, 0); 
				\draw[->] (0, 0) -- (0, 2.5) node[right] {$\eta$}; 
				\node[below] at (-1.0, 0) {$-1$};
				\node[below] at (1.0, 0) {$+1$};
				
				\foreach \x/\height in {-1/2.0, -0.9/1.8, -0.8/1.5, -0.7/1.3,
					-0.6/1.0, -0.5/0.8, -0.4/0.6, -0.3/0.4} {
					\fill[classB] (\x, 0) rectangle (\x+0.1, \height);
				}
				
				\foreach \x/\height in {0.2/0.4, 0.3/0.6, 0.4/0.8, 0.5/1.0,
					0.6/1.3, 0.7/1.5, 0.8/1.8, 0.9/2.0} {
					\fill[classA] (\x, 0) rectangle (\x+0.1, \height);
				}
				
				\draw[thick, dashed, black] (-0.2, 0) -- (-0.2, 2.0); 
				\draw[thick, dashed, black] (0.2, 0) -- (0.2, 2.0);  
				
				\draw[<->, thick] (-0.2, -0.15) -- (0.2, -0.15); 
				\node[below] at (0, -0.2) {$2\delta$}; 
			\end{tikzpicture}
			\caption{Hard Margin Condition}
		\end{subfigure}
		
		\caption{Visualization of margin conditions \ref{assumption:1_low_noise} and \ref{assumption:2_hard_margin} through histograms of values taken by $\eta$.}
	\end{figure}
	
	Assumption \ref{assumption:2_hard_margin} was  originally introduced in \citep{mammen1999smooth} as a characterization of classification problems for which the two classes are in some sense ``separable", and has been repeatedly shown in the literature to lead to faster rates of convergence for various hypothesis classes.
	
	Consider the regularized population risk $\mathcal{R}_{\ell,\lambda} $, which is given for all $\lambda\ge0$ and $\boldsymbol{\theta}\in\mathcal{P}_{\mathbf{a},R}$ by
	\begin{equation}
		\label{eqn:regularized_population_risk}
		\mathcal{R}_{\ell,\lambda}(\boldsymbol{\theta}) := \mathbb{E}_{(x,y)\sim\rho}\left[(f(x; \boldsymbol{\theta})-y)^2 \right] + \lambda |\boldsymbol{\theta}|_p^p,
	\end{equation}
	
	which for convenience, we also denote $\mathcal{R}_{\ell}$ whenever $\lambda=0$. We likewise denote by $\widehat{\mathcal{R}}_{\ell,n} $ the unregularized version of the empirical risk $\widehat{\mathcal{R}}_{\ell,\lambda}$ defined in \eqref{eqn:regularized_empirical_risk}, where the dependence on $n$ is made explicit in the notation. Lastly, denote by
	\begin{equation}
		\label{eqn:argmin_star}
		\argmin\!\mbox{*}\ \widehat{\mathcal{R}}_{\ell,n} := \argmin \left\{|\boldsymbol{\theta}|_\infty : \boldsymbol{\theta} \in \argmin_{\boldsymbol{\theta}\in\mathcal{P}_{\mathbf{a},\infty}} \widehat{\mathcal{R}}_{\ell,n}\right\} 
	\end{equation}
	the set of minimum-norm minimizers of $\widehat{\mathcal{R}}_{\ell,n}$, taken as a function defined on the unrestricted parameter space $\mathcal{P}_{\mathbf{a},\infty}$ \eqref{eqn:parametrizations}. We will assume the following: 
	\begin{enumerate}[label=\upshape(\Roman*),ref= (\Roman*)]
		\myitem{\textbf{(A3)}}\label{assumption:3_finite_supremum}
		The sets $\argmin_{\boldsymbol{\theta}\in\mathcal{P}_{\mathbf{a},\infty}} \mathcal{R}_{\ell}$ and $\argmin_{\boldsymbol{\theta}\in\mathcal{P}_{\mathbf{a},\infty}} \widehat{ \mathcal{R}}_{\ell,n}$ are (almost surely) not empty, and there exists a constant $R_0>0$ such that almost surely over all possible i.i.d. draws $(x_i,y_i)_{i\ge1}$ with distribution $\rho$, we have
		\begin{equation}
			\label{eqn:finite_supremum}
			\sup_{n\ge1}\ \left\{|\boldsymbol{\theta}|_\infty:\ \boldsymbol{\theta}\in\argmin\!\mbox{*}\ \widehat{\mathcal{R}}_{\ell,n}\right\} \le R_0.
		\end{equation}
	\end{enumerate} 
	Besides the requirement that minimizers exist, which is standard and often implicitly assumed when studying empirical risk minimization, Assumption \ref{assumption:3_finite_supremum} states that the minimum-norm solutions of the unregularized (i.e. $\lambda=0$) ERM problem \eqref{eqn:rerm_min_norm} almost surely do not run off to infinity as the sample size $n$ increases. Although it is intuitively expected that minimum-norm solutions do not diverge, in practice such an event could have a small, positive probability. Assumption \ref{assumption:3_finite_supremum} thus requires $\rho$ to give zero measure to ``pathological" datasets where such a thing happens. As the range of validity of this assumption is not immediately clear, we provide in Remark~\ref{rmk:bounded_examples} below simple examples of settings under which it holds.
	
	\begin{remark}[Examples of sufficient conditions for Assumption~\ref{assumption:3_finite_supremum}]
		\label{rmk:bounded_examples}
		Assumption~\ref{assumption:3_finite_supremum} holds, for example, in each of the following settings.
		\begin{enumerate}[label=\upshape(\alph*)]
			\item \textbf{Kernel classifier model with well-conditioned Gram matrix.}
			Assume $f(x;\theta)=\langle \theta,\tilde\phi(x)\rangle$ for some feature map $\tilde\phi:\mathcal X\to\mathbb R^m$, and that $\|\tilde\phi(X)\|_2\le B$ almost surely.
			Let $\tilde\Phi_n\in\mathbb R^{n\times m}$ be the design matrix with rows $\tilde\phi(x_i)^\top$, and assume that for some $c>0$,
			\[
			\lambda_{\min}\!\Big(\frac1n\tilde\Phi_n^\top \tilde\Phi_n\Big)\ge c\quad\text{almost surely for all }n\ge 1.
			\]
			Then any minimum-$\ell_\infty$ minimizer $\hat\theta_n\in \argmin\!\mbox{*}\ \widehat{\mathcal R}_{\ell,n}$ satisfies
			\[
			|\hat\theta_n|_\infty \le |\hat\theta_n|_2 \le \left|\left(\frac1n\tilde\Phi_n^\top \tilde\Phi_n\right)^{-1}\! \left(\tilde\Phi_n^T y\right) \right|_2 \le \frac{B}{c},
			\]
			so \eqref{eqn:finite_supremum} holds with $R_0=B/c$.
			
			\item \label{rmk:b_linear} \textbf{Geometric margin condition (linear case).}
			Consider $f(x;(w,b))=\langle w,x\rangle+b$ and the augmented variables $\bar x:=(x,1)$, $\bar w:=(w,b)$.
			For a sample $(x_i,y_i)_{i=1}^n$, define its geometric margin as
			\[
			\gamma_n := \max_{\|\bar w\|_2\le 1}\ \min_{1\le i\le n} y_i\langle \bar w,\bar x_i\rangle .
			\]
			Assume $\inf_{n\ge 1}\gamma_n \ge \gamma>0$ almost surely. Then for each $n$ we can pick $\bar w_n^\star$ attaining $\gamma_n$ and set $\theta_n^\dagger:=\bar w_n^\star/\gamma_n$, so that $y_i f(x_i;\theta_n^\dagger)\ge 1$ for all $i\le n$.
			With clipping at $D=1$ this yields $\widehat{\mathcal R}_{\ell,n}(\theta_n^\dagger)=0$, hence by minimality,
			$\sup\{|\theta|_\infty:\theta\in \argmin\!\mbox{*}\widehat{\mathcal R}_{\ell,n}\}\le |\theta_n^\dagger|_\infty \le |\theta_n^\dagger|_2 \le 1/\gamma$,
			and \eqref{eqn:finite_supremum} holds with $R_0=1/\gamma$.
			
			\item \label{rmk:c_nonlinear} \textbf{Geometric margin condition (feature map case).}
			Consider $f(x;(\theta,b))=\langle \theta,\phi(x)\rangle+b$ for some feature map $\phi:\mathcal X\to\mathbb R^m$, and define $\bar\phi(x):=(\phi(x),1)$ and $\bar\theta:=(\theta,b)$.
			For a sample $(x_i,y_i)_{i=1}^n$, define
			\[
			\gamma_n^{(\phi)} := \max_{\|\bar\theta\|_2\le 1}\ \min_{1\le i\le n} y_i\langle \bar\theta,\bar\phi(x_i)\rangle .
			\]
			If $\inf_{n\ge 1}\gamma_n^{(\phi)}\ge \gamma>0$ almost surely, then scaling an optimizer as above yields an interpolating parameter vector with $|\bar\theta|_2\le 1/\gamma$. Hence we once again get $|\bar\theta|_\infty\le 1/\gamma$, and \eqref{eqn:finite_supremum} holds with $R_0=1/\gamma$.
		\end{enumerate}
	\end{remark}
	
	\begin{figure}[ht]
		\centering
		
		\definecolor{classA}{rgb}{0.3, 0.6, 1.0} 
		\definecolor{classB}{rgb}{1.0, 0.4, 0.4} 
		
		\begin{subfigure}[b]{0.4\textwidth}
			\centering
			\begin{tikzpicture}[scale=1.25]
				\path[use as bounding box] (-2,-2) rectangle (2,2);
				
				\fill[classB]
				(-1.5,-1.5) --
				( 1.5,-1.5) --
				( 1.5, 0.25) --
				(-1.5,-1.25) -- cycle;
				
				\fill[classA]
				(-1.5,-0.25) --
				( 1.5, 1.25) --
				( 1.5, 1.50) --
				(-1.5, 1.50) -- cycle;
				
				\draw[thick] (-1.5,-0.75) -- (1.5,0.75);
				
				\draw[thick, dashed] (-1.5,-1.25) -- (1.5,0.25);  
				\draw[thick, dashed] (-1.5,-0.25) -- (1.5,1.25);  
				
				\draw[<->, thick] (0,0) --
				node[pos=0.5, above right, fill=white, inner sep=1pt] {\(\gamma\)}
				(-0.2,0.4);
				
			\end{tikzpicture}
			\vspace*{0.2cm}
			\caption{Linear case in input space.}
		\end{subfigure}
		\begin{subfigure}[b]{0.4\textwidth}
			\centering
			\begin{tikzpicture}[scale=1.25]
				\path[use as bounding box] (-2,-2) rectangle (2,2);
				
				\fill[classB] (0,0) circle (1.9);
				\fill[white]  (0,0) circle (1.6);
				
				\fill[classA] (0,0) circle (1.3);
				\fill[white]  (0,0) circle (1.0);
				
				\fill[classB] (0,0) circle (0.7);
				\fill[white]  (0,0) circle (0.4);
				
				\draw[<->, thick] ({0.7*cos(45)},{0.7*sin(45)}) --
				({1.0*cos(45)},{1.0*sin(45)});
				\node[above left] at ({0.9*cos(45)},{0.7*sin(45)}) {\(\gamma\)};
				
			\end{tikzpicture}
			\vspace*{0.2cm}
			\caption{Nonlinear case in feature space.}
		\end{subfigure}
		
		\caption{Illustration of geometric margin conditions corresponding to Remarks~\ref{rmk:b_linear} and \ref{rmk:c_nonlinear}.}
	\end{figure}
	
	Under Assumption \ref{assumption:3_finite_supremum} we can uniformly bound the norms of $\lambda$-ERM solutions, as well as that of approximation error minimizers:
	\begin{lemma}
		\label{lemma:finite_sup}
		Assume that \ref{assumption:3_finite_supremum} holds with upper bound $R_0>0$. The following is true:
		\begin{itemize}
			\item[i.]  \label{item:empirical_risk}Almost surely for all $n\ge1,\lambda>0$ the sets $\argmin_{\boldsymbol{\theta}\in\mathcal{P}_{\mathbf{a},\infty}}\widehat{\mathcal{R}}_{\ell,\lambda}$ are not empty, and we have the inequality
			\[\sup_{\lambda\ge 0, n\ge 1}\left\{|\boldsymbol{\theta}|_\infty : \boldsymbol{\theta} \in \argmin\!\!\mbox{*}\  \widehat{\mathcal{R}}_{\ell,\lambda} \right\} \le R_0 P(\mathbf{a})^{1/p},\]
			where $P(\mathbf{a})$ denotes the number of parameters in the architecture $\mathcal{P}_{\mathbf{a},\infty} $ and $\argmin\!\!\mbox{*}\  \widehat{\mathcal{R}}_{\ell,\lambda}$ is defined as in \eqref{eqn:argmin_star}.
			\item[ii.] 
			\label{item:population_risk} There exists $\boldsymbol{\theta}^*\in\mathcal{P}_{\mathbf{a},\infty}$ such that
			\[|\boldsymbol{\theta}^*|_\infty\le R_0 \text{ and } \boldsymbol{\theta}^*\in\argmin_{\boldsymbol{\theta}\in\mathcal{P}_{\mathbf{a},\infty}} \|\mathcal{F}(\boldsymbol{\theta}) - \eta\|_{L^2(\rho_X)}.\]
		\end{itemize}
	\end{lemma}
	
	Lemma \ref{lemma:finite_sup}, whose proof we defer to the last section, guarantees that if the parameter bound $R$ is chosen equal to $R_0 P(\mathbf{a})^{1/p}$ or greater, the hypothesis space $\mathcal{H}_{\mathcal{F},\mathbf{a}, R}$ will both contain global solutions for the $\lambda$-ERM problem \eqref{eqn:rerm_min_norm}, and global minimizers of the $L^2(\rho_X)$ approximation error.
		
	We now briefly introduce some necessary terminology.
	
	\begin{definition}[Analytic functions]
		A real-valued function $f$ defined on an open set $U\subseteq \R^k$ is said to be real analytic on $U$ if for all $u\in U$, there exists an open ball $B$ containing $u$ and real coefficients $(c_\alpha)_{\alpha\in\N_0^k} $ such that $f(x) = \sum_{\alpha\in \N_0^k} c_\alpha(x-u)^\alpha$ for all $x\in B$. An $\R^k$-valued function $f=(f_1,\ldots,f_k)^T$ is said to be analytic if each of its components is analytic.
	\end{definition}
	
	\begin{definition}[Semialgebraic sets and functions]
		A set $S\subseteq\R^k$ is said to be semialgebraic \citep{bochnak1998real} if it can be written in the form
		\[S = \bigcup_{i=1}^s\bigcap_{j=1}^t\left\{x\in\R^k:p_{ij}(x)=0, q_{ij}(x)>0\right\},\]
		where $s,t\in\N$ and $p_{ij}, q_{ij}$ are polynomial functions for all $1\le i\le s,1\le j\le t$.\hfil\break    
		We call a function $f:\R^k \to \R $ semialgebraic if its graph $\{(x,f(x)): x\in\R^k\} $ is semialgebraic.
	\end{definition}
	
	\begin{definition}[Subanalytic sets and functions]
		A set $S\subseteq\R^k$ is said to be subanalytic \citep{shiota1997geometry} if each point of $\R^k$ has a neighborhood $U$ such that $S\cap U$ is a finite union of sets of the form $\text{Im} f_1 \setminus \text{Im} f_2$, where $f_1$ and $f_2$ are analytic $\R^k$ valued proper maps defined on analytic manifolds.\hfil\break
		We call a function $f:\R^k \to \R $ subanalytic if its graph $\{(x,f(x)): x\in\R^k\} $ is subanalytic.
	\end{definition}
	
	The concept of semialgebraic function is relatively straightforward: it refers to functions whose graph can be described using finite unions and intersections of polynomial equalities and inequalities. Subanalyticity extends this concept in some sense by allowing not just polynomial, but analytic equalities and inequalities, allowing for a much richer range of function graphs \citep{shiota1997geometry,bochnak1998real}.  
	
	As it turns out, most choices of loss functions, hypothesis spaces, and regularizations will lead to subanalytic, if not semialgebraic, population and empirical risk. For DNN hypothesis spaces, \citep{zeng2019global} give sufficient conditions on the choice of loss, activation function and regularization to ensure subanalyticity of the resulting risk. Those conditions are verified for most practically used models, including, e.g., ReLU, sigmoid, GELU networks with $\ell_p$ regularization and square, hinge or cross-entropy loss \citep[Appendix B]{zeng2019global}.  
	
	The last notion we introduce is that of a \textit{limiting subdifferential}:
	
	\begin{definition}[limiting subdifferential]
		Given a function $f:\R^k\to\R$ and $x\in\R^k$, we call \emph{Fréchet subdifferential} of $f$ at $x$ and denote $\hat\partial f(x)$ the set of all vectors $v\in\R^k$ which satisfy the following:
		\[\lim_{y\ne x}\inf_{y\to x}\frac{f(y)-f(x) - v^T(y-x)}{|y-x|_2} \ge 0.\]
		The limiting subdifferential of $f$ at $x\in\R^k$, which we denote $\partial f(x)$ is then defined as \citep{rockafellar1998variational}:
		\[\partial f(x):=\left\{v\in\R^k:\exists (x_i)_{i\in\N}\to x, f(x_i)\to f(x), v_i\in \hat{\partial}f(x_i)\to v\right\}.\]
	\end{definition}
	
	The limiting subdifferential $\partial f$ generalizes the concept of subdifferential to functions with even less regularity. Just like the former, it necessarily contains zero at any extremum of $f$, and coincides with its gradient $\nabla f$ whenever $f$ is differentiable \citep{rockafellar1998variational}. 
	
	We can now introduce the following assumption:
	\begin{enumerate}[label=\upshape(\Roman*),ref= (\Roman*)]
		\myitem{\textbf{(A4)}}\label{assumption:4_kl_property}
		For any $\lambda\ge 0$, the regularized population risk $\mathcal{R}_{\ell,\lambda}:\R^{P(\mathbf{a})}\to \R$ is subanalytic, and for any $\boldsymbol{\theta}\in \R^{P(\mathbf{a})}$, $\partial \mathcal{R}_{\ell,\lambda}(\boldsymbol{\theta})$ is not empty.\hfil \break 
		Furthermore, for any $R>0$, the restriction $\mathcal{R}_{\ell,\lambda}:[-R,R]^{P(\mathbf{a})}\to \R$ is Lipschitz continuous, and we will denote by $\Lip_R(\mathcal{R}_{\ell,\lambda}) \equiv \Lip(\mathcal{R}_{\ell,\lambda}) $ its Lipschitz constant.
	\end{enumerate} 
	
	In light of the preceding discussion, we observe that Assumption \ref{assumption:4_kl_property} is very mild. Both subanalyticity and (limiting) subdifferentiability are satisfied in most practical setups, and local Lipschitz-ness is also guaranteed as long as, e.g., the parametric hypothesis space consists of compositions of locally Lipschitz mappings, which is often the case as well.  
	
	The need for Assumption \ref{assumption:4_kl_property} is motivated by the following theorem, due to \citep{bolte2007clarke}:
	
	\begin{theorem}[Adapted from Corollary 16 in {\citep{bolte2007clarke}}]
		\label{thm:kl_property}
		If $f:\R^k\to\R$ is a lower semi-continuous, globally subanalytic function, and $f(x_0)=0$, then there exist $c,\rho>0$ and $0<\kappa<1$ such that for all $x\in\{x\in\R^k:0<|f(x)|<\rho\}$, we have:
		\begin{equation}
			\label{eqn:kl_original}
			c|f(x)|^\kappa \le |x^*|_2 ,\quad \text{for all } x^*\in \partial f(x)
		\end{equation}
	\end{theorem}
	
	Functions satisfying the conclusion of Theorem \ref{thm:kl_property} are said to satisfy the Kurdyka-Lojasiewicz (KL) property, named after pioneering works of Lojasiewicz \citep{lojasiewicz1963propriete} and Kurdyka \citep{kurdyka1998gradients}. This property is fundamental in non-convex optimization, as it is a key ingredient when proving convergence guarantees for first-order methods in general settings \citep{li2018calculus}. This property has also been used in recent works to obtain convergence rates for first-order optimization in Deep Learning \citep{xu2024convergence,zeng2019global}.
	
	As a consequence of Assumption \ref{assumption:4_kl_property}, we have the following Proposition, whose proof we provide in the last section:
	\begin{proposition}
		\label{prop:well-separation}
		Assume that Assumption \ref{assumption:4_kl_property} holds. Fix $R:=R_0P(\mathbf{a})^{1/p}>0$, where $R_0$ satisfies \eqref{eqn:finite_supremum}, and let $\Lambda>0$ be arbitrary. There exist constants $K,\rho>0$, and $r> 1$, such that for all $0\le \lambda\le \Lambda$, $0<t< \rho/(2\Lip_R(\mathcal{R}_{\ell,\lambda}))$, and $\boldsymbol{\theta}_\lambda\in\argmin_{\boldsymbol{\theta}\in\mathcal{P}_{\mathbf{a},R}} \mathcal{R}_{\ell,\lambda} $
		\begin{equation}
			\label{eqn:well_separation}
			\inf_{\boldsymbol{\theta}\in\mathcal{P}_{\mathbf{a},R}:\dist(\boldsymbol{\theta}, \argmin \mathcal{R}_{\ell,\lambda})\ge t} \mathcal{R}_{\ell,\lambda}(\boldsymbol{\theta}) -  \mathcal{R}_{\ell,\lambda} (\boldsymbol{\theta}_\lambda) \ge Kt^r, 
		\end{equation}
		where $\dist(a,A)$ denotes the $\ell_\infty$ distance between a vector $a\in\R^k$ and a set $A\subseteq\R^k$.
	\end{proposition}
	
	We call the number $r>1$ in \eqref{eqn:well_separation} the \emph{separation exponent}. It quantifies how sharply the population objective $\mathcal{R}_{\ell,\lambda}$ grows as we move away from the set of minimizers: for small $t$, a larger $r$ corresponds to a flatter landscape near $\argmin \mathcal{R}_{\ell,\lambda}$, so parameters at a small positive distance from the minimizers can still have nearly optimal risk, while a smaller $r$ forces the objective to increase more quickly away from the minimizer set. Conditions similar to \eqref{eqn:well_separation} are often assumed in empirical process theory under the name \emph{well-separation} and are used to prove consistency of $M$-estimators \citep{van2000asymptotic,sen2018gentle}. In that context, the polynomial lower bound $Kt^r$ is typically replaced by an unspecified function $\psi(t)$ that is only assumed to be positive for small $t>0$. Proposition \ref{prop:well-separation} shows that, under Assumption \ref{assumption:4_kl_property}, such a well-separation condition holds with a lower bound in closed-form, which will be a key input in the error analysis that follows.
	
	\section{Main results}
	
	\subsection{A general upper bound for the excess risk of parametric classifiers}
	
	Before stating our main results, we introduce a few useful definitions. The first being that of an \emph{$\varepsilon$-covering} :
	\begin{definition}[$\varepsilon$-cover]
		\label{def:epsilon_covering}
		Let $\varepsilon>0$ and $\mathcal{G}\subseteq L^\infty(\mathcal{X},\R)$ be a family of functions. Any finite collection of functions $g_1,\ldots,g_N \in L^\infty(\mathcal{X},\R)$ with the property that for any $g$ in $\mathcal{G}$ there is an index $j\equiv j(g)$ such that
		\[\|g-g_j\|_{L^\infty}\le\varepsilon\]
		is called an $\varepsilon$-covering (or cover) of $\mathcal{G}$ with respect to $\|\cdot\|_{L^\infty}$.
	\end{definition}
	
	For a given $\varepsilon$, we can think of the cardinality of an $\varepsilon$-cover as a measure of complexity for the family $\mathcal{G}$. This motivates the definition of a \emph{covering number} :
	\begin{definition}[$\varepsilon$-covering number]
		\label{def:covering_number}
		Let $\varepsilon>0$ and $\mathcal{G}\subseteq L^\infty(\mathcal{X},\R)$. We denote by $\Cov(\mathcal{G},\|\cdot\|_{L^\infty},\varepsilon)$ the size of the smallest $\varepsilon$-cover of $\mathcal{G}$ with respect to $\|\cdot\|_{L^\infty}$, with the convention $\Cov(\mathcal{G},\|\cdot\|_{L^\infty},\varepsilon):=\infty$ when no finite cover exists. $\Cov(\mathcal{G},\|\cdot\|_{L^\infty},\varepsilon)$ will be called an $\varepsilon$-covering number of $\mathcal{G}$ with respect to $\|\cdot\|_{L^\infty}$.  
		
		For $\mathcal{G}=\mathcal{H}_{\mathcal{F},\mathbf{a},R}$, we will abbreviate and denote 
		\[\Cov(\mathcal{H}_{\mathcal{F},\mathbf{a},R},\|\cdot\|_{L^\infty},\varepsilon) =: \Cov_\infty(\mathcal{H},\varepsilon). \]
	\end{definition}
	The above two quantities, whose definitions are adapted from \citep{gyorfi2002distribution}, are ubiquitous in the learning theory literature, as they give a lot of information on the statistical properties of our estimators.  
	
	Another related measure of complexity for parametric hypothesis spaces is the Lipschitz constant of the realization map \eqref{eqn:realization_map}:
	\begin{definition}
		Given a parametrization $\mathcal{P}_{\mathbf{a},R}$, recall the definition of the realization mapping $\mathcal{F} : \mathcal{P}_{\mathbf{a},R}\to\mathcal{C}(\mathcal{X},\R)$ \eqref{eqn:realization_map}. We will denote by
		\[\Lip_R(\mathcal{F}):=\sup_{\substack{\boldsymbol{\theta},\boldsymbol{\theta'} \in \mathcal{P}_{\mathbf{a}, R}\\ \boldsymbol{\theta}\ne\boldsymbol{\theta'}}} \frac{\|\mathcal{F}(\boldsymbol{\theta}) - \mathcal{F}(\boldsymbol{\theta'})\|_{L^\infty(\mathcal{X})}}{|\boldsymbol{\theta} - \boldsymbol{\theta}'|_\infty}\]
		its Lipschitz constant.
	\end{definition}
	Intuitively, the Lipschitz constant of the realization map estimates the complexity of the induced hypothesis space in the sense that it controls how different two realizations can be given that their parametrizations are close. For this reason, the problem of estimating a Neural Network's Lipschitz constant has garnered a lot of interest over recent years \citep{fazlyab2019efficient, virmaux2018lipschitz}.
	
	We are now ready to state our main result, which gives an upper bound on the excess risk of parametric classifiers under our setting.
	
	\begin{theorem}
		\label{thm:generic_bound}
		Assume that Assumptions \ref{assumption:3_finite_supremum} and \ref{assumption:4_kl_property} hold. Fix an architecture $\mathbf{a}$ with parameter bound $R\equiv R_0 P(\mathbf{a})^{1/p}$, where $R_0>0$ satisfies \eqref{eqn:finite_supremum}, and denote
		\[\varepsilon_{\text{approx}}:=\inf_{f\in\mathcal{H}_{\mathcal{F},\mathbf{a},R}} \|f-\eta\|_{L^2(\rho_X)}\]
		the approximation error of $\mathcal{H}_{\mathcal{F},\mathbf{a},R}$. We have the following excess risk bounds:
		\begin{itemize}
			\item If the low-noise condition \ref{assumption:1_low_noise} holds, then for all $\delta>\varepsilon_{\text{approx}}$ and $0<\nu<\delta$, any minimum-norm solution $\widehat{\boldsymbol{\theta}}_\lambda$ of the $\lambda$-ERM problem \eqref{eqn:rerm_min_norm} with $0\le\lambda<(\delta-\varepsilon_{\text{approx}})^2(P(\mathbf{a})2^p R^p)^{-1}$ satisfies for all $n\ge 1:$
			\begin{align}
				\label{eqn:generic_bound_low_noise}
				\mathcal{R}(\sgn f(\cdot;\widehat{\boldsymbol{\theta}}_\lambda)) - \mathcal{R}^* &\le \varepsilon_{\text{approx}} + \sqrt{\lambda P(\mathbf{a}) 2\strut^p R\strut^p} + C\delta^q \nonumber \\
				&+ (\delta-\nu)^{-2}\left(\varepsilon_{\text{approx}} + \sqrt{\lambda P(\mathbf{a})2^p R\strut^p}\right)^2 \\
				&+ 4\Cov_\infty\left(\mathcal{H},\frac{K\nu^r}{24\Lip_{2R}(\mathcal{F})^{1+r}}\right)\exp\left(\frac{-n K^2\nu^{2r}}{288\Lip_{2R}(\mathcal{F})^{2r}}\right) \nonumber
			\end{align}   
			\item If the hard-margin condition \ref{assumption:2_hard_margin} holds with margin $\delta>0$, and $\varepsilon_{\text{approx}}<\delta$, then for all $0<\nu<\delta$, any minimum-norm solution $\widehat{\boldsymbol{\theta}}_\lambda$ of the $\lambda$-ERM problem \eqref{eqn:rerm_min_norm} with $0\le\lambda<(\delta-\varepsilon_{\text{approx}})^2(P(\mathbf{a})2^pR^p)^{-1}$ satisfies for all $n\ge 1$:   
			\begin{align}
				\label{eqn:generic_bound_hard_margin}
				\mathcal{R}(\sgn f(\cdot;\widehat{\boldsymbol{\theta}}_\lambda)) - \mathcal{R}^* &\le \varepsilon_{\text{approx}} + \sqrt{\lambda P(\mathbf{a})2\strut^p R\strut^p} \nonumber \\
				&+ (\delta-\nu)^{-2}\left(\varepsilon_{\text{approx}} + \sqrt{\lambda P(\mathbf{a})2\strut^p R\strut^p}\right)^2 \\
				&+ 4\Cov_\infty\left(\mathcal{H},\frac{K\nu^r}{24\Lip_{2R}(\mathcal{F})^{1+r}}\right)\exp\left(\frac{-n K^2\nu^{2r}}{288\Lip_{2R}(\mathcal{F})^{2r}}\right) \nonumber
			\end{align} 
		\end{itemize}    
	\end{theorem}
	
	The bounds in Theorem \ref{thm:generic_bound} may appear quite different from classical oracle inequalities encountered in the literature \citep{steinwart2008support,koltchinskii2011oracle, schmidt2020nonparametric}. Indeed, those bounds are typically obtained by bounding the error by the sum of an approximation error term and the supremum of some empirical process, which can then further be controlled using tools from empirical process theory and complexity bounds. We take a different path: instead of bounding the second summand by the supremum of an empirical process, we use an inequality (Lemma~\ref{lemma:vigogna}) which, combined with other results, allows us to control this summand by the tail probability of a certain random variable.
	
	At first glance, it is not immediately evident that Theorem \ref{thm:generic_bound} can enable improved rates of convergence for ERM-based binary classification. Indeed, although the non-exponential summands in \eqref{eqn:generic_bound_low_noise} and \eqref{eqn:generic_bound_hard_margin} can often be effectively controlled when the regression function $\eta$ belongs to an appropriate function space, the exponential term
	presents a challenge. In some cases, this term may fail to vanish as the architecture size $\mathbf{a}$ grows with the
	sample size $n$. In the next sections, however, we demonstrate that for DNN hypothesis spaces, Theorem \ref{thm:generic_bound} can indeed yield fast rates of convergence.
	
	\subsection{Complexity measure bounds for neural networks hypothesis spaces}
	
	We now focus our attention on the case where the hypothesis space $\mathcal{H}_{\mathcal{F},\mathbf{a},R}$ consists of Fully Connected Neural Networks (FCNNs). That is, we let $\sigma:\R\to\R$ be any activation function, and for a given architecture $\mathbf{a}=(a_0,\ldots,a_L)\in\N^{L+1}$, and parameter vector $\boldsymbol{\theta} = ((W_1,B_1),\ldots, (W_L,B_L))\in\mathcal P_{\mathbf{a},\infty}$, we define the affine maps
	\[T_\ell: \R^{a_{\ell-1}} \to \R^{a_\ell}, \ \ x\mapsto W_\ell x + B_\ell, \quad (1\le \ell\le L). \]
	We then define for all $\boldsymbol{\theta}\in\mathcal P_{\mathbf{a},\infty}$ the realization mapping as
	\begin{equation}
		\label{eqn:realization_dnn}
		\mathcal{F}_{NN}(\boldsymbol{\theta}) := \Big[ x\mapsto T_L\circ \sigma \circ T_{L-1}\circ \cdots \circ \sigma \circ T_1 \Big],
	\end{equation}
	where $\sigma$ is understood component-wise when applied to vectors.  
	
	As is common, we respectively refer to $|\mathbf{a}|_\infty\equiv W$ and $L$ as the \emph{width} and \emph{depth} of the neural network. For notational convenience, we will denote by $\mathcal{NN}(\mathbf{a}, W, L, R)$ the hypothesis space $\mathcal{H}_{\mathcal F_{NN},\mathbf{a}, R}$ induced by $\mathcal{F}_{NN}$. As the width and depth of a Neural Network architecture play a crucial role in bounding its complexity, it is advisable to make them clearly visible in the notation.  
	
	In order to apply the estimate given by Theorem \ref{thm:generic_bound}, we need a clear control of the complexity measures $\Lip_R(\mathcal{F}_{NN})$ and $\Cov_\infty(\mathcal{F}_{NN},\varepsilon)$ in terms of the network's parameters. To this end, we first introduce the following bound on $\Lip_R(\mathcal{F}_{NN})$, which is a generalization of a similar results for ReLU networks which was shown in \citep{berner2020analysis}:
	
	\begin{lemma}
		\label{lemma:lip_constant_fcnn}
		Assume that $\sigma:\R\to\R$ is globally Lipschitz with constant
		\[
		L_\sigma:=\sup_{x\neq y}\frac{|\sigma(x)-\sigma(y)|}{|x-y|}<\infty,
		\]
		and set $C_\sigma:=\max\{1,L_\sigma+|\sigma(0)|\}$. Then, for any architecture vector $\mathbf{a}$ with depth $L\in\N$ and width $W\in\N$, and any parameter bound $R\ge 1$, we have the upper bound on $\Lip_R(\mathcal{F}_{NN})$
		\[
		\Lip_R(\mathcal{F}_{NN})\;\le\;2\,L^2\,(C_\sigma R)^{L-1}\,W^L.
		\]
	\end{lemma}
	
	Lemma \ref{lemma:lip_constant_fcnn}, whose proof we give in the last section, shows that the Lipschitz constant of $\mathcal{F}_{NN}$ grows \emph{exponentially} with depth. As one could expect, the covering number behaves similarly:
	
	\begin{lemma}
		\label{lemma:covering_num_fcnn}
		Let $\varepsilon>0$. For any DNN architecture $\mathbf{a}$ with depth $L\in\N$ and width $W\in\N$, and any parameter bound $R>0$, we have the upper bound
		\[\Cov_\infty(\mathcal{NN},\varepsilon) \le \left(1+\frac{2R\Lip_R(\mathcal{F}_{NN})}{\varepsilon}\right)^{P(\mathbf{a})}\]
	\end{lemma}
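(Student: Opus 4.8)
The plan is to transfer a cover of the finite-dimensional parameter cube over to a cover of the function class via the realization map $\mathcal{F}_\sigma$, exploiting that $\mathcal{F}_\sigma$ is Lipschitz with a finite constant (Lemma \ref{lemma:lip_constant_fcnn}). First I would set $\delta := \varepsilon/\Lip(\mathcal{F}_\sigma)$ and, using the identification of $\mathcal{P}_{\mathbf{a},R}$ with the cube $[-R,R]^{P(\mathbf{a})}$ noted after \eqref{eqn:nn_as_vectors}, build a $\delta$-cover of this cube in the $\ell_\infty$ metric. Concretely, partitioning each coordinate interval $[-R,R]$ into $\lceil R/\delta\rceil$ subintervals of equal length $2R/\lceil R/\delta\rceil\le 2\delta$ and taking their midpoints, the resulting product grid $\Theta_\delta=\{\boldsymbol{\theta}_1,\dots,\boldsymbol{\theta}_N\}\subseteq\mathcal{P}_{\mathbf{a},R}$ has $N\le\lceil R/\delta\rceil^{P(\mathbf{a})}\le(1+R/\delta)^{P(\mathbf{a})}$ elements and has the property that every $\boldsymbol{\theta}\in\mathcal{P}_{\mathbf{a},R}$ lies within $\ell_\infty$-distance $\delta$ of some $\boldsymbol{\theta}_j$.

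Next I would push this cover forward through $\mathcal{F}_\sigma$ and argue that $\{\mathcal{F}_\sigma(\boldsymbol{\theta}_j)\}_{j=1}^N$ is an $\varepsilon$-cover of $\mathcal{NN}(\mathbf{a},W,L,R)$ with respect to $\|\cdot\|_{L^\infty}$. Indeed, any $f\in\mathcal{NN}(\mathbf{a},W,L,R)$ equals $\mathcal{F}_\sigma(\boldsymbol{\theta})$ for some $\boldsymbol{\theta}\in\mathcal{P}_{\mathbf{a},R}$; picking $\boldsymbol{\theta}_j\in\Theta_\delta$ with $|\boldsymbol{\theta}-\boldsymbol{\theta}_j|_\infty\le\delta$ and invoking the definition of the Lipschitz constant yields
\[\|f-\mathcal{F}_\sigma(\boldsymbol{\theta}_j)\|_{L^\infty(\mathcal{X})}=\|\mathcal{F}_\sigma(\boldsymbol{\theta})-\mathcal{F}_\sigma(\boldsymbol{\theta}_j)\|_{\mathcal{C}(\mathcal{X})}\le\Lip(\mathcal{F}_\sigma)\,|\boldsymbol{\theta}-\boldsymbol{\theta}_j|_\infty\le\Lip(\mathcal{F}_\sigma)\,\delta=\varepsilon,\]
where I use that the sup norm and the $L^\infty$ norm agree on $\mathcal{C}(\mathcal{X})$ since $\mathcal{X}=[0,1]^d$ is compact. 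Hence $\Cov_\infty(\mathcal{NN},\varepsilon)\le N\le(1+R/\delta)^{P(\mathbf{a})}=(1+R\Lip(\mathcal{F}_\sigma)/\varepsilon)^{P(\mathbf{a})}\le(1+2R\Lip(\mathcal{F}_\sigma)/\varepsilon)^{P(\mathbf{a})}$, which is the asserted bound. I would also remark that the hypothesis space is in fact post-composed with the $1$-Lipschitz map $\operatorname{clip}_D$; since post-composition with a $1$-Lipschitz map cannot increase $L^\infty$-distances, it cannot increase the covering number either, so the estimate is unaffected by clipping.

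There is no substantial obstacle here: the argument is a routine volumetric covering bound. The only points calling for mild care are (i) constructing the coordinate-wise cover so that its radius is exactly $\delta$ — hence phrasing it as a partition into subintervals rather than a naive equispaced grid, whose outermost centers could stray outside $[-R,R]$ — so that the Lipschitz estimate produces precisely $\varepsilon$; and (ii) noting that the constant $1+2R\Lip(\mathcal{F}_\sigma)/\varepsilon$ stated in the lemma is deliberately loose, comfortably absorbing the ceiling $\lceil R/\delta\rceil\le 1+R/\delta$ together with any further slack in the grid.
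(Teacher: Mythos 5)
Your proof is correct and follows essentially the same route as the paper's: both push a finite grid cover of the parameter cube $[-R,R]^{P(\mathbf{a})}$ through the Lipschitz realization map $\mathcal{F}_\sigma$ to obtain an $\varepsilon$-cover of the function class. Your version is in fact marginally tighter (obtaining $(1+R\Lip(\mathcal{F}_\sigma)/\varepsilon)^{P(\mathbf{a})}$ before relaxing to the stated constant) and more careful about keeping grid centers inside the cube and about the innocuousness of the post-composed clipping, but these are minor refinements of the same covering argument.
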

	
	\begin{proof}[Proof of Lemma \ref{lemma:covering_num_fcnn}]
		Let $\boldsymbol{\theta}, \boldsymbol{\theta'} \in \mathcal{P}_{\mathbf{a},R}$. Because of the inequality
		\[\|f(\cdot;\boldsymbol{\theta}) - f(\cdot;\boldsymbol{\theta'})\|_{L^\infty(\mathcal{X})} \le \Lip_R(\mathcal{F}_{NN}) |\boldsymbol{\theta} - \boldsymbol{\theta'}|_\infty,\]
		we get that $\Cov_\infty(\mathcal{NN},\varepsilon)$ is bounded by the number of $\ell_\infty$ balls of radius $\varepsilon/\Lip(\mathcal{F}_{NN}) $ needed to cover the hypercube $[-R,R]^{P(\mathbf{a})}$. It is straightforward to check that the collection of such balls centered at the points
		\[-R\vec{\mathbf 1} + \varepsilon\vec{\mathbf{k}}, \ \text{ where } \vec{\mathbf{k}} = [k_1, k_2,\ldots k_{P(\mathbf{a})}]^T, \text{ and } k_i \in \left\{0,1,\ldots,\left\lceil\frac{2R \Lip_R(\mathcal{F}_{NN}) }{\varepsilon}\right\rceil\right\},\]
		where $\vec{\mathbf 1}$ is the vector whose entries are all ones, covers $[-R,R]^{P(\mathbf{a})}$ and has $\lceil2R\Lip_R(\mathcal{F}_{NN})/\varepsilon\rceil^{P(\mathbf{a})}$ elements. The proof is thus complete.
	\end{proof}
	
	\subsection{Super fast rates with distribution-adapted smoothness: ReLU activation and beyond}
	
	Combining our generalization error bound, with the complexity measure bounds from the last section, we will first derive convergence rates for ReLU neural networks. We will then leverage a result of \citep{zhang2023deep} to extend these rates to a broad class of activation functions, which we will detail below.
	
	\subsubsection{Upper bounds for ReLU networks}
	
	In light of the estimates from the previous section, we see that the only way for Theorem \ref{thm:generic_bound} to yield any useful results is for the approximation error to decay faster than complexity increases with respect to the network's size. A rich literature on DNN approximation theory has shown that for target functions in suitable smoothness spaces, such as Sobolev, Hölder, Besov or Korobov spaces, fast rates of approximation were possible for neural networks with ReLU activation \citep{yarotsky2017error,suzuki2018adaptivity,petersen2018optimal,mao2022approximation}. In particular, we have the following result due to \citep{lu2021deep}, which provides exact approximation bounds of $s$ times continuously differentiable functions by Deep ReLU FCNNs:
	
	\begin{theorem}[Theorem 1.1 from \citep{lu2021deep}]
		\label{thm:approx_relu_fcnns}
		Let $s\in\N$ and $h\in\mathcal{C}^s(\mathcal{X})$. For any $W_0,L_0\in\N$ there exists a neural network $f(\cdot;\boldsymbol{\theta})\in\mathcal{NN}(\mathbf{a}, W, L, R)$ with width $W(\mathbf{a})=C_1(W_0+2)\log_2(8W_0)$ and depth $L(\mathbf{a})=C_2(L_0+2)\log_2(4L_0) + 2d$ such that
		\[\|f(\cdot;\boldsymbol{\theta}) -h\|_{L^\infty(\mathcal{X})}\le C_3\|h\|_{\mathcal{C}^s(\mathcal{X})}W_0^{-2s/d}L_0^{-2s/d},\]
		where $C_1=17s^{d+1}3^d d$, $C_2=18s^2$ and $C_3=85(s+1)^d8^s$.
	\end{theorem}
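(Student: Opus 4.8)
The plan is to adapt the constructive argument of \citep{lu2021deep}, which refines the grid-partition / local-Taylor / sawtooth-approximation-of-$t\mapsto t^2$ scheme of \citep{yarotsky2017error} by means of the \emph{bit-extraction} technique; I sketch it here. By homogeneity one may assume $\|h\|_{\mathcal{C}^s(\mathcal{X})}\le 1$, the general statement following by rescaling and absorbing a factor $\|h\|_{\mathcal{C}^s(\mathcal{X})}$ into $C_3$. Fix an integer resolution $K$ with $K\sim(W_0L_0)^{2/d}$ and partition $\mathcal{X}=[0,1]^d$ into $K^d$ congruent subcubes $\{Q_\beta\}_{\beta\in\{0,\dots,K-1\}^d}$ with anchor points $x_\beta:=\beta/K$. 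On each $Q_\beta$, Taylor's theorem for $\mathcal{C}^s$ functions gives $\sup_{x\in Q_\beta}\lvert h(x)-P_\beta(x)\rvert\le C(s,d)\,K^{-s}$, where $P_\beta$ is the degree-$(s-1)$ Taylor polynomial of $h$ at $x_\beta$; with the stated choice of $K$ this equals $O\!\big((W_0L_0)^{-2s/d}\big)=O\!\big(W_0^{-2s/d}L_0^{-2s/d}\big)$, which is the sole source of the claimed rate. It then suffices to build a ReLU network of the prescribed size that, at each $x$, evaluates (an approximation of) $P_{\beta(x)}(x)$, where $\beta(x)$ indexes the subcube containing $x$.

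I would assemble this network from three gadgets. First, a \emph{localization} sub-network sending $x$ to an approximation of $\beta(x)$: ReLU networks approximate the Heaviside step arbitrarily well off an arbitrarily thin slab, and taking products over the $d$ coordinates identifies $\beta(x)$ except on a \emph{trifling region} $\Gamma$ of arbitrarily small Lebesgue measure around the grid hyperplanes. Second, a \emph{coefficient-retrieval} sub-network: there are $\binom{s-1+d}{d}K^d$ quantized Taylor coefficients $D^\gamma h(x_\beta)/\gamma!$, each to precision $\sim K^{-s}$, i.e.\ $O(s\log K)$ bits; one linearizes the $K^d\sim(W_0L_0)^2$ subcube indices, splits the linear index into two halves so that the coefficients form a roughly $(W_0L_0)\times(W_0L_0)$ table, and recovers the relevant entry by a two-stage lookup — a selection step of width $O(W_0)$ and depth $O(L_0)$ followed by a bit-extraction step — and it is precisely this two-level structure that makes the effective resolution $\sim W_0L_0$ per coordinate, hence the exponent $2s/d$ rather than $s/d$. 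Third, a \emph{polynomial-evaluation} sub-network built from the sawtooth approximation of $t\mapsto t^2$ (error $2^{-2m}$ in depth $m$) together with the polarization identity $uv=\tfrac14\big((u+v)^2-(u-v)^2\big)$, which evaluates $\sum_{\gamma}c_\gamma(x-x_\beta)^\gamma$ to accuracy $O(K^{-s})$ (note $|x-x_\beta|_\infty\le 1/K$ on $Q_\beta$, so coefficient quantization errors do not amplify).

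Composing the three gadgets yields a network $\tilde f$ with $\|\tilde f-h\|_{L^\infty(\mathcal{X}\setminus\Gamma)}\lesssim K^{-s}$. To upgrade this to a genuine $L^\infty(\mathcal{X})$ bound one removes $\Gamma$ by the now-standard device of running a bounded number of copies of the construction on grids shifted by multiples of $1/(2K)$ along the coordinate axes — every point of $\mathcal{X}$ avoids the trifling region of at least one shift — and selecting an appropriate (e.g.\ median) value of the outputs; implementing these shifts and the selection is what contributes the additive $2d$ to the depth. Tallying the width contributions of the three gadgets gives $W(\mathbf{a})=C_1(W_0+2)\log_2(8W_0)$, tallying the depth contributions gives $L(\mathbf{a})=C_2(L_0+2)\log_2(4L_0)+2d$, and propagating the quantization and sawtooth errors and then rescaling by $\|h\|_{\mathcal{C}^s(\mathcal{X})}$ yields the constant $C_3$.

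I expect the main obstacle to be the bit-extraction bookkeeping: one must verify that a network of width $O(W_0)$ and depth $O(L_0\log L_0)$ can realize the two-level lookup over all $\sim(W_0L_0)^2$ subcubes, and that the finite-precision coefficients, once fed into the (itself only approximate) polynomial evaluator, still produce total error $O(K^{-s})$ — all while keeping the explicit constants $C_1$, $C_2$, $C_3$ under control, which is the genuinely delicate part of the argument.
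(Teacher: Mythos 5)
This statement is imported by citation from \citep{lu2021deep} and the paper offers no proof of it; there is therefore nothing internal to compare your sketch against. Judged on its own merits, your reconstruction is a faithful summary of the strategy actually used in that reference: local degree-$(s-1)$ Taylor expansion on a $K^d$-cell grid with $K\asymp (W_0L_0)^{2/d}$, a ReLU localization gadget off a trifling region, bit-extraction to look up the $\sim K^d$ quantized Taylor coefficients using only width $O(W_0)$ and depth $O(L_0)$ (this is precisely what upgrades Yarotsky's one-factor $W$ or $L$ dependence to the product $(W_0L_0)^{2s/d}$ rate), Yarotsky's sawtooth squaring plus polarization for polynomial evaluation, and the shift-and-vote device to remove the trifling region from the $L^\infty$ estimate. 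You also correctly flag where the genuine work lies --- the bit-extraction bookkeeping and tracking the explicit constants $C_1,C_2,C_3$ --- which your sketch does not attempt to carry out. One small caveat: attributing the additive $2d$ in the depth specifically to the shift-and-median step is plausible but not something I would state as certain without checking \citep{lu2021deep}; in their construction the $2d$ term arises in the assembly lemma that patches the trifling region, and the exact accounting is part of the "delicate" constant-tracking you yourself set aside. None of this affects the overall correctness of the outline, and it matches the cited source rather than offering an alternative route.
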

	
	\medskip
	
	We now introduce the following Assumption \ref{assumption:5_regular}, which specializes Theorem \ref{thm:approx_relu_fcnns} to the $L^2(\rho_X)$ norm, and incorporates the interaction between the regression function $\eta$ and the marginal data distribution:
	\begin{enumerate}[label=\upshape(\Roman*),ref= (\Roman*)]
		\myitem{\textbf{(A5)}} \label{assumption:5_regular} Let
		$\eta$ be the Bayes regression function. There exist a natural number $L_0\in\N$, and a smoothness parameter $s>0$, such that for any $W_0\in\N_{\ge 2}$, one can find a neural network $f(\cdot;\boldsymbol{\theta})\in\mathcal{NN}(\mathbf{a}, W, L, \infty)$ with width $W(\mathbf{a})=C_1W_0\log_2(W_0)$ and depth $L(\mathbf{a})=C_2L_0\log_2(L_0) +2d$ such that
		\[\|f(\cdot;\boldsymbol{\theta}) -\eta\|_{L^2(\rho_X)}\le C_3W_0^{-2s/d},\]
		where $C_1=ds^d$, $C_3=C_\eta \cdot d(3s)^d 8^s$ for some $C_\eta>0$ depending on $\eta$ only, and the positive quantity $C_2\equiv C_2(s)$ is such that $C_2(s)/s \to  0$ as $s\to\infty$.
	\end{enumerate}
	
	Assumption \ref{assumption:5_regular}, which we refer to as \emph{distribution-adapted smoothness}, reframes Theorem \ref{thm:approx_relu_fcnns} by shifting the metric to the $L^2(\rho_X)$ norm. While it shares some similarities with Theorem \ref{thm:approx_relu_fcnns}, the assumption
	places additional constraints on the depth parameter $L$, requiring it to grow at a slightly slower rate with respect to the smoothness parameter $s$, specifically at $o(s)$. This slower growth requirement addresses limitations identified in prior works showing that, under a uniform measure, achieving $L^2$ approximation error rates of $(WL)^{-2s/d}$ for non-linear $C^2$ functions requires a depth proportional to $s/d$ \citep{safran2017depth, voigtlaender2019approximation}. Assumption \ref{assumption:5_regular} accounts for this by considering not only the regularity of $\eta$, but also the interaction between $\eta$ and the marginal data distribution $\rho_X$. This interaction allows for slightly faster approximation rates when $\rho_{X}$ is ``adapted" to $\eta$. As this assumption is highly non-standard, we provide in Section~\ref{sec:characterization_a5} four non-trivial examples of conditions on $(\rho_{X},\eta)$ which can make \ref{assumption:5_regular} hold. A full characterization of the necessary and sufficient conditions seems challenging, and we leave it as an open problem for future research.
	
	This approximation error bound directly quantifies the width and depth required to achieve a given error level, leading to the following excess risk bounds for deep FCNN classifiers:
	
	\begin{theorem}
		\label{thm:cv_fcnn}
		Let $\sigma$ be the ReLU activation. Assume that Assumptions \ref{assumption:3_finite_supremum}, \ref{assumption:4_kl_property}, and \ref{assumption:5_regular} hold, and let $\alpha>0$ be a desired rate of convergence. For any $n\in \N$, there exists a FCNN architecture $\mathbf{a}_n$ with width $W_n$, depth $L_n$ and parameter bound $R_n$ satisfying
		\[\begin{cases}
			W_n &= \tilde{\mathcal{O}}(n^{\alpha d/2rs})\\
			L_n &= C_2 L_0\log_2 L_0 + 2d\\
			R_n &=  R_0\left[L_n(W_n^2+W_n)\right]^{1/p} = \tilde{\mathcal{O}}(n^{\alpha d/ps}),
		\end{cases}\]
		where $C_2$ and $L_0$ are given in \ref{assumption:5_regular}, $R_0$ is given in \ref{assumption:3_finite_supremum}, and $\tilde{\mathcal{O}}$ hides polylogarithmic factors, such that the following holds:
		\begin{itemize}
			\item If the low-noise condition \ref{assumption:1_low_noise} holds, and $\alpha < \left(1+ \frac{C_2B_1 + B_2}{s}\right)^{-1} $, then any minimum-norm solution $\widehat{\boldsymbol{\theta}}_\lambda$ of the $\lambda$-ERM problem \eqref{eqn:rerm_min_norm} with $\lambda = \tilde{\mathcal{O}}\left(n^{-\frac{2\alpha (s+d)}{rs}}\right)$, satisfies the excess risk bound:      
			\begin{equation}
				\label{eqn:fcnn_bound_low_noise}
				\mathcal{R}\left(\sgn f(\cdot;\widehat{\boldsymbol{\theta}}_\lambda)\right) - \mathcal{R}^* \lesssim \max\left\{n^{-\frac{\alpha }{r}}, n^{-\frac{\alpha q}{2r}}\right\},
			\end{equation}  
			where $q>0$ is the noise exponent in Assumption \ref{assumption:1_low_noise}, $r>1$ is the ``separation exponent" in \eqref{eqn:well_separation}, $B_1 = rdL_0\log_2(L_0)(2+2p)/p$, and $B_2 = 2rd\big[d(2+2p) - 1\big]/p$.
			\item If the hard-margin condition \ref{assumption:2_hard_margin} holds with margin $\delta>0$, $\alpha < \frac{s}{C_2B_1 + B_2} $, and $n> (\delta/2)^{-\alpha/r}$, then any minimum-norm solution $\widehat{\boldsymbol{\theta}}_\lambda$ of the $\lambda$-ERM problem \eqref{eqn:rerm_min_norm} with $\lambda = \tilde{\mathcal{O}}\left(n^{-\frac{2\alpha (s+d)}{rs}}\right)$ satisfies the excess risk bound:   
			\begin{equation}
				\label{eqn:fcnn_bound_hard_margin}
				\mathcal{R}\left(\sgn f(\cdot;\widehat{\boldsymbol{\theta}}_\lambda)\right) - \mathcal{R}^* \lesssim n^{-\frac{\alpha}{r}},
			\end{equation}
			where $r>1$ is the ``separation exponent" in \eqref{eqn:well_separation}, $B_1 = rdL_0\log_2(L_0)(2+2p)/p$, and $B_2 = 2rd\big[d(2+2p) - 1\big]/p$.  
		\end{itemize}    
	\end{theorem}
	
	As the smoothness parameter $s$ increases to infinity, we get, in the limit $\alpha\to \left(1+ \frac{C_2B_1 + B_2}{s}\right)^{-1}$, a convergence rate of $\mathcal{O}\left(n^{-\frac{\min\{1, q/2\}}{r}}\right)$ under the low-noise condition \ref{assumption:1_low_noise}. Since $r>1$, the bound we get is thus slightly worse than the minimax optimal ``fast-rate" of $\mathcal{O}\left(n^{-1}\right)$ which typically holds under Assumption \ref{assumption:1_low_noise} when $q\to\infty$ \citep{kim2018fast,audibert2007fast}.  
	
	On the other hand, under the hard-margin Assumption \ref{assumption:2_hard_margin}, we find that the exponent $\alpha/r$ grows unboundedly as the approximation speed $s$ goes to $\infty$. Theorem \ref{thm:cv_fcnn} thus shows how deep FCNNs can leverage the hard-margin condition \ref{assumption:2_hard_margin} together with higher regularity to achieve potentially arbitrarily fast rates of convergence for the excess risk. A result which, to the best of our knowledge, is the first of its kind for this hypothesis space.
	
	\subsubsection{Extension to diverse activation functions}
	
	We will now show how to transfer Theorem~\ref{thm:cv_fcnn} from ReLU to other activation functions. To that end, we define the class $\mathscr A$ of admissible activations, as originally introduced in \citep{zhang2023deep}.
	
	\begin{definition}[Activation class $\mathscr A$]
		\label{def:activation_class_A}
		Recall that $\sigma:\mathbb R\to\mathbb R$ denotes an activation function defined on the real line. We define $\mathscr A := \mathscr A_1 \cup \mathscr A_2 \cup \mathscr A_3$, where:
		
		\begin{itemize}
			\item We say $\sigma\in\mathscr A_1$ if there exist
			an open interval $I=(a_0,b_0)$ and a point $x_0\in I$ such that, for some integer $k\ge 0$,
			$\sigma\in C^k(I)$, $\sigma\notin C^{k+1}(I)$, and the $k$-th derivative has a jump at $x_0$ in the sense that
			$\sigma^{(k)}(x_0^-)$ and $\sigma^{(k)}(x_0^+)$ exist and $\sigma^{(k)}(x_0^-)\neq \sigma^{(k)}(x_0^+)$.
			Examples include ReLU and LeakyReLU (with $k=0$), and $(\mathrm{ReLU})^m$ (with $k=m-1$).
			
			\item We say $\sigma\in\mathscr A_2$ if there exist $b_0,b_1\in\mathbb R$
			and a bounded function $h:\mathbb R\to\mathbb R$ with finite limits
			$\lim_{x\to-\infty} h(x)$ and $\lim_{x\to\infty} h(x)$ that are not equal, such that
			$\sigma(x) = (x+b_0)h(x)+b_1$ for all $x\in\mathbb R$.
			Examples include Softplus, ELU/CELU/SELU, GELU, SiLU/Swish, Mish.
			
			\item We say $\sigma\in\mathscr A_3$ if $\sigma$ is bounded, twice differentiable,
			has finite unequal limits at $\pm\infty$, and satisfies $\sigma''(x_0)\neq 0$ for some $x_0\in\mathbb R$.
			Examples include sigmoid, tanh, arctan, and softsign.
		\end{itemize}
	\end{definition}
	
	The class $\mathscr{A}$ contains essentially all standard activations used in practice, including their variants and combinations according to the properties discussed in \citep{zhang2023deep}. We can now state the result we will need to extend Theorem \ref{thm:cv_fcnn} to activations in $\mathscr{A}$:
	
	\begin{theorem}[Adapted from \citep{zhang2023deep}, Theorem 1]
		\label{thm:act_transfer}
		Let $\sigma\in\mathscr A$, and respectively denote by $\mathcal F_{NN}^{\sigma}$ and $\mathcal F_{NN}^{\mathrm{ReLU}}$ the realization maps in \eqref{eqn:realization_dnn} with activation $\sigma$ and $\mathrm{ReLU}$. Fix $\mathbf a=(a_0,\ldots,a_L)\in\N^{L+1}$ and $\boldsymbol{\theta}\in\mathcal P_{\mathbf a,\infty}$, and set $f=\mathcal F_{NN}^{\mathrm{ReLU}}(\boldsymbol{\theta})$. Then for every $\varepsilon>0$ there exist $\mathbf b=(b_0,\ldots,b_{2L})\in\N^{2L+1}$ with $|\mathbf b|_\infty\le 3|\mathbf a|_\infty$ and $\boldsymbol{\vartheta}\in\mathcal P_{\mathbf b,\infty}$ such that, with $g=\mathcal F_{NN}^{\sigma}(\boldsymbol{\vartheta})$,
		\[
		\|f-g\|_{L^\infty(\mathcal X)}<\varepsilon.
		\]
	\end{theorem}
	
	Theorem~\ref{thm:act_transfer} provides an activation-transfer principle: ReLU neural networks can be uniformly approximated on $\mathcal X$ by any network with activation $\sigma\in\mathscr A$, with only constant-factor changes in network size. Since this approximation result does not bound the weights of the approximating network, those weights may depend on $\varepsilon$. Under Assumption~\ref{assumption:3_finite_supremum} however, this dependence becomes irrelevant. This yields the following corollary, extending Theorem~\ref{thm:cv_fcnn} to all activations in $\mathscr A$ which are globally Lipschitz:
	
	\begin{corollary}[Extension of Theorem \ref{thm:cv_fcnn} to activations in $\mathscr{A}$]
		\label{cor:cv_fcnn_activation_transfer}
		Let $\sigma\in\mathscr A$ be globally Lipschitz. Assume that Assumptions \ref{assumption:3_finite_supremum}, \ref{assumption:4_kl_property}, and \ref{assumption:5_regular} hold. Then Theorem~\ref{thm:cv_fcnn} holds with the ReLU activation replaced by $\sigma$. In particular, the excess-risk bounds \eqref{eqn:fcnn_bound_low_noise}--\eqref{eqn:fcnn_bound_hard_margin} hold with the same powers of $n$, up to changes in the implicit constants.
	\end{corollary}
	
	\subsubsection{Lower bounds}
	
	A natural question which arises from Theorem \ref{thm:cv_fcnn}, is whether the obtained rates of convergence can be improved further, in the minimax sense. Although we've seen that the rates obtained under the low-noise condition \ref{assumption:1_low_noise} are slightly suboptimal, one may still wonder whether those obtained under Assumption \ref{assumption:2_hard_margin} can be improved further. The following Theorem gives a negative answer to this question.
	
	\begin{theorem}
		\label{thm:lower_bound} Let $s>0$ be fixed, and $C_2 \equiv C_2(s), B_1, B_2 > 0$ and $r>1$ all be as in Theorem \ref{thm:cv_fcnn}. For $q, \delta>0$, denote respectively by $\mathcal{P}^{(1)}_{s,q}$ and $\mathcal{P}^{(2)}_{s,\delta}$ the sets of probability distributions on $\mathcal{X}\times \{-1,1\}$ such that both Assumption \ref{assumption:5_regular} and Assumption \ref{assumption:1_low_noise} with exponent $q$ (respectively Assumption \ref{assumption:2_hard_margin} with margin $\delta$) hold. We have the following lower bounds:
		\begin{itemize}
			\item There exists a constant $\kappa_1>0$ such that for any $n\in \N$, and any classifier\\ 
			$\hat{c}_n : \left(\mathcal{X}\times\{-1,1\}\right)^n \to \mathcal{M}(\mathcal{X},\{-1,1\})$, we have
			\begin{equation}
				\label{eqn:lower_bound_1}
				\sup_{\Prob \in \mathcal{P}^{(1)}_{s,q}} \left\{\Exp_{\mathbb{P}^{\otimes n}} \left[\mathcal{R}_\Prob (\hat{c}_n) - \mathcal R^*\right]\right\} \ge \begin{cases}
					\kappa_1 n^{- \frac{2q+2}{r(d+q-2)}} &\text{ if } 0<q\le 1,\\
					\kappa_1 n^{- \frac{s}{r(s + C_2B_1 + B_2)}} &\text{ if } q> 1.
				\end{cases}
			\end{equation}
			\item There exists a constant $\kappa_2>0$ such that for any $n\in \N$, and any classifier \\
			$\hat{c}_n : \left(\mathcal{X}\times\{-1,1\}\right)^n \to \mathcal{M}(\mathcal{X},\{-1,1\})$, we have
			\begin{equation}
				\label{eqn:lower_bound_2}
				\sup_{\Prob \in \mathcal{P}^{(2)}_{s,\delta}} \left\{\Exp_{\mathbb{P}^{\otimes n}} \left[\mathcal{R}_\Prob (\hat{c}_n) - \mathcal R^*\right]\right\} \ge \kappa_2 n^{- \frac{s}{r(C_2B_1 + B_2)}}.
			\end{equation}
		\end{itemize}
	\end{theorem}
	
	We make some remarks regarding Theorem \ref{thm:lower_bound}. First, we can see that for Assumption \ref{assumption:1_low_noise} in the regime $0< q\le 1$, the excess risk bound of $\mathcal{O}\left(n^{- \frac{qs}{2r(s + C_2B_1 + B_2)}}\right)$ provided by Theorem \ref{thm:cv_fcnn} is quite loose, as $C_2B_1 + B_2 \ge 4d^2$. Likewise, in the regime $1<q<2$, we fall short of the minimax optimal lower bound by a factor of $q/2$. However, for both the $q\ge2$ regime and the case where Assumption \ref{assumption:2_hard_margin} holds, we recover exactly the same rates as in Theorem \ref{thm:cv_fcnn}. We still need to highlight however that, formally, the best rates in Theorem \ref{thm:cv_fcnn} can not be exactly attained, but can be approached arbitrarily close, hence the minimum norm DNN classifiers can not be said to be minimax optimal in a strict sense, but we can call them \textit{essentially} minimax optimal, in the sense that they can achieve rates arbitrarily close to the optimal one.
	
	\subsection{Super fast rates under standard smoothness: tanh and sigmoid activations}
	
	\subsubsection{Upper bounds}
	
	A limitation of Theorem \ref{thm:cv_fcnn} is the reliance on Assumption \ref{assumption:5_regular} whose complete characterization is challenging. In this section, we show that Assumption \ref{assumption:5_regular} can in fact be replaced by standard $C^s(\mathcal X)$ smoothness and lead to the exact same results in the case where the activation function $\sigma$ is chosen as $\tanh$ or sigmoid. To establish this, we will need the following theorem due to \citep{de2021approximation}:
	
	\begin{theorem}[Adapted from \citep{de2021approximation}, Theorem 5.1]
		\label{thm:approx_tanh}
		Let $s \in \mathbb{N}$ and $h\in\mathcal{C}^s(\mathcal X)$. For any $W_0\in\N$ such that $W_0\ge (3d/2)^{1/d}$, there exists a neural network $f(\cdot;\boldsymbol{\theta})\in\mathcal{NN}(\mathbf{a},W,L,R)$ with activation $\sigma\equiv\tanh$, width $W(\mathbf{a}) = C_1 W_0$, and depth $L(\mathbf{a})=2$ such that
		\[\|f(\cdot;\boldsymbol{\theta}) - h\|_{L^{\infty}(\mathcal{X})} \le C_3\|h\|_{\mathcal{C}^s(\mathcal{X})}W_0^{-2s/d},\]
		where $C_1 = 3d5^{d+1}$ and $C_3 = 3(3d/2)^s$.
	\end{theorem}
	
	The remarkable improvement of Theorem \ref{thm:approx_tanh} over the equivalent Theorem~\ref{thm:approx_relu_fcnns} for ReLU networks is that the same rates of approximation can be achieved with \emph{constant depth}. Note also that, since the logistic sigmoid $\sigma:x\mapsto (e^x - e^{-x})/(e^{x}+e^{-x})$ is equal to the hyperbolic tangent up to an affine transformation, Theorem~\ref{thm:approx_tanh} applies to it in the exact same manner. We therefore deduce the following theorem, which extends Theorem~\ref{thm:cv_fcnn} to the case where the regression function $\eta$ is assumed to be $\mathcal{C}^s$-smooth:
	
	\begin{theorem}
		\label{thm:cv_tanh}
		Let $\sigma$ be either the tanh or logistic sigmoid activation. Assume that Assumptions \ref{assumption:3_finite_supremum} and \ref{assumption:4_kl_property} hold, and assume that the regression function $\eta$ is in $\mathcal{C}^s(\mathcal{X})$. Let $\alpha>0$ be a desired rate of convergence. For any $n\in \N$, there exists a FCNN architecture $\mathbf{a}_n$ with width $W_n$, depth $L_n$ and parameter bound $R_n$ satisfying
		\[\begin{cases}
			W_n &= \tilde{\mathcal{O}}(n^{\alpha d/2rs})\\
			L_n &= 2\\
			R_n &=  R_0\left[L_n(W_n^2+W_n)\right]^{1/p} = \tilde{\mathcal{O}}(n^{\alpha d/ps}),
		\end{cases}\]
		where $R_0$ is given in \ref{assumption:3_finite_supremum}, and $\tilde{\mathcal{O}}$ hides polylogarithmic factors, such that the following holds:
		\begin{itemize}
			\item If the low-noise condition \ref{assumption:1_low_noise} holds, and $\alpha < \left(1+ \frac{ B_2}{s}\right)^{-1} $, then any minimum-norm solution $\widehat{\boldsymbol{\theta}}_\lambda$ of the $\lambda$-ERM problem \eqref{eqn:rerm_min_norm} with $\lambda = \tilde{\mathcal{O}}\left(n^{-\frac{2\alpha (s+d)}{rs}}\right)$, satisfies the excess risk bound:      
			\begin{equation}
				\label{eqn:tanh_bound_low_noise}
				\mathcal{R}\left(\sgn f(\cdot;\widehat{\boldsymbol{\theta}}_\lambda)\right) - \mathcal{R}^* \lesssim \max\left\{n^{-\frac{\alpha }{r}}, n^{-\frac{\alpha q}{2r}}\right\},
			\end{equation}  
			where $q>0$ is the noise exponent in Assumption \ref{assumption:1_low_noise}, $r>1$ is the ``separation exponent" in \eqref{eqn:well_separation}, and $B_2 = 2rd\big[d(2+2p) - 1\big]/p$.
			\item If the hard-margin condition \ref{assumption:2_hard_margin} holds with margin $\delta>0$, $\alpha < s/B_2 $, and $n> (\delta/2)^{-\alpha/r}$, then any minimum-norm solution $\widehat{\boldsymbol{\theta}}_\lambda$ of the $\lambda$-ERM problem \eqref{eqn:rerm_min_norm} with $\lambda = \tilde{\mathcal{O}}\left(n^{-\frac{2\alpha (s+d)}{rs}}\right)$ satisfies the excess risk bound:   
			\begin{equation}
				\label{eqn:tanh_bound_hard_margin}
				\mathcal{R}\left(\sgn f(\cdot;\widehat{\boldsymbol{\theta}}_\lambda)\right) - \mathcal{R}^* \lesssim n^{-\frac{\alpha}{r}},
			\end{equation}
			where $r>1$ is the ``separation exponent" in \eqref{eqn:well_separation}, and $B_2 = 2rd\big[d(2+2p) - 1\big]/p$.  
		\end{itemize}
	\end{theorem} 
	
	\subsubsection{Lower bounds}
	
	We now briefly discuss the optimality of the convergence rates given by Theorem~\ref{thm:cv_tanh} in the minimax sense. To that end, we recall the relevant minimax lower bound for binary classifiers under weak margin condition \ref{assumption:1_low_noise}, due to \citep{audibert2007fast}:
	\begin{theorem}[Adapted from \citep{audibert2007fast}, Theorem 4.1]
		\label{thm:lower_bound_smooth} Let $s\in\N$ be fixed. Denote by $\mathcal{P}_{s,q}$ the set of probability distributions on $\mathcal{X}\times \{-1,1\}$ such that $\eta\in \mathcal C^s(\mathcal X)$ and the weak margin Assumption \ref{assumption:1_low_noise} holds with exponent $q\in[0,\infty]$. There exists a constant $\kappa>0$ such that for any $n\in \N$, and any classifier $\hat{c}_n : \left(\mathcal{X}\times\{-1,1\}\right)^n \to \mathcal{M}(\mathcal{X},\{-1,1\})$, we have
			\begin{equation}
				\label{eqn:lower_bound_smooth}
				\sup_{\Prob \in \mathcal{P}_{s,q}} \left\{\Exp_{\mathbb{P}^{\otimes n}} \left[\mathcal{R}_\Prob (\hat{c}_n) - \mathcal R^*\right]\right\} \ge \kappa n^{- \frac{s(q+1)}{s(q+2) + d}}.
			\end{equation}
	\end{theorem}
	
	Comparing the rate \eqref{eqn:tanh_bound_low_noise} we've established with the lower bound \eqref{eqn:lower_bound_smooth}, we see that our bound, although comparable, is looser than the minimax optimal one, for all values of $q$. For the hard-margin case \ref{assumption:2_hard_margin} however, the situation is much more dramatic. Indeed, various authors, including \citep{koltchinskii2005exponential, audibert2007fast, smale2007learning} have shown that \textit{exponential} convergence rates of the form $\mathcal{O}(e^{-\alpha n})$ were possible when combining $\mathcal{C}^s$-smoothness with \ref{assumption:2_hard_margin}. Our polynomial rates \eqref{eqn:tanh_bound_hard_margin} in this same setting are thus far weaker, and it is an interesting question to check whether such exponential rates can genuinely be attained for fully connected neural networks under the hard-margin condition. 
	
	\subsection{A case of exponential convergence rate: well-specified teacher-student learning}
	
	In light of the preceding discussion, we are compelled to investigate the existence of settings in which FCNNs classifiers can attain exponential rates. The takeaway message from Theorem \ref{thm:cv_fcnn} is that whenever the regression function $\eta$ lies in a suitable space, such that it can be approximated by FCNNs whose size grows slowly, the margin conditions \ref{assumption:1_low_noise} and \ref{assumption:2_hard_margin} can lead to fast rates for the excess risk. Taking this idea a step further, we look in this subsection at what happens when the regression function $\eta$ is \emph{exactly representable} by our hypothesis space of FCNNs. Our starting point is the following Lemma:
	
	\begin{lemma}
		\label{lemma:teacher_student}
		Let $R^*>0,L^*\in\mathbb N$ be fixed, and $\mathbf a^*\in\mathbb N^{L^*+1}$ be an arbitrary FCNN architecture. For any parametrization $\boldsymbol{\theta}^*\in \mathcal P_{\mathbf{a}^*,R^*}$, there exists a distribution $\rho_{\boldsymbol{\theta}^*}$ on $\mathcal{X}\times\mathcal{Y}$ such that
		\[\mathbb E_{(X,Y)\sim\rho_{\boldsymbol{\theta}^*}}[Y\mid X= x] = f(x;\boldsymbol{\theta}^*),\quad \text{for } \rho_{X} \text{-a.e. } x\in\mathcal X,\]
		where $f(\cdot;\boldsymbol{\theta}^*):\mathcal{X}\to[-1,1]$ is the function realized by $\boldsymbol{\theta}^*$.
	\end{lemma}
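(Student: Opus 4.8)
The plan is to construct $\rho_{\boldsymbol{\theta}^*}$ by hand as a joint law on $\mathcal{X}\times\mathcal{Y}$: pick an arbitrary marginal on $\mathcal{X}$, then attach a conditional law of the label $Y$ given $X=x$ engineered so that its conditional mean equals $f(x;\boldsymbol{\theta}^*)$. The only structural fact this uses is that, under the clipping convention $D=1$ fixed earlier in the paper, $f(\cdot;\boldsymbol{\theta}^*)$ is the composition of a ReLU network with $\operatorname{clip}_1$, hence is continuous (so Borel measurable) on $\mathcal{X}$ and takes values in $[-1,1]$ — a fact already recorded in the statement of the lemma itself.

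Concretely, first fix any Borel probability measure $\rho_X$ on $\mathcal{X}$ — say the uniform measure on $[0,1]^d$; the choice is immaterial. For each $x\in\mathcal{X}$ put
\[
\rho_{Y\mid X=x}(\{1\}) := \tfrac12\bigl(1+f(x;\boldsymbol{\theta}^*)\bigr),\qquad
\rho_{Y\mid X=x}(\{-1\}) := \tfrac12\bigl(1-f(x;\boldsymbol{\theta}^*)\bigr).
\]
Since $f(x;\boldsymbol{\theta}^*)\in[-1,1]$, both numbers lie in $[0,1]$ and sum to $1$, so this is a probability measure on $\mathcal{Y}$; and since $x\mapsto f(x;\boldsymbol{\theta}^*)$ is measurable, $(x,B)\mapsto\rho_{Y\mid X=x}(B)$ is a Markov kernel from $\mathcal{X}$ to $\mathcal{Y}$. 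Then define $\rho_{\boldsymbol{\theta}^*}$ on $\mathcal{X}\times\mathcal{Y}$ by integrating this kernel against $\rho_X$:
\[
\rho_{\boldsymbol{\theta}^*}(A) := \int_{\mathcal{X}}\,\sum_{y\in\{-1,1\}}\ind_A(x,y)\,\rho_{Y\mid X=x}(\{y\})\,d\rho_X(x),
\]
a Borel probability measure with marginal $\rho_X$ on $\mathcal{X}$ whose regular conditional distribution of $Y$ given $X=x$ is $\rho_{Y\mid X=x}$ for $\rho_X$-a.e.\ $x$.

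It then only remains to compute the conditional mean: for $\rho_X$-a.e.\ $x$,
\[
\mathbb{E}_{(X,Y)\sim\rho_{\boldsymbol{\theta}^*}}[Y\mid X=x]
= 1\cdot\tfrac12\bigl(1+f(x;\boldsymbol{\theta}^*)\bigr)+(-1)\cdot\tfrac12\bigl(1-f(x;\boldsymbol{\theta}^*)\bigr)
= f(x;\boldsymbol{\theta}^*),
\]
which is the assertion. There is essentially no hard step: the lemma is a soft ``inverse'' construction, and the one point that must not be glossed over is the appeal to the range bound $\|f(\cdot;\boldsymbol{\theta}^*)\|_{L^\infty(\mathcal{X})}\le 1$ coming from the clipping convention — without it the putative conditional probabilities above could leave $[0,1]$ and the construction would break. (If one prefers not to invoke clipping here, replace $f$ by $\operatorname{clip}_1\circ f$ throughout and argue verbatim.) Measurability of the kernel, which is all that is needed to legitimize the product-measure construction, follows at once from continuity of ReLU network realizations.
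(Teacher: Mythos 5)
Your construction is correct and is essentially the same as the paper's: the paper realizes your conditional kernel by drawing an independent $U\sim\mathrm{Uniform}([-1,1])$ and setting $Y=\mathbbm{1}[U\le f(X;\boldsymbol{\theta}^*)]-\mathbbm{1}[U>f(X;\boldsymbol{\theta}^*)]$, which yields precisely $\rho_{Y\mid X=x}(\{1\})=\tfrac12(1+f(x;\boldsymbol{\theta}^*))$. The only difference is presentational (explicit auxiliary randomization versus writing the Markov kernel directly), and you are right to flag the role of the range bound $f\in[-1,1]$, which is exactly what makes the kernel well defined.
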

	
	\begin{proof}
		Let $X\sim \rho_{X}$ and $U\sim \text{Uniform}([-1,1])$ be two independent random variables on the same probability space, and define:
		\[Y := \mathbbm{1}[U \leq f(X;\boldsymbol{\theta}^*)] - \mathbbm{1}[U > f(X;\boldsymbol{\theta}^*)] = \begin{cases} 
			1, &\text{ if } U \leq f(X;\boldsymbol{\theta}^*), \\ 
			-1, &\text{ if } U > f(X;\boldsymbol{\theta}^*). \end{cases}\]
		Now let $\rho_{\boldsymbol{\theta}^*} $ be the joint distribution of $(X,Y)$: we then have that for $\rho_{X}$-almost every $x\in\mathcal X$, 
		\begin{align*}
			\mathbb{E}[Y \mid X = x]
			&=\mathbb{E}[ \mathbbm{1}[U \leq f(x;\boldsymbol{\theta}^*)] - \mathbbm{1}[U > f(x;\boldsymbol{\theta}^*)] ] \\
			&= \mathbb{P}[U \leq f(x;\boldsymbol{\theta}^*)] - \mathbb{P}[U > f(x;\boldsymbol{\theta}^*)] \\
			&= \frac{1}{2}(1+f(x;\boldsymbol{\theta}^*)) - \frac{1}{2}(1-f(x;\boldsymbol{\theta}^*)) \\
			&= f(x;\boldsymbol{\theta}^*).\end{align*}
	\end{proof}
	
	From Lemma \ref{lemma:teacher_student}, it follows that any function realized by a DNN can serve as the Bayes regression function $\eta(x)=\mathbb E[Y\mid X=x]$ of a carefully constructed classification problem. Specifically, given a target DNN $f(\cdot;\boldsymbol{\theta}^*)$ with fixed architecture and parameters, we can construct a data distribution $\rho_{\boldsymbol{\theta}^*}$ on $\mathcal{X}\times \mathcal Y$ such that the associated Bayes regression function $\eta$ is exactly $f(\cdot;\boldsymbol{\theta}^*)$. This observation can be thought of as a formalization of the \emph{knowledge distillation} framework, which consists in training Neural Networks of small size to solve problems at which bigger Neural Networks are very successful with comparable performance. This approach, also known as the \emph{teacher-student setting}, is typically implemented by training a smaller (``student") network to predict the outputs of a larger (``teacher") network, and has shown to be very successful in practice \citep{hinton2015distilling, xu2023teacher}.   
	
	Previous work has characterized the expressivity of deep ReLU fully-connected neural networks (FCNNs). Given an architecture $\mathbf a$ with input dimension $d$, width $W$, and depth $L$, the number of linear regions it can induce ranges from $\mathcal O(1)$ to $\mathcal O\bigl(W^{dL}\bigr)$ \citep{montufar2014number,serra2018bounding}. This exponential gap suggests that a large network with width $W$ and depth $L$ can, in principle, be represented by a much smaller network with width $W'\ll W$ and depth $L'\ll L$. Such observations help explain the practical success of knowledge distillation and lend support to the \emph{lottery ticket hypothesis}, which posits that large networks contain small subnetworks capable of comparable generalization performance \citep{frankle2019lottery}.
	
	Building on this understanding, we now examine the learning rates achievable in our idealized ``teacher-student" setting, where the ``teacher" network is realizable by the ``student" network. In this ideal scenario, we can show that the hard-margin condition \ref{assumption:2_hard_margin} does lead to an exponential convergence rate of the excess risk, as formalized in the following theorem:
	
	\begin{theorem}
		\label{thm:cv_ideal}
		Let $f(\cdot;\boldsymbol{\theta}^*)\in\mathcal{NN}(\mathbf{a}^*, W^*, L^*, R^*)$ be a ``teacher" neural network, and let $\rho_{\boldsymbol{\theta}^*}$ be a distribution on $\mathcal{X}\times\mathcal{Y}$ such that the conclusion of Lemma \ref{lemma:teacher_student} holds. Suppose that the following conditions are satisfied for $\rho_{\boldsymbol{\theta}^*}$:
		\begin{enumerate}
			\item Assumptions \ref{assumption:3_finite_supremum} and \ref{assumption:4_kl_property} hold.
			\item\label{itm:realizable} For some $W_0 < W^*$, $L_0<L^*$, $\mathbf{a}_0\in\N^{L_0+1}$, and $R_0>0$, we have $f(\cdot;\boldsymbol{\theta}^*)\in\mathcal{NN}(\mathbf{a}_0, W_0, L_0, R_0)$.
			\item The hard-margin condition \ref{assumption:2_hard_margin} holds with margin $\delta>0$.
		\end{enumerate}
		Then, for any minimum-norm solution $\widehat{\boldsymbol{\theta}}_\lambda$ of the $\lambda$-regularized ERM problem \eqref{eqn:rerm_min_norm} over the hypothesis space $\mathcal{NN}(\mathbf{a}_0,W_0,L_0,R_0)$ with i.i.d. data $\big((x_i,y_i)\big)_{1\le i\le n}$ sampled from $\rho_{\boldsymbol{\theta}^*}$, and with $0\le \lambda \le \frac{ \exp\left(-2n \beta_1\right)}{P(\mathbf{a}_0)R_0^p}$, the excess risk satisfies for all $n\ge 1$:
		\[ \mathcal{R}\left(\sgn f(\cdot;\widehat{\boldsymbol{\theta}}_\lambda)\right) - \mathcal{R}^* \le\beta_2\exp\left(-n \beta_1 \right)  + \frac{4}{\delta^2}\exp\left(-2n \beta_1\right), \]
		where 
		\[	\beta_1 =\frac{ K^2(2^{-L_0}\delta)^{2r}}{288\Lip_{2R_0}(\mathcal{F}_{NN})^{2r}},\quad \beta_2 = 1 + 4\Cov_\infty\left(\mathcal{NN},\frac{K(2^{-L_0}\delta)^r}{24\Lip_{2R_0}(\mathcal{F}_{NN})^{1+r}}\right)\]
		are constants which do not depend on $n$.
	\end{theorem}
	
	\begin{proof}
		We have that the Bayes regression function $\eta$ is given by $f(\cdot; \boldsymbol{\theta}^*)\in \mathcal{NN}(\mathbf{a}_0,W_0,L_0,R_0)$. This means that when applying Theorem \ref{thm:generic_bound} for the hypothesis space $\mathcal{NN}(\mathbf{a}_0,W_0,L_0,R_0)$, we get $\varepsilon_{\text{approx}}=0$, while $R_0, L_0, P(\mathbf{a}_0), \Lip_{2R_0}(\mathcal{F}_{NN})$ are all independent of $n$. Applying Theorem \ref{thm:generic_bound} with $\nu\equiv \delta/2 $ thus immediately yields the desired result.
	\end{proof}
	
	\begin{remark}
		Interestingly, in this well-specified setting, directly applying Theorem~\ref{thm:generic_bound} under \ref{assumption:2_hard_margin} does not yield improved results compared to those in the previous sections. That being said, since \ref{assumption:2_hard_margin} implies \ref{assumption:1_low_noise} for arbitrarily large $q>0$, and since the realizability assumption \ref{itm:realizable} implies \ref{assumption:5_regular} with arbitrarily large $s>0$, we can apply Theorem \ref{thm:cv_fcnn} to obtain a rate of $\mathcal{O}(n^{(1-\epsilon)/r})$ for arbitrarily small $\epsilon>0$ in this setting.
	\end{remark}
	
	\section{Numerical experiments}
	
	In this section, we illustrate Theorem \ref{thm:cv_fcnn} through simple numerical experiments, considering both the weak margin and strong margin cases. To achieve this, we generate data points from two toy distributions and empirically verify that the margin assumptions \ref{assumption:1_low_noise} and \ref{assumption:2_hard_margin} are satisfied. This verification is performed by plotting the histogram of the outputs of a DNN trained on each distribution using the square loss.\\  
	For each case, we train a DNN with the square loss for 5000 epochs, using an increasing number of training samples $n$, and plot the evolution of the test error as $n$ grows. Consistent with Theorem \ref{thm:cv_fcnn}, we observe a convergence rate slower than $O(n^{-1})$ under the weak margin condition \ref{assumption:1_low_noise}, and faster than $O(n^{-1})$ under the hard margin condition \ref{assumption:2_hard_margin}.  \\
	In all the numerical experiments presented below, the DNN under consideration is a fully connected ReLU network with architecture $\mathbf{a}=(2,64,32,1)$.
	
	\subsection{Convergence rate under weak margin condition}

	\begin{figure}[H]
		\centering
		\begin{subfigure}[t]{0.49\textwidth}
			\centering
			\includegraphics[width=\textwidth]{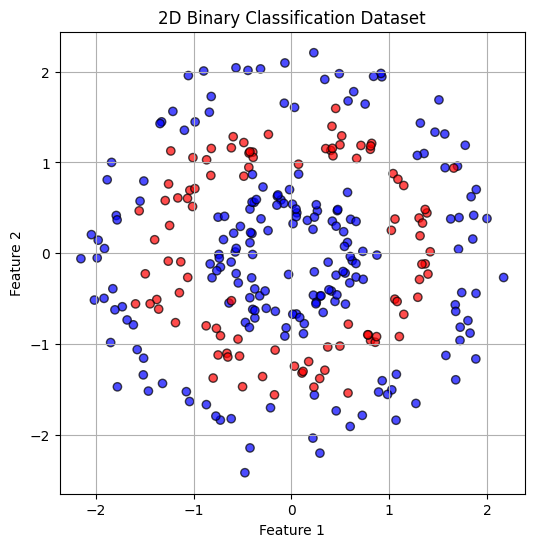} 
			\caption{A toy dataset where the weak margin condition holds.}
		\end{subfigure}
		\begin{subfigure}[t]{0.49\textwidth}
			\centering
			\raisebox{1cm}{
				\includegraphics[width=\textwidth]{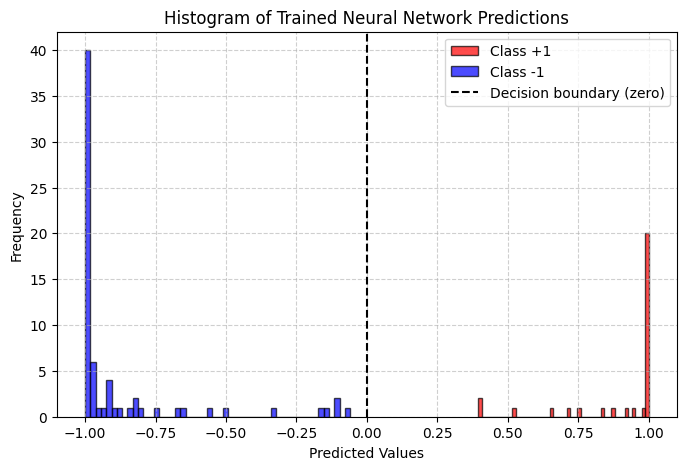}}
			\caption{Histogram of predicted labels for a DNN trained on this dataset.}
		\end{subfigure}
		\caption{A toy dataset which satisfies the weak margin condition. By plotting the histogram of a DNN trained to predict each class label, we confirm empirically that the condition is satisfied.}
	\end{figure}
	
	\begin{figure}[H]
		\centering
		\includegraphics[width=0.5\textwidth]{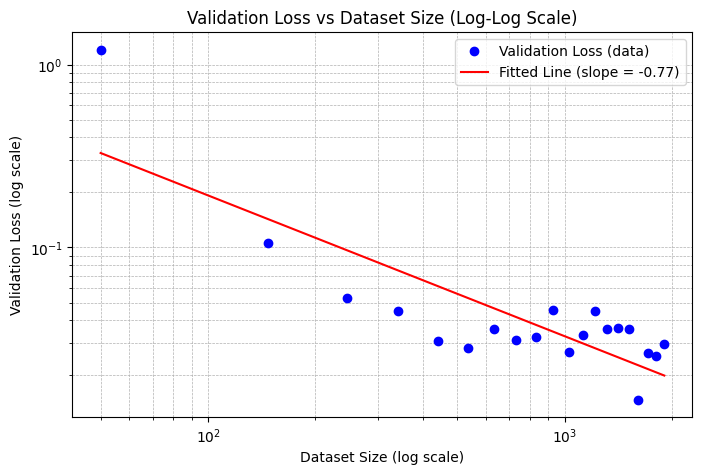} 
		\caption{Evolution of the testing error for a DNN trained on $n$ datapoints. The linear fit suggest an error decay rate of $O(n^{-0.77})$.}
	\end{figure}
	
	\subsection{Convergence rate under strong margin condition}
	
	\begin{figure}[H]
		\centering
		\begin{subfigure}[t]{0.49\textwidth}
			\centering
			\includegraphics[width=\textwidth]{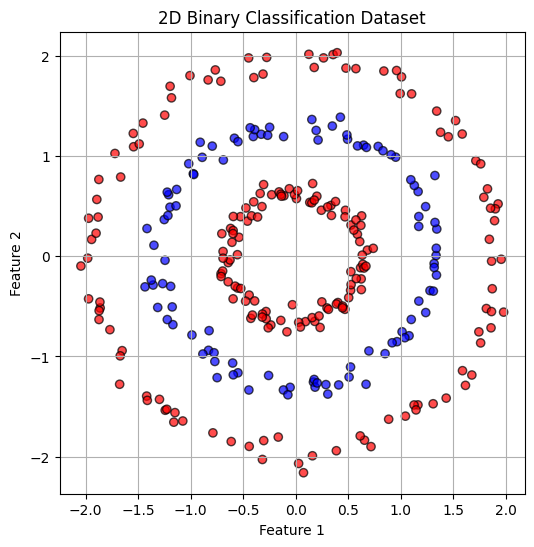} 
			\caption{A toy dataset where the hard margin condition holds.}
		\end{subfigure}
		\begin{subfigure}[t]{0.49\textwidth}
			\centering
			\raisebox{1cm}{
				\includegraphics[width=\textwidth]{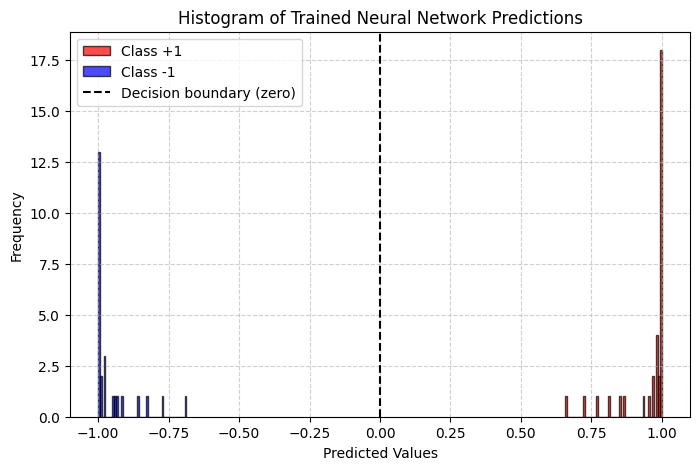}}
			\caption{Histogram of predicted labels for a DNN trained on this dataset.}
		\end{subfigure}
		\caption{A toy dataset which satisfies the hard margin condition. By plotting the histogram of a DNN trained to predict each class label, we confirm empirically that the condition is satisfied.}
	\end{figure}

	\begin{figure}[H]
		\centering
		\includegraphics[width=0.5\textwidth]{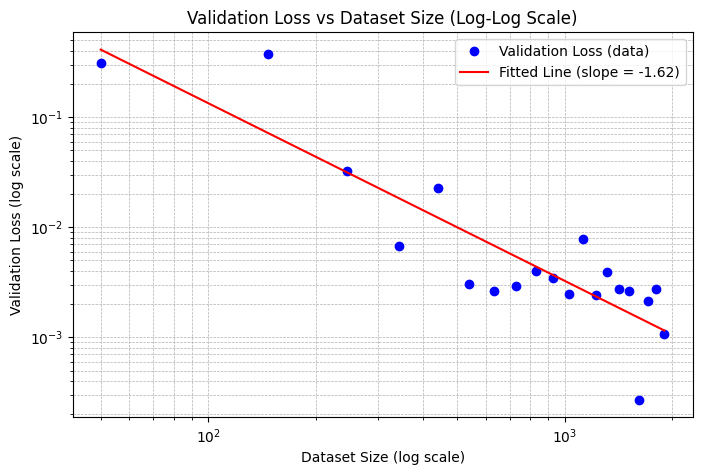} 
		\caption{Evolution of the testing error for a DNN trained on $n$ datapoints. The linear fit suggest an error decay rate of $O(n^{-1.62})$.}
	\end{figure}

	\section{Conclusion and discussion}
	
	We have established in this work a general upper bound on the excess risk of square-loss ERM classifiers induced by parametric function classes under Mammen-Tsybakov margin conditions. As a consequence, we have demonstrated for the first time ``super-fast" rates of convergence for suitably overparametrized DNN classifiers with diverse activation functions under suitable regularity conditions on the regression function $\eta$ and the marginal data distribution $\rho_X$. Unlike the related results of \citep{kim2018fast}, we do not require optimizing over a space of networks with pre-specified sparsity and uniformly bounded output functions, making our result more relevant to practical applications. Together, these findings show that DNNs can leverage margin conditions at least as effectively as standard methods for supervised classification. Although our analysis is exclusively limited to the binary case, we believe that by defining appropriate notions of Bayes regression function and margin conditions tailored to the multiclass setting, as done in \citep{vigogna2022multiclass}, our argument should extend to that case too.
	
	We now discuss in this section some interesting open questions which arise from this work that we believe would be worth investigating further: 
	\begin{itemize}
		\item \textbf{Full characterization of \ref{assumption:5_regular}.} Our analysis crucially relies on Assumption~\ref{assumption:5_regular} to carry through. Although we have provided some sufficient conditions for this Assumption to hold in Section~\ref{sec:characterization_a5}, a full characterization of the pairs $(\eta,\rho_{X})$ under which this Assumption is satisfied would greatly improve our understanding of this ``super-fast rates" phenomenon for Deep Neural Networks.  
		\item \textbf{More efficient approximation.} It would be interesting to investigate in more detail the existence of architectures and activation functions whose approximation power and complexity scale such that usual smoothness assumptions on $\eta$ are enough to derive these super-fast rates, akin to our Theorem~\ref{thm:cv_tanh} for $\tanh$ and sigmoid. We believe that potential such candidates could either be shallow neural networks over appropriate smoothness spaces \citep{yang2024nonparametric}, DNNs with so-called ``super-expressive" activation functions \citep{wang2025don}, or flexible and parameter-efficient architectures such as Convolutional Neural Networks \citep{zhou2020universality} or Kolmogorov-Arnold Networks \citep{liu2025kan}.
		\item \textbf{Other loss functions.} Our theoretical analysis crucially relies on a property that is specific to the square loss: namely that its population minimizer is the regression function $\eta$ and that the excess surrogate risk can be identified with $\|f-\eta\|_{L^2(\rho_X)}^2$. For other commonly used surrogates such as hinge loss or cross-entropy, the population minimizer typically differs from $\eta$, and relating surrogate excess risk to misclassification excess risk requires additional tools such as classification-calibration bounds, which are particularly well understood for convex surrogates \citep{bartlett2006convexity}. In this direction, fast rates for DNN classification with logistic loss have recently been established \citep{zhang2024classification}. Understanding how to extend these results to other surrogates and to the super-fast regime is an interesting direction to explore in future work.
	\end{itemize}
	
	\section{Proofs}
	
	\subsection{Proof of Lemma \ref{lemma:finite_sup}}
	
	\begin{proof}[Proof of Lemma \ref{lemma:finite_sup}]
		\underline{Proof of \hyperref[item:empirical_risk]{(i.)}:} Denote by $\mathbf{0}\in\mathcal{P}_{\mathbf{a},\infty}$ the parametrization whose entries are all zeros, and $b := \widehat{ \mathcal{R}}_{\ell,\lambda}(\mathbf{0})$. Clearly, $\widehat{ \mathcal{R}}_{\ell,\lambda}(f(\cdot;\boldsymbol{\theta})) > b $ for all $|\boldsymbol{\theta}|_p^p>b/\lambda$, hence $\widehat{ \mathcal{R}}_{\ell,\lambda}$ is minimized somewhere in $\{\boldsymbol{\theta} : |\boldsymbol{\theta}|_p^p\le b/\lambda\}$ and the minimum is attained by compactness and continuity.
		
		Now let $0\le\lambda\le\lambda'$ and $\boldsymbol{\theta}, \boldsymbol{\theta'}$ respectively in $\argmin\!\mbox{*}\  \widehat{\mathcal{R}}_{\ell,\lambda}$ and $\argmin\!\mbox{*}\  \widehat{\mathcal{R}}_{\ell,\lambda'}$. By optimality we have
		\begin{align*}
			\widehat{\mathcal{R}}_{\ell}(\boldsymbol{\theta}) + \lambda|\boldsymbol{\theta}|_p^p &\le \widehat{\mathcal{R}}_{\ell}(\boldsymbol{\theta'}) + \lambda|\boldsymbol{\theta'}|_p^p\\
			&= \widehat{\mathcal{R}}_{\ell}(\boldsymbol{\theta'}) + \lambda'|\boldsymbol{\theta'}|_p^p + (\lambda-\lambda')|\boldsymbol{\theta'}|_p^p\\
			&\le \widehat{\mathcal{R}}_{\ell}(\boldsymbol{\theta}) + \lambda'|\boldsymbol{\theta}|_p^p + (\lambda-\lambda')|\boldsymbol{\theta'}|_p^p
		\end{align*}
		Hence we have shown
		\begin{equation}
			\label{eqn:decreasing_norm}
			0\le (\lambda-\lambda')\left(|\boldsymbol{\theta'}|_p^p - |\boldsymbol{\theta}|_p^p\right)
		\end{equation}
		which implies that $|\boldsymbol{\theta'}|_p\le |\boldsymbol{\theta}|_p $ whenever $\lambda\le\lambda'$. Finally, by basic properties of $\ell_p$ norms, we have
		\begin{align*}
			\sup_{\lambda\ge 0, n\ge 1}\left\{|\boldsymbol{\theta}|_\infty : \boldsymbol{\theta} \in \argmin\!\mbox{*}\  \widehat{\mathcal{R}}_{\ell,\lambda} \right\} &\le \sup_{\lambda\ge 0, n\ge 1}\left\{|\boldsymbol{\theta}|_p : \boldsymbol{\theta} \in \argmin\!\mbox{*}\  \widehat{\mathcal{R}}_{\ell,\lambda} \right\} \\
			&\le  \sup_{n\ge 1}\left\{|\boldsymbol{\theta}|_p : \boldsymbol{\theta} \in \argmin\!\mbox{*}\  \widehat{\mathcal{R}}_{\ell,n} \right\} \\
			&\le \sup_{n\ge 1}\left\{P(\mathbf{a})^{1/p}|\boldsymbol{\theta}|_\infty : \boldsymbol{\theta} \in \argmin\!\mbox{*}\  \widehat{\mathcal{R}}_{\ell,n} \right\} \\
			&=  R_0 P(\mathbf{a})^{1/p}
		\end{align*}
		\underline{Proof of \hyperref[item:population_risk]{(ii.)}:} First note that for any $f\in L^2(\rho_X)$, we have 
		\begin{align*}
			\mathcal{R}_\ell(f) &:= \mathbb{E}_{(x,y)\sim\rho}\left[(f(x) - y)^2\right]\\
			&= \mathbb{E}_{(x,y)\sim\rho}\left[(f(x) - \eta(x))^2\right] + \mathbb{E}_{(x,y)\sim\rho}\left[(\eta(x) - y)^2\right] + 2\mathbb{E}_{(x,y)\sim\rho}\left[(f(x) - y)(\eta(x)-y)\right]\\
			&= \|f-\eta\|_{L^2(\rho_X)}^2 + C + 2\mathbb{E}_{(x,y)\sim\rho}\mathbb{E}\left[(f(x) - \eta(x))(\eta(x)-y)\mid x\right]\\
			&= \|f-\eta\|_{L^2(\rho_X)}^2 + C + 2\mathbb{E}_{(x,y)\sim\rho}\left[(f(x) - \eta(x))(\eta(x)-\mathbb{E}[y\mid x])\right]\\
			&= \|f-\eta\|_{L^2(\rho_X)}^2 + C + 0
		\end{align*}
		Where $C\equiv \mathbb{E}_{(x,y)\sim\rho}\left[(\eta(x) - y)^2\right]\ge0$ is a constant which does not depend on $f$. This shows that minimizing $\mathcal R_\ell$ is equivalent to minimizing the $L^2(\rho_X)$ distance to $\eta$. It is thus enough to show that there exists $\boldsymbol{\theta}\in \argmin_{\boldsymbol{\theta}\in\mathcal{P}_{\mathbf{a},\infty}} \mathcal{R}_\ell$ with $|\boldsymbol{\theta}|_\infty\le R^*$ to conclude.  
		
		To that end, observe that by the strong law of large numbers, we have for any $\boldsymbol{\theta}\in [-R^*,R^*]^{P(\mathbf{a})}$ that $ \widehat{\mathcal{R}}_{\ell,n}(\boldsymbol{\theta}) \to \mathcal{R}_{\ell}(\boldsymbol{\theta})$ almost surely as $n\to\infty$. Furthermore, because the realization mapping $\mathcal{F}$ is Lipschitz, its composition with the mapping $\ell: (f,x,y)\mapsto (f(x)-y)^2$ is uniformly Lipschitz over $(x,y)\in \mathcal{X}\times\{-1,+1\}$, and we can denote by $L_R>0$ its Lipschitz constant. Finally, for any convergent subsequence $(\boldsymbol{\theta}_n)_{n\ge 1}\subseteq [-R^*,R^*]^{P(\mathbf{a})}$ satisfying $\boldsymbol{\theta}_n\in \argmin_{\boldsymbol{\theta}\in\mathcal{P}_{\mathbf{a},\infty}} \widehat{ \mathcal{R}}_{\ell,n}$ for all $n\ge1$ (which is guaranteed to exist by Assumption \ref{assumption:3_finite_supremum}) and with limit $\boldsymbol{\theta}^*$, we have
		\begin{equation*}
			|\widehat{\mathcal{R}}_{\ell,n}(\boldsymbol{\theta}_n) - \mathcal{R}_{\ell}(\boldsymbol{\theta}^*)| \le L_R|\boldsymbol{\theta}_n - \boldsymbol{\theta}^*|_\infty + |\mathcal{R}_{\ell}(\boldsymbol{\theta}_n) - \mathcal{R}_{\ell}(\boldsymbol{\theta}^*)|.
		\end{equation*}
		Thus for any $\boldsymbol{\theta}\in\mathcal{P}_{\mathbf{a},\infty}$, we can take the limit $n\to\infty$ to find
		\begin{equation*}
			\widehat{\mathcal{R}}_{\ell,n}(\boldsymbol{\theta}_n) \le \widehat{\mathcal{R}}_{\ell,n}(\boldsymbol{\theta}) \implies \mathcal{R}_{\ell}(\boldsymbol{\theta}^*) \le \mathcal{R}_{\ell}(\boldsymbol{\theta}),
		\end{equation*}
		which implies that $\boldsymbol{\theta}^*\in \argmin_{\boldsymbol{\theta}\in\mathcal{P}_{\mathbf{a},\infty}} \mathcal{R}_{\ell}$, as desired.
	\end{proof}
	
	\subsection{Proof of Proposition \ref{prop:well-separation}}
	
	To prove Proposition \ref{prop:well-separation}, we will need the following growth estimate for functions satisfying the KL property, due to \citep{ngai2009error}:
	
	\begin{theorem}[Adapted from Corollary 2.(ii) in \citep{ngai2009error}]
		\label{thm:growth_bound}
		Let $f:\R^k\to [0,+\infty)$ be a lower semi-continuous function, and $f(x_0)=0$. If there exist $c,\gamma,\varepsilon>0$ such that $\gamma|x^*|_2[f(x)]^{\gamma-1}\ge c$ for all $x\in \{x:|x-x_0|_2<\varepsilon\}\setminus\{x:f(x)=0\}$ and $x^*\in\hat{\partial}f(x)$, then
		\[
		\dist_2(x,\{x:f(x)=0\}) \le \frac{1}{c}[f(x)]^\gamma, \quad \text{for all } x\in \{x:|x-x_0|_2<\varepsilon/2\},
		\]
		where $\dist_2(a,A)$ denotes the $\ell_2$ distance between a vector $a\in\R^k$ and a set $A\subseteq \R^k$.
	\end{theorem}
	
	\begin{proof}[Proof of Proposition \ref{prop:well-separation}]
		Let $\Lambda>0$ be arbitrary, and write $R:=R_0P(\mathbf a)^{1/p}$. By reproducing the argument which led to inequality \eqref{eqn:decreasing_norm} above, we have that all population minimizers under consideration lie in $\mathcal{P}_{\mathbf a,R}$ uniformly over $0<\lambda\le \Lambda$. For $0\le\lambda\le\Lambda$, let
		\[
		S_\lambda:=\argmin_{\boldsymbol{\theta}\in\mathcal{P}_{\mathbf a,R}} \mathcal{R}_{\ell,\lambda}(\boldsymbol{\theta}),
		\qquad
		m_\lambda:=\min_{\boldsymbol{\theta}\in\mathcal{P}_{\mathbf a,R}} \mathcal{R}_{\ell,\lambda}(\boldsymbol{\theta}),
		\qquad
		\psi_\lambda(\boldsymbol{\theta}):=\mathcal{R}_{\ell,\lambda}(\boldsymbol{\theta})-m_\lambda.
		\]
		Then $S_\lambda$ is nonempty and compact, $\psi_\lambda\ge 0$ on $\mathcal{P}_{\mathbf a,R}$, and $\{\psi_\lambda=0\}=S_\lambda$.
		Moreover, $\Lip_R(\psi_\lambda)=\Lip_R(\mathcal{R}_{\ell,\lambda})$.
		
		Define
		\[
		\psi(\lambda,\boldsymbol{\theta}):=\psi_\lambda(\boldsymbol{\theta}),
		\qquad
		\Omega:=\{(\lambda,\boldsymbol{\theta})\in[0,\Lambda]\times\mathcal{P}_{\mathbf a,R}:\psi(\lambda,\boldsymbol{\theta})=0\}
		=\{(\lambda,\boldsymbol{\theta}):\boldsymbol{\theta}\in S_\lambda\}.
		\]
		Then $\Omega$ is compact by continuity of $\psi$. Furthermore, closure of subanalyticity under addition and multiplication shows that $(\lambda,\boldsymbol{\theta})\mapsto \mathcal{R}_{\ell,\lambda}(\boldsymbol{\theta})$ is subanalytic on $[0,\Lambda]\times\mathcal{P}_{\mathbf a,R}$.
		Since $m_\lambda=\min_{\boldsymbol{\theta}\in\mathcal{P}_{\mathbf a,R}}\mathcal{R}_{\ell,\lambda}(\boldsymbol{\theta})$ is obtained by minimizing a subanalytic function over a compact set, $\lambda\mapsto m_\lambda$ is subanalytic on $[0,\Lambda]$, hence $\psi$ is subanalytic on $[0,\Lambda]\times\mathcal{P}_{\mathbf a,R}$ \citep{shiota1997geometry}.
		
		We claim that there exist constants $c,\rho>0$ and $0<\kappa<1$, independent of $\lambda\in[0,\Lambda]$, such that for all $0\le\lambda\le\Lambda$,
		\begin{equation}
			\label{eqn:kl_ngai_uniform}
			|\boldsymbol{\theta}^*|_2\,\psi_\lambda(\boldsymbol{\theta})^{-\kappa} \ge c,
			\quad
			\text{for all } \boldsymbol{\theta}\in\mathcal{P}_{\mathbf a,R}\setminus S_\lambda \text{ with } \psi_\lambda(\boldsymbol{\theta})\le\rho,
			\ \text{and all } \boldsymbol{\theta}^*\in \partial \psi_\lambda(\boldsymbol{\theta}).
		\end{equation}
		Indeed, fix $(\bar\lambda,\bar{\boldsymbol{\theta}})\in\Omega$. Applying Theorem \ref{thm:kl_property} to the subanalytic function $\psi$ at $(\bar\lambda,\bar{\boldsymbol{\theta}})$ (viewing $\lambda$ as an additional variable) yields an open neighborhood $U_{\bar\lambda,\bar{\boldsymbol{\theta}}}$ and constants $c_{\bar\lambda,\bar{\boldsymbol{\theta}}},\rho_{\bar\lambda,\bar{\boldsymbol{\theta}}}>0$ and $0<\kappa_{\bar\lambda,\bar{\boldsymbol{\theta}}}<1$ such that for all $(\lambda,\boldsymbol{\theta})\in U_{\bar\lambda,\bar{\boldsymbol{\theta}}}$ with $0<\psi_\lambda(\boldsymbol{\theta})\le \rho_{\bar\lambda,\bar{\boldsymbol{\theta}}}$ and all $\boldsymbol{\theta}^*\in\partial\psi_\lambda(\boldsymbol{\theta})$,
		\[
		|\boldsymbol{\theta}^*|_2 \ge c_{\bar\lambda,\bar{\boldsymbol{\theta}}}\,\psi_\lambda(\boldsymbol{\theta})^{\kappa_{\bar\lambda,\bar{\boldsymbol{\theta}}}}.
		\]
		By compactness of $\Omega$, we can extract a finite subcover $\{U_i\}_{i=1}^N$ with associated constants $(c_i,\rho_i,\kappa_i)$, and set
		\[
		c:=\min_{1\le i\le N}c_i,\qquad \kappa:=\max_{1\le i\le N}\kappa_i,\qquad \rho_0:=\min_{1\le i\le N}\rho_i,\qquad U:=\bigcup_{i=1}^N U_i.
		\]
		Since $([0,\Lambda]\times\mathcal{P}_{\mathbf a,R})\setminus U$ is compact and disjoint from $\Omega$, we have $\psi>0$ on it, hence
		\[
		\rho_1:=\min_{(\lambda,\boldsymbol{\theta})\in([0,\Lambda]\times\mathcal{P}_{\mathbf a,R})\setminus U}\psi_\lambda(\boldsymbol{\theta})>0.
		\]
		Finally set $\rho:=\min\{\rho_0,\rho_1,1\}$. If $0<\psi_\lambda(\boldsymbol{\theta})\le\rho$, then $(\lambda,\boldsymbol{\theta})\in U$, hence lies in some $U_i$, and
		\[
		|\boldsymbol{\theta}^*|_2 \ge c_i\,\psi_\lambda(\boldsymbol{\theta})^{\kappa_i}\ge c\,\psi_\lambda(\boldsymbol{\theta})^{\kappa},
		\]
		using $\psi_\lambda(\boldsymbol{\theta})\le 1$ and $\kappa_i\le \kappa$. This proves \eqref{eqn:kl_ngai_uniform}.
		
		Fix $0\le\lambda\le\Lambda$ and denote $L_\lambda:=\Lip_R(\mathcal{R}_{\ell,\lambda})=\Lip_R(\psi_\lambda)$.
		For any $\bar{\boldsymbol{\theta}}\in S_\lambda$ and any $\boldsymbol{\theta}$ with $|\boldsymbol{\theta}-\bar{\boldsymbol{\theta}}|_2\le \rho/L_\lambda$, Lipschitz continuity gives $\psi_\lambda(\boldsymbol{\theta})\le \rho$, hence \eqref{eqn:kl_ngai_uniform} holds (and also for $\hat{\partial}\psi_\lambda$ since $\hat{\partial}\subseteq\partial$).
		Applying Theorem \ref{thm:growth_bound} with $f=\psi_\lambda$, $\gamma=1-\kappa$, and $\varepsilon=\rho/L_\lambda$, we obtain
		\[
		\dist_2(\boldsymbol{\theta},S_\lambda)\le \frac{1}{(1-\kappa)c}\,\psi_\lambda(\boldsymbol{\theta})^{1-\kappa},
		\quad
		\text{for all } \boldsymbol{\theta}\in\R^{P(\mathbf a)} \text{ s.t. } \dist_2(\boldsymbol{\theta},S_\lambda)\le \frac{\rho}{2L_\lambda}.
		\]
		Since $\dist(\cdot,S_\lambda)\le \dist_2(\cdot,S_\lambda)$, this implies
		\[
		\psi_\lambda(\boldsymbol{\theta}) \ge K_0\,\dist(\boldsymbol{\theta},S_\lambda)^{r},
		\quad
		\text{for all } \boldsymbol{\theta}\in\mathcal{P}_{\mathbf a,R} \text{ s.t. } \dist(\boldsymbol{\theta},S_\lambda)\le \frac{\rho}{2\sqrt{P(\mathbf a)}\,L_\lambda},
		\]
		where $r:=1/(1-\kappa)>1$ and $K_0:=\big((1-\kappa)c\big)^{1/(1-\kappa)}$.
		Slightly abusing notation and writing $\rho\equiv \rho/\sqrt{P(\mathbf a)}$ yields the same estimate with the threshold $\rho/(2L_\lambda)$.
		
		Set $L_*:=\sup_{0\le\lambda\le\Lambda}\Lip_R(\mathcal{R}_{\ell,\lambda})<\infty$ and $\delta:=\rho/(2L_*)$.
		Consider the compact set
		\[
		A:=\{(\lambda,\boldsymbol{\theta})\in[0,\Lambda]\times\mathcal{P}_{\mathbf a,R}:\dist(\boldsymbol{\theta},S_\lambda)\ge \delta\}.
		\]
		By continuity and the fact that $\psi_\lambda(\boldsymbol{\theta})>0$ on $A$, we have
		\[
		\alpha:=\min_{(\lambda,\boldsymbol{\theta})\in A}\psi_\lambda(\boldsymbol{\theta})>0.
		\]
		Finally, let $K:=\min\{K_0,\alpha/(2R)^r\}$.
		Fix $0\le\lambda\le\Lambda$ and $0<t<\rho/(2\Lip_R(\mathcal{R}_{\ell,\lambda}))$.
		If $\boldsymbol{\theta}\in\mathcal{P}_{\mathbf a,R}$ satisfies $\dist(\boldsymbol{\theta},S_\lambda)\ge t$, then either $\dist(\boldsymbol{\theta},S_\lambda)\le \rho/(2\Lip_R(\mathcal{R}_{\ell,\lambda}))$, in which case
		\[
		\psi_\lambda(\boldsymbol{\theta})\ge K_0\,t^r\ge Kt^r,
		\]
		or $\dist(\boldsymbol{\theta},S_\lambda)\ge \delta$, in which case $\psi_\lambda(\boldsymbol{\theta})\ge \alpha\ge K(2R)^r\ge Kt^r$ since $\dist(\boldsymbol{\theta},S_\lambda)\le 2R$ on $\mathcal{P}_{\mathbf a,R}$.
		This proves \eqref{eqn:well_separation} (with $\argmin \mathcal{R}_{\ell,\lambda}=S_\lambda$ on $\mathcal{P}_{\mathbf a,R}$) with constants $K,\rho,r$ independent of $\lambda\in[0,\Lambda]$.
	\end{proof}
	
	\subsection{Proof of Lemma \ref{lemma:lip_constant_fcnn}}
	
	\begin{proof}[Proof of Lemma \ref{lemma:lip_constant_fcnn}]		
		Fix $\boldsymbol{\theta},\boldsymbol{\theta}'\in\mathcal{P}_{\mathbf a,R}$ and set
		$r:=|\boldsymbol{\theta}-\boldsymbol{\theta}'|_\infty$.
		For $s\in\{1,\dots,L\}$ let $f_s$ (resp.\ $g_s$) denote the depth-$s$ truncation of the network
		associated with $\boldsymbol{\theta}$ (resp.\ $\boldsymbol{\theta}'$), i.e.
		\[
		f_1(x)=W_1x+B_1,\qquad f_s(x)=W_s\,\sigma\!\bigl(f_{s-1}(x)\bigr)+B_s\quad (s\ge2),
		\]
		and similarly for $g_s$.
		For vector-valued functions we use the $\ell_\infty$ norm on the codomain and define
		\[
		e_0:=0,\quad e_s:=\|f_s-g_s\|_{L^\infty(\mathcal X)},\qquad
		m_0:=1,\quad m_s:=\max\{1,\|f_s\|_{L^\infty(\mathcal X)},\|g_s\|_{L^\infty(\mathcal X)}\}.
		\]
		Since $\mathcal X=[0,1]^d$, we have $|x|_\infty\le 1$ for all $x\in\mathcal X$.
		Moreover, note that if $A\in[-R,R]^{k\times n}$ and $z\in\R^n$, then
		$|Az|_\infty\le nR|z|_\infty\le WR|z|_\infty$ because $n\le W$.
		Finally, with $C_\sigma:=\max\{1,L_\sigma+|\sigma(0)|\}$ we have for all $z\in\R^n$,
		\[
		|\sigma(z)|_\infty\le L_\sigma|z|_\infty+|\sigma(0)|
		\le C_\sigma\max\{1,|z|_\infty\}.
		\]
		Hence, for every $s\in\{2,\dots,L\}$,
		\[
		m_s\le RW\,C_\sigma\,m_{s-1}+R.
		\]
		Likewise, for $s\ge2$ we decompose
		\[
		f_s-g_s=(W_s-W_s')\sigma(f_{s-1})+W_s'\bigl(\sigma(f_{s-1})-\sigma(g_{s-1})\bigr)+(B_s-B_s')
		\]
		and bound each term using the previous estimates and the Lipschitz property of $\sigma$ to get
		\[
		e_s\le W\bigl(r\,C_\sigma\,m_{s-1}+R\,C_\sigma\,e_{s-1}\bigr)+r.
		\]
		
		Set $A:=RWC_\sigma$. First note that $m_1\le WR\sup_{x\in\mathcal X}|x|_\infty+R\le WR+R$, and for every $s\in\{2,\dots,L\}$ the inequality $m_s\le A\,m_{s-1}+R$ holds. Hence, for every
		$s\in\{2,\dots,L\}$ we have the explicit bound
		\begin{equation}\label{eq:ms_explicit}
			m_{s-1}\le (RW+R)A^{s-2}+R\sum_{j=0}^{s-3}A^j.
		\end{equation}
		Moreover, $e_1\le (W+1)r$, and for every $s\in\{2,\dots,L\}$ the recursion
		\[
		e_s\le A\,e_{s-1}+WC_\sigma\,r\,m_{s-1}+r
		\]
		holds. Iterating it gives
		\begin{equation}\label{eq:es_unroll}
			e_L \le A^{L-1}e_1 + r\sum_{s=2}^L A^{L-s}\bigl(WC_\sigma\,m_{s-1}+1\bigr).
		\end{equation}
		Plugging \eqref{eq:ms_explicit} into \eqref{eq:es_unroll} and using the identities
		\[
		\sum_{s=2}^L A^{L-s}A^{s-2}=(L-1)A^{L-2},\qquad
		\sum_{s=3}^L \sum_{j=0}^{s-3} A^{L-s+j}=\sum_{t=0}^{L-3}(t+1)A^t,
		\]
		we obtain
		\begin{equation}
			\label{eqn:big_bracket}
			e_L \le r\Bigl[(W+1)A^{L-1} + (L-1)WC_\sigma(RW+R)A^{L-2}
			+ WC_\sigma R\sum_{t=0}^{L-3}(t+1)A^t + \sum_{t=0}^{L-2}A^t\Bigr].
		\end{equation}
		Since $A=RWC_\sigma\ge 1$ and $WC_\sigma R=A$, we bound
		\[
		WC_\sigma R\sum_{t=0}^{L-3}(t+1)A^t=\sum_{t=0}^{L-3}(t+1)A^{t+1}
		\le \frac{(L-2)(L-1)}{2}\,A^{L-1},
		\qquad
		\sum_{t=0}^{L-2}A^t\le (L-1)\,A^{L-1}.
		\]
		Moreover, $WC_\sigma(RW+R)=(W+1)A$, hence the bracketed expression in \eqref{eqn:big_bracket} is at most
		\[
		A^{L-1}\Bigl(L(W+1)+\frac{L(L-1)}{2}\Bigr)\le 2L^2WA^{L-1}
		=2L^2(C_\sigma R)^{L-1}W^L,
		\]
		which implies the desired Lipschitz bound.
	\end{proof}
	
	\subsection{Upper bounds}
	
	\subsubsection{Preliminary results}
	
	We start by collecting a number of useful lemmas which will be needed to prove the main results. Throughout the following, recall the definition of the \emph{misclassification risk} $\mathcal R(\sgn f)$ \eqref{eqn:misclass_risk} for a real-valued function $f$ :
	\[\mathcal{R}(\sgn f) := \mathbb P_{(X,Y)\sim\rho}(\sgn f(X)\ne Y)\]
	
	Our first lemma is a bound on the difference of the misclassification risks of classifiers induced by measurable functions $f,g\in L^\infty(\mathcal X)$ :
	
	\begin{lemma}
		\label{lemma:vigogna}
		For any two $f,g\in L^\infty(\mathcal X)$, we have
		\[|\mathcal{R}(\sgn f) - \mathcal{R}(\sgn g)| \le \mathbb{P}_{x\sim{\rho_X}} \left(\|f-g\|_{L^\infty(\mathcal X)}\ge|f(x)|\right)\]
	\end{lemma}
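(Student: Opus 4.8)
The plan is to prove the bound in two elementary steps: a pointwise comparison of the two induced classifiers, followed by a translation of the sign-disagreement event into a statement involving $|f|$ and $\|f-g\|_{L^\infty(\rho_X)}$.

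For the first step I would write
\[\mathcal{R}(\sgn f) - \mathcal{R}(\sgn g) = \Exp_{(X,Y)\sim\rho}\!\left[\ind[\sgn f(X)\ne Y] - \ind[\sgn g(X)\ne Y]\right],\]
and observe that on the event $\{\sgn f(X) = \sgn g(X)\}$ the integrand vanishes identically, while off this event it takes values in $\{-1,0,1\}$. Bounding the integrand by $\ind[\sgn f(X)\ne\sgn g(X)]$ in absolute value therefore gives
\[\bigl|\mathcal{R}(\sgn f) - \mathcal{R}(\sgn g)\bigr| \le \Prob_{x\sim\rho_X}\!\bigl(\sgn f(x)\ne\sgn g(x)\bigr).\]

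For the second step I would show that, modulo a $\rho_X$-null set, $\{\sgn f(x)\ne\sgn g(x)\} \subseteq \{|f(x)| \le \|f-g\|_{L^\infty(\rho_X)}\}$. This rests on the pointwise claim that $\sgn f(x)\ne\sgn g(x)$ forces $|f(x)-g(x)|\ge|f(x)|$: a short case analysis on the sign of $f(x)$ handles this — if $f(x)>0$ then $g(x)\le 0$ so $f(x)-g(x)\ge f(x)$, symmetrically if $f(x)<0$, and the inequality is trivial when $f(x)=0$ (recalling $\sgn 0 = 0$). Since $|f(x)-g(x)|\le\|f-g\|_{L^\infty(\rho_X)}$ for $\rho_X$-a.e.\ $x$, combining the two inequalities yields $|f(x)|\le\|f-g\|_{L^\infty(\rho_X)}$ for $\rho_X$-a.e.\ $x$ in the disagreement event, i.e.\ the desired inclusion. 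Taking $\rho_X$-probabilities and chaining with the first step gives the lemma.

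No real obstacle is anticipated; the only point needing care is that the essential-supremum inequality $|f-g|\le\|f-g\|_{L^\infty(\rho_X)}$ holds only off a $\rho_X$-null set, so the set inclusion above should be read modulo $\rho_X$-null sets — harmless here since every quantity in the statement is an integral against $\rho_X$.
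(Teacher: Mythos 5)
Your proof is correct and follows essentially the same route as the paper's: bound the absolute difference of risks by the probability of sign disagreement, then show (pointwise, hence modulo a $\rho_X$-null set) that sign disagreement implies $|f(x)-g(x)|\ge|f(x)|$, which together with $|f-g|\le\|f-g\|_{L^\infty(\rho_X)}$ a.e.\ gives the inclusion of events. Your explicit case analysis for the implication $\sgn f(x)\ne\sgn g(x)\Rightarrow|f(x)-g(x)|\ge|f(x)|$ is a welcome bit of extra detail, but the argument is otherwise identical.
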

	
	\begin{proof}[Proof of Lemma \ref{lemma:vigogna}]
		We have
		\begin{align*}
			|\mathcal{R}(\sgn f) - \mathcal{R}(\sgn g)| 
			&= |\mathbb{E}\left[\mathbbm{1}\left\{\sgn f(X)\ne Y\right\}
			-\mathbbm{1}\left\{\sgn g(X)\ne Y\right\}\right]|\\
			&\le \mathbb{E}\left[|\mathbbm{1}\left\{\sgn f(X)\ne Y\right\}
			-\mathbbm{1}\left\{\sgn g(X)\ne Y\right\}|\right]\\
			&\le \mathbb{E}\left[\mathbbm{1}\left\{\sgn f(X)\ne \sgn g(X)\right\}\right] = \mathbb{P}\left(\sgn f(X)\ne \sgn g(X)\right)
		\end{align*}
		But now observe that for any $x\in\mathcal{X}$, $\sgn f(x)\ne \sgn g(x) \implies |f(x) - g(x)| \ge |f(x)| $. Hence the inclusion of events 
		\[\left\{\sgn f(X)\ne \sgn g(X)\right\}\subseteq\left\{\|f-g\|_{L^\infty(\rho_X)}\ge|f(X)|\right\},\]
		which implies the claimed inequality.
	\end{proof}
	
	We next have an upper bound on the excess misclassification risk of a classifier $\sgn f$ in terms of the $L^2(\rho_X)$ distance between $f$ and the regression function $\eta$.
	
	\begin{lemma}
		\label{lemma:upper_bound_approx_convex}
		For any $f\in L^2(\rho_X)$, we have the inequality
		\[\mathcal{R}(\sgn f) - \mathcal{R}(\sgn \eta)\le \|f-\eta\|_{L^2(\rho_X)}\]
	\end{lemma}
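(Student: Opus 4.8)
The plan is to reduce the claim to the classical pointwise formula for the excess misclassification risk and then apply Jensen's inequality. First I would condition on $X$: writing the conditional class probabilities in terms of $\eta$, one has $\mathbb P(Y=1\mid X=x)=\tfrac{1+\eta(x)}{2}$, so a direct computation shows that the conditional risk of predicting $+1$ (resp. $-1$) at $x$ is $\tfrac{1-\eta(x)}{2}$ (resp. $\tfrac{1+\eta(x)}{2}$), and hence the Bayes conditional risk is $\tfrac{1-|\eta(x)|}{2}$. Comparing, the excess conditional risk of $\sgn f$ at $x$ equals $|\eta(x)|$ whenever $\sgn f(x)\ne\sgn\eta(x)$ and $0$ otherwise. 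Integrating against $\rho_X$ gives the identity
\[\mathcal R(\sgn f)-\mathcal R(\sgn\eta)=\int_{\{\sgn f\ne\sgn\eta\}}|\eta(x)|\,d\rho_X(x).\]

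Next I would bound the integrand pointwise. On the event $\{\sgn f(x)\ne\sgn\eta(x)\}$ the numbers $f(x)$ and $\eta(x)$ cannot be nonzero with the same sign: if they have strictly opposite signs then $|f(x)-\eta(x)|=|f(x)|+|\eta(x)|\ge|\eta(x)|$, and if $f(x)=0$ then $|f(x)-\eta(x)|=|\eta(x)|$. In all cases $|\eta(x)|\mathbbm 1\{\sgn f(x)\ne\sgn\eta(x)\}\le|f(x)-\eta(x)|$, so
\[\mathcal R(\sgn f)-\mathcal R(\sgn\eta)\le\int_{\mathcal X}|f(x)-\eta(x)|\,d\rho_X(x)=\|f-\eta\|_{L^1(\rho_X)}.\]
Since $\rho_X$ is a probability measure and $f-\eta\in L^2(\rho_X)$ (as $f\in L^2(\rho_X)$ and $\eta$ is bounded), Jensen's inequality (equivalently Cauchy–Schwarz) yields $\|f-\eta\|_{L^1(\rho_X)}\le\|f-\eta\|_{L^2(\rho_X)}$, which is exactly the claimed bound.

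The main obstacle is not analytical but bookkeeping around the convention $\sgn(0)=0$, which makes $\sgn f$ a priori $\{-1,0,1\}$-valued: one must check that the tie set $\{\eta=0\}$ and, more delicately, $\{f=0\}$ do not spoil the conditional-risk identity above. This is handled by observing that $\{\eta=0\}$ is $\rho_X$-null under assumptions \ref{assumption:1_low_noise}/\ref{assumption:2_hard_margin} (letting $\delta\to0$), and by breaking ties in $\sgn f$ so that it is $\{-1,1\}$-valued (equivalently, noting the estimators of interest satisfy $\rho_X(f=0)=0$); under either reading the two displayed steps go through verbatim. One could instead try to invoke Lemma \ref{lemma:vigogna} with $g=\eta$, but that route only produces an $L^\infty$-type bound, so I would carry out the short direct computation above to obtain the sharper $L^2$ estimate.
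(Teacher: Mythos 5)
Your argument is essentially the paper's: both reduce to the pointwise bound $\mathcal R(\sgn f)-\mathcal R(\sgn\eta)\le\Exp_X\bigl[\,|\eta(X)|\,\mathbbm 1\{\sgn f(X)\ne\sgn\eta(X)\}\,\bigr]$, then bound $|\eta(X)|$ on the disagreement set by $|f(X)-\eta(X)|$, and finish with $L^1\le L^2$ on the probability space $(\mathcal X,\rho_X)$. Your remark about the $\sgn(0)=0$ convention is well taken and is actually glossed over in the paper: with $\sgn(0)=0$ the lemma can fail (e.g.\ $\eta\equiv 1/2$, $f\equiv 0$ gives excess risk $3/4$ but $\|f-\eta\|_{L^2}=1/2$), so the tie-breaking convention you propose, or the assumption $\rho_X(\{f=0\})=0$, is genuinely needed for both proofs to go through.
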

	\begin{proof}
		Note that we have
		\[\eta(X) = \Exp[Y\mid X] = \Prob(Y=1\mid X) - \Prob(Y=-1\mid X),\]
		hence by the law of total expectation :
		\begin{align*}
			\mathcal{R}(\sgn f) - \mathcal{R}(\sgn \eta) &= \mathbb{E}_X\left[\mathbb{E}_Y[\mathbbm{1}\left\{\sgn f(X)\ne Y\right\}
			-\mathbbm{1}\left\{\sgn \eta(X)\ne Y\right\}\mid X]\right]\\
			&= \mathbb{E}_X\left[(\mathbbm{1}\left\{\sgn f(X)\ne 1\right\}
			-\mathbbm{1}\left\{\sgn \eta(X)\ne 1\right\})\cdot\Prob(Y=1\mid X)\right.\\
			&+\left. (\mathbbm{1}\left\{\sgn f(X)\ne -1\right\}
			-\mathbbm{1}\left\{\sgn \eta(X)\ne -1 \right\})\cdot\Prob(Y=-1\mid X) \right]\\
			&\le \Exp_X[|\eta(X)|\mathbbm{1}\left\{\sgn f(X)\ne \sgn\eta(X)\right\} ]\\
			&\le \Exp_X[|\eta(X)-f(X)|\mathbbm{1}\left\{\sgn f(X)\ne \sgn\eta(X)\right\} ]\\
			&\le  \|f-\eta\|_{L^2(\rho_X)}
		\end{align*}
	\end{proof}

	The following result states that, whenever $\eta$ satisfies either the low-noise Assumption \ref{assumption:1_low_noise} or the hard margin condition \ref{assumption:2_hard_margin}, any sufficiently good $L^2(\rho_X)$ approximation of $\eta$ will satisfy the same assumption with high probability.
	
	\begin{lemma}
		\label{lemma:margin_in_probability}
		Let $f\in L^2(\rho_{X})$ be such that $\|f-\eta\|_{L^2(\rho_{X})}\le\varepsilon$ for some $\varepsilon>0$. The following is true :
		\begin{itemize}
			\item If $\eta$ satisfies the low-noise Assumption \ref{assumption:1_low_noise}, we have for all $\delta>\varepsilon$ and $0<\nu<\delta$ :
			\[\Prob(|f(X)|\le \nu) \le \frac{\varepsilon^2}{(\delta-\nu)^2} + C\delta^q \]
			\item If $\eta$ satisfies the hard-margin Assumption \ref{assumption:2_hard_margin} with margin $\delta>0$ and $\varepsilon<\delta$, we have for all $\nu<\delta$ :
			\[\Prob(|f(X)|\le \nu) \le \frac{\varepsilon^2}{(\delta-\nu)^2}\]
		\end{itemize} 
	\end{lemma}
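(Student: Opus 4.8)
The plan is to bound the ``bad'' event $\{|f(X)|\le\nu\}$ by splitting according to whether $|\eta(X)|$ is itself small (below the margin $\delta$) or not, handle the first piece with the relevant margin assumption, and handle the second piece with a Chebyshev-type inequality using the $L^2$-closeness of $f$ to $\eta$.

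Concretely, first I would write the set inclusion
\[
\{|f(X)|\le\nu\}\subseteq\{|\eta(X)|\le\delta\}\cup\big(\{|f(X)|\le\nu\}\cap\{|\eta(X)|>\delta\}\big),
\]
so that by subadditivity $\Prob(|f(X)|\le\nu)\le\Prob(|\eta(X)|\le\delta)+\Prob(|f(X)|\le\nu,\ |\eta(X)|>\delta)$. For the second term, on the event $\{|f(X)|\le\nu,\ |\eta(X)|>\delta\}$ the triangle inequality gives $|f(X)-\eta(X)|\ge|\eta(X)|-|f(X)|>\delta-\nu>0$, whence
\[
\{|f(X)|\le\nu\}\cap\{|\eta(X)|>\delta\}\subseteq\{|f(X)-\eta(X)|>\delta-\nu\}.
\]
Then Chebyshev's (Markov's) inequality applied to the nonnegative random variable $|f(X)-\eta(X)|^2$ yields
\[
\Prob\big(|f(X)-\eta(X)|>\delta-\nu\big)\le\frac{\Exp_{X\sim\rho_X}\big[|f(X)-\eta(X)|^2\big]}{(\delta-\nu)^2}=\frac{\|f-\eta\|_{L^2(\rho_X)}^2}{(\delta-\nu)^2}\le\frac{\varepsilon^2}{(\delta-\nu)^2}.
\]

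It then remains to estimate $\Prob(|\eta(X)|\le\delta)$ in the two regimes. Under the low-noise assumption \ref{assumption:1_low_noise} this is at most $C\delta^q$ directly from the definition, giving the first bound; under the hard-margin assumption \ref{assumption:2_hard_margin} with margin $\delta$ we have $\Prob(|\eta(X)|>\delta)=1$, so $\Prob(|\eta(X)|\le\delta)=0$ and the second bound follows. The hypotheses $\delta>\varepsilon$ (resp. $\varepsilon<\delta$) and $\nu<\delta$ are used only to guarantee $\delta-\nu>0$ so that the Chebyshev step is meaningful and the inclusions are valid; there is no real obstacle here beyond keeping the case split clean and making sure the strict/non-strict inequalities in the triangle-inequality step line up (which they do, since $\delta-\nu>0$).
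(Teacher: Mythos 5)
Your proof is correct and follows essentially the same route as the paper's: split on whether $|\eta(X)|\le\delta$, bound that piece by the margin assumption, and bound the other piece via the triangle inequality plus Chebyshev applied to $\|f-\eta\|_{L^2(\rho_X)}$.
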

	
	\begin{proof}
		\begin{itemize}
			\item Assume that Assumption \ref{assumption:1_low_noise} holds. Observe that for any $\delta>0$
			\begin{align*}
				\Prob(|f(X)|\le \nu) &= \Prob(|f(X)|\le \nu ; |\eta(X)|> \delta) + \Prob(|f(X)|\le \nu ; |\eta(X)|\le \delta) \\
				&\le \Prob(|f(X)|\le \nu ; |\eta(X)|> \delta) + C\delta^q
			\end{align*}
			Now note that on the event $|\eta(X)|>\delta$, we have by triangle inequality
			\[|f(X)-\eta(X)| + |f(X)|\ge |\eta(X)|>\delta \implies |f(X)-\eta(X)| \ge \delta - |f(X)| \]
			Finally, Chebyshev's inequality yields
			\begin{align*}
				\Prob(|f(X)|\le \nu ; |\eta(X)|> \delta) &\le \Prob(|f(X)-\eta(X)|\ge \delta-\nu)\\
				&\le \frac{\|f-\eta\|_{L^2(\rho_X)}^2}{(\delta-\nu)^2}\\
				&\le \frac{\varepsilon^2}{(\delta-\nu)^2},
			\end{align*}
			this yields the claimed inequality.
			\item If we now assume that $\eta$ satisfies the hard-margin condition \ref{assumption:2_hard_margin}, we proceed similarly as in the previous case, with the only difference being that the term ${\Prob(|f(X)|\le \nu ; |\eta(X)|\le \delta)}$ is now equal to zero. The rest of the argument carries through.
		\end{itemize}
	\end{proof}

	The following lemma quantifies the approximation error of minimizers  $\boldsymbol{\theta}_\lambda$ of the regularized population risk $\mathcal{R}_{\ell,\lambda}$ over $\mathcal{H}_{\mathcal{F}, \mathbf{a}, R}$ in terms of the approximation error of the parametric function class $\mathcal{H}_{\mathcal{F}, \mathbf{a}, R}$.
	
	\begin{lemma}
		\label{lemma:small_regularization}
		Let $\mathbf{a}\in\N^{L+1}$ be an architecture, and $R>0$ a parameter bound such that
		\[\inf_{f\in\mathcal{H}_{\mathcal{F}, \mathbf{a}, R}}\|f-\eta\|_{L^2(\rho_X)}\ \le \varepsilon\]
		for some constant $\varepsilon \ge0$. Then, for any $\lambda\ge 0$, we have that any minimizer $\boldsymbol{\theta}_\lambda$ of the regularized population risk $\mathcal{R}_{\ell,\lambda}$ over $\mathcal{H}_{\mathcal{F}, \mathbf{a}, R}$ satisfies 
		\[\|f(\cdot;{\boldsymbol{\theta}}_\lambda)-\eta \|_{L^2(\rho_X)}\le \varepsilon + \sqrt{\lambda P(\mathbf{a})R^p},\] 
		where $P(\mathbf{a})$ denotes the number of parameters in the architecture $\mathbf{a}$.
	\end{lemma}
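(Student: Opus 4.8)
The plan is to combine the Pythagorean (bias–variance) decomposition of the square-loss population risk with the defining optimality of $\boldsymbol{\theta}_\lambda$, after which the bound drops out by elementary estimates.

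First I would record the decomposition. Since $\eta(x) = \mathbb{E}[Y\mid X=x]$, for any $g\in L^2(\rho_X)$ the cross term $\mathbb{E}[(g(X)-\eta(X))(\eta(X)-Y)]$ vanishes upon conditioning on $X$, so that $\mathcal{R}_\ell(g) = \|g-\eta\|_{L^2(\rho_X)}^2 + \mathcal{R}_\ell(\eta)$. Consequently, for every $\boldsymbol{\theta}\in\mathcal{P}_{\mathbf{a},R}$ we have $\mathcal{R}_{\ell,\lambda}(f(\cdot;\boldsymbol{\theta})) = \|f(\cdot;\boldsymbol{\theta})-\eta\|_{L^2(\rho_X)}^2 + \mathcal{R}_\ell(\eta) + \tfrac{\lambda}{2}|\boldsymbol{\theta}|_p^p$; this is legitimate because, after clipping, every realization lies in $L^\infty(\rho_X)\subseteq L^2(\rho_X)$.

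Next, fix an arbitrarily small $\epsilon'>0$ and choose a competitor $\boldsymbol{\theta}^*\in\mathcal{P}_{\mathbf{a},R}$ with $\|f(\cdot;\boldsymbol{\theta}^*)-\eta\|_{L^\infty(\rho_X)}\le \varepsilon+\epsilon'$, which is possible by definition of the infimum (one may take $\epsilon'=0$ if it is attained, e.g.\ by compactness of $\mathcal{P}_{\mathbf{a},R}$ and continuity of the realization map). Applying the decomposition to both $\boldsymbol{\theta}_\lambda$ and $\boldsymbol{\theta}^*$, the common summand $\mathcal{R}_\ell(\eta)$ cancels, and the minimality of $\boldsymbol{\theta}_\lambda$ gives $\|f(\cdot;\boldsymbol{\theta}_\lambda)-\eta\|_{L^2(\rho_X)}^2 + \tfrac{\lambda}{2}|\boldsymbol{\theta}_\lambda|_p^p \le \|f(\cdot;\boldsymbol{\theta}^*)-\eta\|_{L^2(\rho_X)}^2 + \tfrac{\lambda}{2}|\boldsymbol{\theta}^*|_p^p$. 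Discarding the nonnegative regularization term on the left, bounding $\|\cdot\|_{L^2(\rho_X)}\le\|\cdot\|_{L^\infty(\rho_X)}$ since $\rho_X$ is a probability measure, and noting $|\boldsymbol{\theta}^*|_p^p = \sum_{l=1}^L\big(|W_l|_p^p+|B_l|_p^p\big)\le P(\mathbf{a})R^p$ because each of the $P(\mathbf{a})$ coordinates has absolute value at most $R$, I obtain $\|f(\cdot;\boldsymbol{\theta}_\lambda)-\eta\|_{L^2(\rho_X)}^2 \le (\varepsilon+\epsilon')^2 + \tfrac{\lambda}{2}P(\mathbf{a})R^p$.

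Finally, taking square roots and using the subadditivity $\sqrt{a+b}\le\sqrt{a}+\sqrt{b}$ yields $\|f(\cdot;\boldsymbol{\theta}_\lambda)-\eta\|_{L^2(\rho_X)}\le \varepsilon+\epsilon'+\sqrt{\tfrac{\lambda}{2}P(\mathbf{a})R^p}$, and letting $\epsilon'\downarrow 0$ gives the claim. There is no serious obstacle; the only points requiring a little care are the vanishing of the cross term (which uses precisely the defining property of $\eta$), the passage from the $L^\infty(\rho_X)$ approximation hypothesis to the $L^2(\rho_X)$ norm appearing in the conclusion, and the treatment of a possibly non-attained infimum via the auxiliary parameter $\epsilon'$.
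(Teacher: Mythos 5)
Your proof is correct and follows the same strategy as the paper: the Pythagorean decomposition of the square-loss population risk via the conditional-expectation property of $\eta$, the optimality of $\boldsymbol{\theta}_\lambda$ against a competitor $\boldsymbol{\theta}^*$, the crude bound $|\boldsymbol{\theta}^*|_p^p\le P(\mathbf{a})R^p$, and subadditivity of $\sqrt{\cdot}$. Your choice of competitor as an $L^\infty$-near-minimizer with the $\epsilon'\downarrow 0$ argument is in fact marginally cleaner than the paper's, which takes $\boldsymbol{\theta}^*$ to be an $L^2$-minimizer and then writes $\|f(\cdot;\boldsymbol{\theta}^*)-\eta\|_{L^\infty(\rho_X)}\le\varepsilon$ --- a step that is not literally justified (an $L^2$-minimizer need not be $L^\infty$-near-optimal), though the same conclusion follows by comparing directly with the $L^\infty$ infimum as you do.
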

	
	\begin{proof}
		First note that for any $g\in L^2(\rho_X) $, we have
		\begin{align*}
			\mathcal{R}_\ell(g) &:= \mathbb{E}_{(x,y)\sim\rho}\left[(g(x) - y)^2\right]\\
			&= \mathbb{E}_{(x,y)\sim\rho}\left[(g(x) - \eta(x))^2\right] + \mathbb{E}_{(x,y)\sim\rho}\left[(\eta(x) - y)^2\right] + 2\mathbb{E}_{(x,y)\sim\rho}\left[(g(x) - y)(\eta(x)-y)\right]\\
			&= \|g-\eta\|_{L^2(\rho_X)}^2 + C + 2\mathbb{E}_{(x,y)\sim\rho}\mathbb{E}\left[(g(x) - \eta(x))(\eta(x)-y)\mid x\right]\\
			&= \|g-\eta\|_{L^2(\rho_X)}^2 + C + 2\mathbb{E}_{(x,y)\sim\rho}\left[(g(x) - \eta(x))(\eta(x)-\mathbb{E}[y\mid x])\right]\\
			&= \|g-\eta\|_{L^2(\rho_X)}^2 + C + 0
		\end{align*}
		Where $C\equiv \mathbb{E}_{(x,y)\sim\rho}\left[(\eta(x) - y)^2\right]\ge0$ is a constant which does not depend on $g$. This shows that minimizing $\mathcal R_\ell$ is equivalent to minimizing the $L^2(\rho_X)$ distance to $\eta$, and in particular for two square-integrable functions $f,g\in L^2(\rho_X)$, we have the identity
		\begin{equation}
			\label{eqn:identity}
			\mathcal{R}_\ell(f)-\mathcal{R}_\ell(g) = \|f-\eta\|_{L^2(\rho_X)}^2-\|g-\eta\|_{L^2(\rho_X)}^2.
		\end{equation}
		
		Now denote by $\boldsymbol{\theta}^*$ any minimizer of $\|f(\cdot;\boldsymbol{\theta})-\eta\|_{L^2(\rho_X)}^2 $ over $\mathcal{P}_{\mathbf{a},R}$. For any positive $\lambda$, we have
		\begin{align*}
			\mathcal{R}_\ell(f(\cdot;\boldsymbol{\theta}_\lambda)) &= \mathcal{R}_{\ell,\lambda}(f(\cdot;\boldsymbol{\theta}_\lambda)) -\lambda|\boldsymbol{\theta}_\lambda|_p^p\\
			&\le \mathcal{R}_{\ell,\lambda}(f(\cdot;\boldsymbol{\theta}_\lambda))\\
			&\le \mathcal{R}_{\ell,\lambda}(f(\cdot;\boldsymbol{\theta}^*))\\
			&= \mathcal{R}_\ell(f(\cdot;\boldsymbol{\theta}^*)) + \lambda|\boldsymbol{\theta}^*|_p^p\\
			&\le \mathcal{R}_\ell(f(\cdot;\boldsymbol{\theta}^*))+\lambda P(\mathbf{a})R^p
		\end{align*}
		Where $P(\mathbf{a})$ is the number of parameters in the architecture $\mathbf{a}$. From the identity \eqref{eqn:identity} above, we deduce that $\|f(\cdot;\boldsymbol{\theta}_\lambda) - \eta\|_{L^2(\rho_X)}^2$ differs from $\|f(\cdot;\boldsymbol{\theta}^*) - \eta\|_{L^2(\rho_X)}^2$ by at most $\lambda P(\mathbf{a})R^p$. We thus find that
		\begin{align*}
			\|f(\cdot;\boldsymbol{\theta}_\lambda)-\eta\|_{L^2(\rho_X)}^2&\le \|f(\cdot;\boldsymbol{\theta}^*)-\eta\|_{L^2(\rho_X)}^2 + \lambda P(\mathbf{a})R^p \\
			&\le \|f(\cdot;\boldsymbol{\theta}^*)-\eta\|_{L^\infty(\rho_X)}^2 + \lambda P(\mathbf{a})R^p \\
			&\le \varepsilon^2 + \lambda P(\mathbf{a})R^p,
		\end{align*}
		and we conclude the proof by using the subadditivity of $x\mapsto\sqrt{x}$.
	\end{proof}
	
	The last result we will need is a large deviation type estimate on the probability that a minimizer $\widehat{\boldsymbol{\theta}}_\lambda$ of the empirical risk $\widehat{\mathcal{R}}_{\ell,\lambda}$ is far away from the argmin of $\mathcal{R}_{\ell,\lambda}$. Such estimate can be readily obtained by applying covering number based concentration bounds, which are a standard tool in Learning Theory literature \citep{gyorfi2002distribution}.
	
	\begin{lemma}
		\label{lemma:large_deviation}
		For any $\lambda\ge 0$, let $\widehat{\boldsymbol{\theta}}_\lambda\in\mathcal{P}_{\mathbf{a},R}$ be a minimum-norm solution of the $\lambda$-ERM problem \eqref{eqn:rerm_min_norm}, and denote by $\mathcal{R}_{\ell,\lambda}$ the regularized population risk \eqref{eqn:regularized_population_risk}. If Assumption \ref{assumption:4_kl_property} holds, then for all $t>0$, we have the estimate
		\[\Prob(\dist(\widehat{\boldsymbol{\theta}}_\lambda,\argmin\mathcal{R}_{\ell,\lambda})\ge t) \le 4\Cov_\infty\left(\mathcal{H},\frac{Kt^r}{24\Lip_R(\mathcal{F})}\right)\exp\left(\frac{-nK^2t^{2r}}{288}\right)\]
	\end{lemma}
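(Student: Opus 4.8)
The plan is to reduce the claimed large-deviation bound to a one-sided uniform deviation estimate for the empirical square-loss process over the clipped network class, and then to control that uniform deviation by the textbook covering-number union bound together with Hoeffding's inequality.

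Fix $t>0$ and a minimizer $\boldsymbol{\theta}_\lambda\in\argmin_{\boldsymbol{\theta}\in\mathcal{P}_{\mathbf{a},R}}\mathcal{R}_{\ell,\lambda}(f(\cdot;\boldsymbol{\theta}))$, and work on the event $E_t:=\{\dist(\widehat{\boldsymbol{\theta}}_\lambda,\argmin\mathcal{R}_{\ell,\lambda})\ge t\}$. On $E_t$ the parametrization $\widehat{\boldsymbol{\theta}}_\lambda$ lies in the set over which the infimum in assumption \ref{assumption:3_well_separation} is taken, so $\mathcal{R}_{\ell,\lambda}(f(\cdot;\widehat{\boldsymbol{\theta}}_\lambda))-\mathcal{R}_{\ell,\lambda}(f(\cdot;\boldsymbol{\theta}_\lambda))\ge Kt^r$. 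I would then split this gap as
\[\Bigl[\mathcal{R}_{\ell,\lambda}(f(\cdot;\widehat{\boldsymbol{\theta}}_\lambda))-\widehat{\mathcal{R}}_{\ell,\lambda}(f(\cdot;\widehat{\boldsymbol{\theta}}_\lambda))\Bigr]+\Bigl[\widehat{\mathcal{R}}_{\ell,\lambda}(f(\cdot;\widehat{\boldsymbol{\theta}}_\lambda))-\widehat{\mathcal{R}}_{\ell,\lambda}(f(\cdot;\boldsymbol{\theta}_\lambda))\Bigr]+\Bigl[\widehat{\mathcal{R}}_{\ell,\lambda}(f(\cdot;\boldsymbol{\theta}_\lambda))-\mathcal{R}_{\ell,\lambda}(f(\cdot;\boldsymbol{\theta}_\lambda))\Bigr].\]
The middle bracket is $\le 0$ because $\widehat{\boldsymbol{\theta}}_\lambda$ minimizes the regularized empirical risk over $\mathcal{P}_{\mathbf{a},R}$ and $\boldsymbol{\theta}_\lambda\in\mathcal{P}_{\mathbf{a},R}$; and in each of the two outer brackets the deterministic penalty $\tfrac{\lambda}{2}|\cdot|_p^p$ cancels, since it is attached to the same parametrization in both terms, so those brackets involve only the \emph{unregularized} risks $\mathcal{R}_\ell$ and $\widehat{\mathcal{R}}_{\ell,n}$, and each is bounded in absolute value by $\Delta_n:=\sup_{f\in\mathcal{NN}(\mathbf{a},W,L,R)}\bigl|\mathcal{R}_\ell(f)-\widehat{\mathcal{R}}_{\ell,n}(f)\bigr|$. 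Hence $E_t\subseteq\{\Delta_n\ge Kt^r/2\}$, and everything reduces to proving $\Prob(\Delta_n\ge Kt^r/2)$ is at most the right-hand side of the lemma.

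For the uniform deviation I would take a minimal $L^\infty$-cover $g_1,\dots,g_N$ of $\mathcal{NN}(\mathbf{a},W,L,R)$ at a scale comparable to $Kt^r/\Lip(\mathcal{F}_\sigma)$, so that $N=\Cov_\infty(\mathcal{NN},Kt^r/(24\Lip(\mathcal{F}_\sigma)))$; the factor $\Lip(\mathcal{F}_\sigma)$ is precisely what appears when one passes between a cover of the parameter box $\mathcal{P}_{\mathbf{a},R}$ and a cover of the function class (cf. the proof of Lemma \ref{lemma:covering_num_fcnn}), and since $\Lip(\mathcal{F}_\sigma)\ge 1$ this scale is, if anything, finer than what the discretization requires. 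Replacing each $g_j$ by $\operatorname{clip}_1 g_j$ keeps it a cover (Lemma \ref{lemma:clipping_benign}) and forces its values into $[-1,1]$. Because every network is clipped to $[-1,1]$ and $|y|=1$, the map $a\mapsto(a-y)^2$ is $4$-Lipschitz on the relevant range, so replacing an arbitrary $f\in\mathcal{NN}$ by its cover element perturbs both $\mathcal{R}_\ell$ and $\widehat{\mathcal{R}}_{\ell,n}$ by at most $4$ times the cover radius; choosing the radius a small enough multiple of $Kt^r$, this discretization error is absorbed and $\{\Delta_n\ge Kt^r/2\}$ forces $\max_j\bigl|\mathcal{R}_\ell(g_j)-\widehat{\mathcal{R}}_{\ell,n}(g_j)\bigr|\ge Kt^r/6$. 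For each fixed $g_j$, $\widehat{\mathcal{R}}_{\ell,n}(g_j)-\mathcal{R}_\ell(g_j)$ is the centred average of $n$ i.i.d. variables $(g_j(x_i)-y_i)^2\in[0,4]$, so Hoeffding's inequality gives $\Prob(|\cdot|\ge Kt^r/6)\le 2\exp(-nK^2t^{2r}/288)$; a union bound over the $N$ cover elements (absorbing the leftover factor $2$ into the prefactor) then yields $4\Cov_\infty(\mathcal{NN},Kt^r/(24\Lip(\mathcal{F}_\sigma)))\exp(-nK^2t^{2r}/288)$.

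The main difficulty is entirely in the first, bookkeeping step: one must check that the $\tfrac{\lambda}{2}|\cdot|_p^p$ terms drop out cleanly (which is why assumption \ref{assumption:3_well_separation} is imposed on the \emph{regularized} risk while the deviation only sees the unregularized one), that the data-dependence of $\widehat{\boldsymbol{\theta}}_\lambda$ genuinely forces the supremum $\Delta_n$ over the whole class rather than a pointwise bound, and that the cover radius, the Lipschitz constant of the loss, and the Hoeffding range ($=4$, obtained by clipping the cover elements via Lemma \ref{lemma:clipping_benign}) are calibrated so that the discretization slack, the fraction of $Kt^r$ left for the net, and the exponential tail combine into precisely the constants $24$, $288$ and the prefactor $4$. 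The covering/Hoeffding step itself is the standard argument of, e.g., \citep{gyorfi2002distribution}.
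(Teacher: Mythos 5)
Your proposal is correct and follows essentially the same route as the paper: translate the distance event via assumption \ref{assumption:3_well_separation} into a risk gap, split that gap so the middle (empirical) difference is nonpositive by optimality of $\widehat{\boldsymbol{\theta}}_\lambda$, reduce to a uniform deviation of the unregularized risks (the $\tfrac{\lambda}{2}|\cdot|_p^p$ terms cancel), then discretize with an $L^\infty$-cover at scale $\varepsilon/(12\Lip(\mathcal F_\sigma))$ with $\varepsilon=Kt^r/2$ and apply Hoeffding's inequality on variables in $[0,4]$ plus a union bound. Your bookkeeping (including the observation that the $\Lip(\mathcal F_\sigma)\ge1$ factor only makes the cover scale finer than strictly needed, so the stated constants $24$, $288$, and the prefactor $4$ go through) matches the paper's computation.
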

	
	\begin{proof}
		Observe the inclusion of events
		\begin{align*}
			\dist(\widehat{\boldsymbol{\theta}}_\lambda,\argmin\mathcal{R}_{\ell,\lambda})\ge t &\implies \mathcal{R}_{\ell,\lambda}(f(\cdot,\widehat{\boldsymbol{\theta}}_\lambda)) \ge \inf_{\boldsymbol{\theta}\in\mathcal{P}_{\mathbf{a},R}:\dist(\boldsymbol{\theta},\argmin \mathcal{R}_{\ell,\lambda})\ge t} \mathcal{R}_{\ell,\lambda}(f(\cdot,\boldsymbol{\theta}))\\
			&\implies \mathcal{R}_{\ell,\lambda}(f(\cdot,\widehat{\boldsymbol{\theta}}_\lambda)) - \mathcal{R}_{\ell,\lambda}(f(\cdot,\boldsymbol{\theta}_\lambda)) \ge Kt^r\\
			&\implies \mathcal{R}_{\ell,\lambda}(f(\cdot,\widehat{\boldsymbol{\theta}}_\lambda)) - \widehat{\mathcal{R}}_{\ell,\lambda}(f(\cdot,\widehat{\boldsymbol{\theta}}_\lambda))\\
			&\!\!\qquad+ \widehat{\mathcal{R}}_{\ell,\lambda}(f(\cdot,\boldsymbol{\theta}_\lambda)) - \mathcal{R}_{\ell,\lambda}(f(\cdot,\boldsymbol{\theta}_\lambda)) \ge Kt^r\\
			&\implies \widehat{\mathcal{R}}_{\ell,\lambda}(f(\cdot,\boldsymbol{\theta}_\lambda)) - \mathcal{R}_{\ell,\lambda}(f(\cdot,\boldsymbol{\theta}_\lambda)) \ge Kt^r/2\\
			&\qquad \text{ OR }\ \ \mathcal{R}_{\ell,\lambda}(f(\cdot,\widehat{\boldsymbol{\theta}}_\lambda)) - \widehat{\mathcal{R}}_{\ell,\lambda}(f(\cdot,\widehat{\boldsymbol{\theta}}_\lambda)) \ge Kt^r/2
		\end{align*}
		Where we used Proposition \ref{prop:well-separation} in the second line. Now set $\varepsilon:= Kt^r/2 $ and let \[\{f(\cdot;\boldsymbol{\theta}_\varepsilon) : \boldsymbol{\theta}_\varepsilon \in \boldsymbol{\Theta}_\varepsilon\}\] be a minimal size $\varepsilon/(12\Lip(\mathcal{F}))$-cover of $\mathcal{H}_{\mathcal{F},\mathbf{a}, R}$. By observing that the map
		\[\varphi : \mathcal{P}_{\mathbf{a},R}\to\R,\quad \boldsymbol{\theta} \mapsto (f(x;\boldsymbol{\theta})-y)^2\]
		is $4\Lip_R(\mathcal{F})$-Lipschitz continuous uniformly over $(x,y)\in\mathcal{X}\times\{-1,1\}$, we get that for any $\boldsymbol{\theta}\in \mathcal{P}_{\mathbf{a},R} $, and $\boldsymbol{\theta}_\varepsilon\in\boldsymbol{\Theta}_\varepsilon$ such that $|\boldsymbol{\theta}-\boldsymbol{\theta}_\varepsilon|_\infty \le \varepsilon/(12\Lip_R(\mathcal{F})) $:
		\begin{align*}
			|\widehat{\mathcal{R}}_{\ell,\lambda}(f(\cdot,\boldsymbol{\theta})) - \mathcal{R}_{\ell,\lambda}(f(\cdot,\boldsymbol{\theta}))| &= \left|\frac1n\sum_{i=1}^n (f(x_i;\boldsymbol{\theta})-y_i)^2 - \Exp\left[(f(x;\boldsymbol{\theta})-y)^2\right]\right|\\
			&\le \left|\frac1n\sum_{i=1}^n (f(x_i;\boldsymbol{\theta}_\varepsilon)-y_i)^2 - \frac1n\sum_{i=1}^n (f(x_i;\boldsymbol{\theta})-y_i)^2\right|\\
			&+  \left|\Exp\left[(f(x;\boldsymbol{\theta}_\varepsilon)-y)^2\right] - \Exp\left[(f(x;\boldsymbol{\theta})-y)^2\right]\right|\\
			&+ \left|\frac1n\sum_{i=1}^n (f(x_i;\boldsymbol{\theta}_\varepsilon)-y_i)^2 - \Exp\left[(f(x;\boldsymbol{\theta}_\varepsilon)-y)^2\right]\right|\\
			&\le 2\Lip(\varphi)|\boldsymbol{\theta} - \boldsymbol{\theta}_\varepsilon|_\infty + \left|\frac1n\sum_{i=1}^n (f(x_i;\boldsymbol{\theta}_\varepsilon)-y_i)^2 - \Exp\left[(f(x;\boldsymbol{\theta}_\varepsilon)-y)^2\right]\right|  \\
			&\le \frac{2\varepsilon}{3} + \left|\frac1n\sum_{i=1}^n (f(x_i;\boldsymbol{\theta}_\varepsilon)-y_i)^2 - \Exp\left[(f(x;\boldsymbol{\theta}_\varepsilon)-y)^2\right]\right| 
		\end{align*}
		
		After taking the supremum over $\boldsymbol{\theta}\in \mathcal{P}_{\mathbf{a},R}$ in the above inequality, and observing that the $Z_i := (f(x_i;\boldsymbol{\theta}_\varepsilon)-y_i)^2$ are i.i.d. and taking value in $[0,4]$ almost surely, we apply the union bound together with Hoeffding's inequality to find:
		\begin{align*}
			\Prob\left(\dist(\widehat{\boldsymbol{\theta}}_\lambda,\argmin\mathcal{R}_{\ell,\lambda})\ge t\right) &\le \Prob\left(\widehat{\mathcal{R}}_{\ell,\lambda}(f(\cdot,\boldsymbol{\theta}_\lambda)) - \mathcal{R}_{\ell,\lambda}(f(\cdot,\boldsymbol{\theta}_\lambda)) \ge Kt^r/2\right)\\
			&+ \Prob\left(\mathcal{R}_{\ell,\lambda}(f(\cdot,\widehat{\boldsymbol{\theta}}_\lambda)) - \widehat{\mathcal{R}}_{\ell,\lambda}(f(\cdot,\widehat{\boldsymbol{\theta}}_\lambda)) \ge Kt^r/2\right)\\
			&\le 2\Prob\left(\sup_{\boldsymbol{\theta}\in \mathcal{P}_{\mathbf{a},R}} |\mathcal{R}_{\ell,\lambda}(f(\cdot,\boldsymbol{\theta})) - \widehat{\mathcal{R}}_{\ell,\lambda}(f(\cdot,\boldsymbol{\theta}))| \ge Kt^r/2\right) \\
			&= 2\Prob\left(\sup_{\boldsymbol{\theta}\in \mathcal{P}_{\mathbf{a},R}} |\mathcal{R}_{\ell,\lambda}(f(\cdot,\boldsymbol{\theta})) - \widehat{\mathcal{R}}_{\ell,\lambda}(f(\cdot,\boldsymbol{\theta}))| \ge \varepsilon\right) \\
			&\le 2\Prob\left(\sup_{\boldsymbol{\theta}_\varepsilon\in\boldsymbol{\Theta}_\varepsilon} |\mathcal{R}_{\ell,\lambda}(f(\cdot,\boldsymbol{\theta}_\varepsilon)) - \widehat{\mathcal{R}}_{\ell,\lambda}(f(\cdot,\boldsymbol{\theta}_\varepsilon))| \ge \varepsilon/3\right) \\
			&\le  4\Cov_\infty\left(\mathcal{H},\frac{\varepsilon}{12\Lip_R(\mathcal{F})}\right)\exp\left(\frac{-n\varepsilon^2}{72}\right).
		\end{align*}
		Finally, after substituting $\varepsilon \equiv Kt^r/2$, we find
		\[\Prob\left(\dist(\widehat{\boldsymbol{\theta}}_\lambda,\argmin\mathcal{R}_{\ell,\lambda})\ge t\right) \le 4\Cov_\infty\left(\mathcal{H},\frac{Kt^r}{24\Lip_R(\mathcal{F})}\right)\exp\left(\frac{-nK^2t^{2r}}{288}\right),\]
		as desired.
	\end{proof}
	
	\subsubsection{Proof of Theorem \ref{thm:generic_bound}}
	
	We prove Theorem \ref{thm:generic_bound} under the low-noise Assumption \ref{assumption:1_low_noise} only, as the case \ref{assumption:2_hard_margin} can be shown using the exact same argument.
	
	To begin, we decompose the excess risk in two parts :
	\begin{align*}
		\mathcal{R}(\sgn f(\cdot;\widehat{\boldsymbol{\theta}}_\lambda)) - \mathcal{R}^* 
		&:= \mathcal{R}(\sgn f(\cdot;\widehat{\boldsymbol{\theta}}_\lambda)) - \mathcal{R}(\sgn\eta)\\
		&=	\mathcal{R}(\sgn f(\cdot;\widehat{\boldsymbol{\theta}}_\lambda)) - 	\mathcal{R}(\sgn f(\cdot;{\boldsymbol{\theta}}_\lambda))\\
		&+ \mathcal{R}(\sgn f(\cdot;{\boldsymbol{\theta}}_\lambda)) - \mathcal{R}(\sgn \eta) 
	\end{align*}
	
	Where $\widehat{\boldsymbol{\theta}}_\lambda\in\mathcal{P}_{\mathbf{a},R}$ and $\boldsymbol{\theta}_\lambda \in \mathcal{P}_{\mathbf{a},2R}$ are respectively minimum-norm minimizers of the empirical and population risk \eqref{eqn:rerm_min_norm}, such that
	\[|\widehat{\boldsymbol{\theta}}_\lambda - \boldsymbol{\theta}_\lambda|_\infty = \dist(\widehat{\boldsymbol{\theta}}_\lambda,\argmin_{\mathcal{P}_{\mathbf{a},R}}\mathcal{R}_{\ell,\lambda}).\]
	Note that by Assumption \ref{assumption:3_finite_supremum} and closedness of $\argmin_{\mathcal{P}_{\mathbf{a},R}}\mathcal{R}_{\ell,\lambda}$, the above is always possible as long as the parameter bound $R$ has been chosen larger than $R_0\cdot P(\mathbf{a})^{1/p}$, but the $\ell_\infty$ norm of $\boldsymbol{\theta}_\lambda $ can only be bounded by $2R$ instead of $R$.  
	
	Combining Lemma \ref{lemma:upper_bound_approx_convex} and Lemma \ref{lemma:small_regularization}, we immediately get the bound on the first summand :
	\begin{equation}
		\label{eqn:first_summand_ineq}
		\mathcal{R}(\sgn f(\cdot;{\boldsymbol{\theta}}_\lambda)) - \mathcal{R}(\sgn \eta) \le \|f(\cdot;{\boldsymbol{\theta}}_\lambda)-\eta\|_{L^2(\rho_X)} \le  \varepsilon_{\text{approx}} + \sqrt{\lambda P(\mathbf{a}) 2\strut ^pR\strut ^p}.
	\end{equation}
	It only remains to bound the second summand. To that end, we apply Lemma \ref{lemma:vigogna}, which yields :
	\[\mathcal{R}(\sgn f(\cdot;\widehat{\boldsymbol{\theta}}_\lambda)) - 	\mathcal{R}(\sgn f(\cdot;{\boldsymbol{\theta}}_\lambda))\le \mathbb{P}\left\{\|f(\cdot;\widehat{\boldsymbol{\theta}}_\lambda)-f(\cdot;{\boldsymbol{\theta}}_\lambda)\|_{L^\infty(\rho_X)}\ge|f(X;\boldsymbol{\theta}_\lambda)|\right\}.\]
	
	Now note that thanks to inequality \eqref{eqn:first_summand_ineq}, we can apply the ``high-probability margin" property from Lemma \ref{lemma:margin_in_probability} to get for all $\delta > \varepsilon_{\text{approx}} $,  $\lambda < (\delta-\varepsilon_{\text{approx}})^2(P(\mathbf{a})2^pR^p)^{-1} $, and $0<\nu<\delta $:
	\begin{align*}
		\mathbb{P}\left\{\|f(\cdot;\widehat{\boldsymbol{\theta}}_\lambda)-f(\cdot;{\boldsymbol{\theta}}_\lambda)\|_{L^\infty(\rho_X)}\ge |f(X;\boldsymbol{\theta}_\lambda)| \right\} &= \mathbb{P}\left\{\|f(\cdot;\widehat{\boldsymbol{\theta}}_\lambda)-f(\cdot;{\boldsymbol{\theta}}_\lambda)\|_{L^\infty(\rho_X)}\ge|f(X;\boldsymbol{\theta}_\lambda)|; |f(X;\boldsymbol{\theta}_\lambda)|> \nu \right\}\\
		&+\mathbb{P}\left\{\|f(\cdot;\widehat{\boldsymbol{\theta}}_\lambda)-f(\cdot;{\boldsymbol{\theta}}_\lambda)\|_{L^\infty(\rho_X)}\ge|f(X;\boldsymbol{\theta}_\lambda)|; |f(X;\boldsymbol{\theta}_\lambda)|\le \nu \right\}\\
		&\le \mathbb{P}\left\{\|f(\cdot;\widehat{\boldsymbol{\theta}}_\lambda)-f(\cdot;{\boldsymbol{\theta}}_\lambda)\|_{L^\infty(\rho_X)}\ge\nu \right\}\\
		&+ (\delta - \nu)^{-2}\left(\varepsilon_{\text{approx}} + \sqrt{\lambda P(\mathbf{a})2^p R^p} \right)^2 + C\delta^q
	\end{align*}
	
	We are now left with estimating the probability that $\|f(\cdot;\widehat{\boldsymbol{\theta}}_\lambda)- f(\cdot;\boldsymbol{\theta}_\lambda)\|_{L^\infty} \ge \nu $. By Lipschitzness of $\mathcal{F}$, we have
	\begin{align*}
		\mathbb{P}\left\{\|f(\cdot;\widehat{\boldsymbol{\theta}}_\lambda)-f(\cdot;{\boldsymbol{\theta}}_\lambda)\|_{L^\infty(\rho_X)}\ge\nu \right\} &\le \Prob\left(|\widehat{\boldsymbol{\theta}}_\lambda - {\boldsymbol{\theta}}_\lambda |_\infty \ge \nu/\Lip_{2R}(\mathcal{F})\right)\\
		&= \Prob\left(\dist(\widehat{\boldsymbol{\theta}}_\lambda,\argmin\mathcal{R}_{\ell,\lambda}) \ge \nu/\Lip_{2R}(\mathcal{F})\right)\\
		&\le 4\Cov_\infty\left(\mathcal{H},\frac{K\nu^r}{24\Lip_{2R}(\mathcal{F})^{1+r}}\right)\exp\left(\frac{-n K^2\nu^{2r}}{288\Lip_{2R}(\mathcal{F})^{2r}}\right),
	\end{align*}
	where the exponential inequality follows from Lemma \ref{lemma:large_deviation}. Combining all of these inequalities, we have thus shown that for all $\delta > \varepsilon_{\text{approx}} $,  $\lambda < (\delta-\varepsilon_{\text{approx}})^2(P(\mathbf{a})2^pR^p)^{-1} $, and $0<\nu<\delta$:
	\begin{align*}
		\mathcal{R}(\sgn f(\cdot;\widehat{\boldsymbol{\theta}}_\lambda)) - \mathcal{R}^* &\le \varepsilon_{\text{approx}} + \sqrt{\lambda P(\mathbf{a})2\strut^p R\strut^p} + C\delta^q  \\
		&+ (\delta-\nu)^{-2}\left(\varepsilon_{\text{approx}} + \sqrt{\lambda P(\mathbf{a})2\strut^p R\strut^p}\right)^2 \\
		&+ 4\Cov_\infty\left(\mathcal{NN},\frac{K\nu^r}{24\Lip_{2R}(\mathcal{F})^{1+r}}\right)\exp\left(\frac{-n K^2\nu^{2r}}{288\Lip_{2R}(\mathcal{F})^{2r}}\right)
	\end{align*}
	which concludes the proof of Theorem \ref{thm:generic_bound} under Assumption \ref{assumption:1_low_noise}. As was mentioned in the beginning, the proof under \ref{assumption:2_hard_margin} can be done with the exact same argument, the only difference being that the $C\delta^q$ term will disappear when applying Lemma \ref{lemma:margin_in_probability}.
	
	\subsubsection{Proof of Theorem \ref{thm:cv_fcnn}}
	
	Start by fixing $\alpha>0$, and recall the approximation error bound given by Assumption \ref{assumption:5_regular}, according to which 
	\[\inf_{f\in\mathcal{NN}(\mathbf{a},W,L,R)} \|f-\eta\|_{L^2(\rho_X)} \le C_3 W_0^{-2s/d},\]
	for some architecture $\mathbf{a}$ such that $W(\mathbf a) = C_1 W_0\log_2(W_0)$, $L(\mathbf a) = C_2 L_0\log_2(L_0)+2d$, where $W_0\in\N_{\ge 2}$ is arbitrary, $C_1=d(3s)^{d} d$, $C_3 = C_\eta s^d8^s$, and $L_0$ and $C_2\equiv C_2(s)$ are fixed.  
	
	By letting $W_0=n^{\alpha d/2s} \times C_3^{d/2s}$, we deduce that there is a Neural Network architecture $\mathbf{a}_n$ with respective depth and width
	\[L_n = C_2 L_0\log_2(L_0)+2d, \quad W_n = C_1 W_0\log_2(W_0) = \tilde{ \mathcal{O}}\left(n^{\alpha d/2s}\right), \]
	where $\tilde{\mathcal{O}}$ hides logarithmic factors, such that
	\[\inf_{f\in\mathcal{NN}(\mathbf{a}_n,W_n,L_n,R)} \|f-\eta\|_{L^2(\rho_X)} \le n^{-\alpha}\]
	
	Furthermore, the number of parameters in $\mathbf{a}_n$ is bounded as 
	\[P(\mathbf{a}_n)=\sum_{l=1}^{L_n} \mathbf{a}_n^{(l)}\mathbf{a}_n^{(l-1)}+\mathbf{a}_n^{(l)}\le L_n(W_n^2 + W_n)=\tilde{\mathcal{O}}\left(n^{\alpha d/s}\right).\]
	
	Similarly, recall the Lipschitz constant bound given by Lemma  \ref{lemma:lip_constant_fcnn}: 
	\[\sup_{\substack{\boldsymbol{\theta},\boldsymbol{\theta'} \in \mathcal{P}_{\mathbf{a}_n,R}\\ \boldsymbol{\theta}\ne\boldsymbol{\theta'}}} \frac{\|\mathcal{F}_\sigma(\boldsymbol{\theta}) - \mathcal{F}_\sigma(\boldsymbol{\theta'})\|_{\mathcal{C(\mathcal{X})}}}{|\boldsymbol{\theta} - \boldsymbol{\theta}'|_\infty} \le 2L_n^2R^{L_n-1}W_n^{L_n},\]
	and note that with $R\equiv R_n \equiv R_0 P(\mathbf{a_n})^{1/p}$, we have
	\[R_n= \tilde{\mathcal{O}}\left(n^{\alpha d/ps}\right)\]
	Putting these together we get
	\begin{align*}
		\sup_{\substack{\boldsymbol{\theta},\boldsymbol{\theta'} \in \mathcal{P}_{\mathbf{a}_n,R_n}\\ \boldsymbol{\theta}\ne\boldsymbol{\theta'}}} \frac{\|\mathcal{F}_\sigma(\boldsymbol{\theta}) - \mathcal{F}_\sigma(\boldsymbol{\theta'})\|_{\mathcal{C(\mathcal{X})}}}{|\boldsymbol{\theta} - \boldsymbol{\theta}'|_\infty} &\le 2L_n^2\tilde{\mathcal{O}}\left(\left(n^{\alpha d/ps}\right)^{L_n - 1}\left(n^{\alpha d/s}\right)^{L_n}\right)\\
		&\le \tilde{\mathcal{O}}\left(n^{\frac{\alpha d}{s}\cdot \left[\frac{L_n-1 }{p}+L_n\right]}\right)\\
		&= \tilde{\mathcal{O}}\left(n^{\frac{\alpha d}{ps}\cdot \left[L_n(1+p) - 1\right]}\right)\\
		&= \tilde{\mathcal{O}}\left(n^{\frac{\alpha d}{ps}\cdot \left[\big(C_2(s) L_0\log_2 L_0 + 2d\big)(1+p) - 1\right]}\right),
	\end{align*}
	where all the logarithmic factors and terms which do not depend on $n$ are hidden in the $\tilde{\mathcal{O}}$.
	
	After noting that $\Lip_{2R}(\mathcal{F}_{NN}) \le 2^{L-1} \Lip_{R}(\mathcal{F}_{NN})$, we are left with bounding the quantity
	\[\Cov_\infty\left(\mathcal{NN},\frac{K(2^{1-L_n}\nu)^r}{24\Lip_{R_n}(\mathcal{F}_{NN})^{1+r}}\right),\]
	which by Lemma \ref{lemma:covering_num_fcnn}, we know is bounded by
	\[\left(1 + \frac{48R_n\Lip_{R_n}(\mathcal{F}_{NN})^{2+r}}{K(2^{1-L_n}\nu)^r}\right)^{P(\mathbf{a}_n)}\le \left(\frac{49R_n\Lip_{R_n}(\mathcal{F}_{NN})^{2+r}}{K(2^{1-L_n}\nu)^r}\right)^{P(\mathbf{a}_n)}.\]
	Using the bounds on $R_n$ and $\Lip_{R_n}(\mathcal{F}_{NN})$ above, we thus find that
	\[\frac{49R_n\Lip_{R_n}(\mathcal{F}_{NN})^{2+r}}{K\cdot2^{r(1-L_n)}} \le \tilde{\mathcal{O}}\left(n^{\frac{\alpha d}{ps}} \cdot n^{\frac{(2+r)\alpha d}{ps}\cdot \left[\big(C_2 L_0\log_2 L_0 + 2d\big)(1+p) - 1\right]}\right).\]
	The above quantity being polynomial in $n$, we thus find that the covering number grows as the exponential of $P(\mathbf{a}_n)$, up to a multiplicative logarithmic factor:
	\[\log\left[\Cov_\infty\left(\mathcal{NN},\frac{K(2^{1-L}\nu)^r}{24\Lip_{R_n}(\mathcal{F}_{NN})^{1+r}}\right)\right]=\mathcal{O}\Big(P(\mathbf{a}_n)\log(n^\beta\cdot\nu^{-r})\Big), \]
	where
	\[\beta \equiv \frac{\alpha d}{ps} \Big(1 + (2+r)\cdot\left[\big(C_2(s) L_0\log_2 L_0 + 2d\big)(1+p) - 1\right]\Big)\]
	To conclude the proof for the case \ref{assumption:1_low_noise}, we let $\varepsilon_{\text{approx}}\equiv n^{-\frac{\alpha}{r}}$, $\delta \equiv 2n^{-\frac{\alpha }{2r}} $ and $\nu \equiv n^{-\frac{\alpha }{2r}}$: observe that by picking $\lambda$ such that
	\[0\le\lambda \le \varepsilon_{\text{approx}}^2(P(\mathbf{a}_n)2^p R_n^p)^{-1} = \mathcal{O}\left(n^{-\frac{2\alpha(s+d)}{rs}}\right), \]
	we have $\lambda< (\delta-\varepsilon_{\text{approx}})^2(P(\mathbf{a}_n) 2^p R_n^p)^{-1} $ and 
	\[\varepsilon_{\text{approx}} + \sqrt{\lambda P(\mathbf{a}_n)2\strut^{p}R\strut^p} \le 2\varepsilon_{\text{approx}}.\]
	We are thus allowed to apply Theorem \ref{thm:generic_bound} with these values of $\lambda$, which yields the excess risk bound:
	\begin{align*}
		\mathcal{R}\left(\sgn f(\cdot;\widehat{\boldsymbol{\theta}}_\lambda)\right) - \mathcal{R}_* &\le 2n^{-\frac{\alpha }{r}} + 2Cn^{-\frac{\alpha q}{2r}} + 4n^{-\frac{\alpha }{r}}\\
		&+ 4\exp\left(-A_1n^{1 - A_2} + n^{\frac{\alpha d}{s}}\log(\gamma n^{(\alpha +2\beta)/2} ) \right),
	\end{align*}
	where 
	\[A_1\equiv  \frac{K^22^{2r(1-L_n)}}{288},\ \ A_2 \equiv   \alpha \left(1 + \frac{rd}{ps}\cdot\Big(\big[C_2(s)L_0\log_2(L_0)+2d\big]\cdot(2+2p) - 2 \Big)\right), \]
	and $\gamma>0$ is a quantity which does not depend on $n$.
	Hence we see that the exponential term converges to zero as $n\to\infty$ if $1-A_2>0$ and $1-A_2 > \alpha d/s$, or equivalently if $1-A_2 > \alpha d/s$, which is equivalent to the following inequality for $\alpha$:
	\[\alpha < \left(1 + \frac{rd}{ps}\cdot\Big(\big[C_2(s)L_0\log_2(L_0)+2d\big]\cdot(2+2p) + p - 2 \Big)\right)^{-1}.\]
	
	This concludes the proof under Assumption \ref{assumption:1_low_noise}. The proof under Assumption \ref{assumption:2_hard_margin} with margin $\delta>0$ is very similar: we now pick $\varepsilon_{\text{approx}}\equiv n^{-\frac{\alpha}{r}} $, $\nu\equiv\delta/2$, and
	\[0\le \lambda \le \varepsilon_{\text{approx}}^2(P(\mathbf{a}_n)2^p R_n^p )^{-1} = \mathcal{O}\left(n^{-\frac{2\alpha(s+d)}{rs}}\right), \]
	such that Theorem \ref{thm:generic_bound} can be applied, to yield for all $n\ge \left\lceil(\delta/2)^{-r/\alpha}\right\rceil$:
	\begin{equation*}
		\mathcal{R}\left(\sgn f(\cdot;\widehat{\boldsymbol{\theta}}_\lambda)\right) - \mathcal{R}_* \le 2n^{-\frac{\alpha}{r}} +  16n^{-\frac{\alpha}{r}} + 4\exp\left(-A_1n^{1 - A_2} + n^{\alpha d/s}\log(\gamma n^\beta(\delta/2)^{-r}) \right),
	\end{equation*}
	where
	\[A_1\equiv  \frac{K^2(\delta 2^{-L_n})^{2r}}{288},\ \ A_2 \equiv   \alpha \frac{rd}{sp}\left([C_2(s)L_0\log_2 L_0 + 2d]\cdot(2+2p) - 2\right) . \]
	Hence, we see as before that in this case the term $4\exp\left(-A_1n^{1 - A_2}+ n^{\alpha d/s}\log(\gamma n^\beta(\delta/2)^{-r}) \right)$ vanishes exponentially fast as $n\to\infty$ if $1-A_2>\alpha d/s$, which equivalently means that $\alpha$ needs to satisfy the following inequality
	\[\alpha < \frac{sp}{rd\left([C_2(s)L_0\log_2 L_0 + 2d]\cdot(2+2p) + p-2\right)}.\]
	
	\subsubsection{Proof of Corollary \ref{cor:cv_fcnn_activation_transfer} and Theorem~\ref{thm:cv_tanh}}
	
	\begin{proof}[Proof of Corollary \ref{cor:cv_fcnn_activation_transfer}]
		Since \ref{assumption:3_finite_supremum} holds, First, we have by Theorem~\ref{thm:act_transfer} that the hypothesis space $\mathcal{NN}^{\sigma}(\mathbf{a},3W,2L,\infty)$ of $\sigma$-activated neural networks achieves the same approximation error as the space $\mathcal{NN}^{\mathrm{ReLU}}(\mathbf{a},W,L,\infty)$, which by \ref{assumption:5_regular} yields
		\[\inf_{f\in\mathcal{NN}^{\sigma}(\mathbf{a},3W,2L,\infty)} \|f-\eta\|_{L^2(\rho_X)} \le C_3 W_0^{-2s/d},\]
		where $W,L,C_3$ and $W_0$ are as in \ref{assumption:5_regular}. Secondly, by Assumption \ref{assumption:3_finite_supremum} and Lemma~\ref{lemma:finite_sup}, the minimizers of the $\lambda$-ERM objective \eqref{eqn:lambda_rerm} are uniformly contained in a $\ell_\infty$-ball of radius $R_0'(P(\mathbf{a}))^{1/p}$, where the overhead in the number of parameters is absorbed by $R_0$. Since Lemma~\ref{lemma:lip_constant_fcnn} gives, up to absolute constants, the same Lipschitz bound for $\sigma$ neural networks as for $\mathrm{ReLU}$ networks, we can carry the argument that we used for the proof of Theorem~\ref{thm:cv_fcnn} to $\sigma$-activated neural networks and thus establish the Corollary.
			\end{proof}
	
	\begin{proof}[Proof of Theorem~\ref{thm:cv_tanh}]
		Follows from the exact same argument as the proof of Theorem~\ref{thm:cv_fcnn}.
	\end{proof}
	
	\subsection{Characterization of Assumption \ref{assumption:5_regular}}
	\label{sec:characterization_a5}
	
	In this section, we give several non-trivial examples for which Assumption \ref{assumption:5_regular} is satisfied. Establishing a concrete characterization of the pairs $(\eta,\rho_X)$ for which \ref{assumption:5_regular} holds being challenging, we will restrict our attention to piecewise linear functions only, which is justified by the following result:
	
	\begin{proposition}[Adapted from \citep{chen2022improved}, Theorem 1]
	\label{prop:cpwl}
	For any integer $N\in\N$, and any continuous piece-wise linear (CPWL) function $f:\R^d\to\R$, there exists a ReLU neural network with width at most $c_1N^2$ and depth at most $c_2\log N$ which represents $f$ exactly. Here $c_1,c_2>0$ are universal constants.
	\end{proposition}
	
	Proposition \ref{prop:cpwl}, shows that in many cases, a rate of approximation for CPWL functions can translate to comparable rates for ReLU neural networks. We will thus in the following exhibit examples of pairs $(\eta, \rho_X)$ for which CPWL functions can achieve faster rates of approximation in the $L^2(\rho_X)$ sense.
	
	\subsubsection{Ahlfors regular distribution and local flatness}
	
	We now give the first example for which faster $L^2(\rho_X)$ can be achieved: when $\rho_X$ is an \textit{Ahlfors regular} distribution\footnote{We refer the reader to \citep{mattila1999geometry} for a comprehensive overview of Ahlfors regularity.} which concentrates its mass on regions where $\eta$ is very flat. More specifically:
	
	\begin{lemma}
		Let $\rho_X$ be the marginal data distribution on $\mathcal X=[0,1]^d$ with support
		$S := \operatorname{supp}(\rho_X) \subset \mathcal X$.
		Assume that there exist $k \in (0,d)$ and constants $c_\rho, C_\rho, r_0 > 0$ such that
		\begin{equation}\label{eq:Ahlfors}
			c_\rho r^k \le \rho_X(B(x,r)) \le C_\rho r^k
			\quad \text{ for all } x \in S,\ 0 < r \le r_0 .
		\end{equation}
		Note that $\dim_{\mathcal H}(S) = k < d$, and $\rho_X$ is singular with respect to $d$-dimensional
		Lebesgue measure on $\mathcal X$. Let $\beta \in \mathbb N$, and assume that there exist an open set $U \subset \mathbb R^d$ with $S \subset U$ and a function $\tilde\eta \in C^\beta(U)$ such that
		$\tilde\eta = \eta$ on $S$ and
		$D^\alpha \tilde\eta(x) = 0$ for all $x \in S$ and all multi-indices $\alpha$ with $1 \le |\alpha| \le \beta - 1$. Then there exist $K_1,K_2 > 0$, depending only on $d,k,\beta,c_\rho,C_\rho,r_0$ and the $C^\beta$-norm of $\tilde\eta$, such that for every integer
		$N \ge 1$ there exists a CPWL function $g_N : \mathcal X \to \mathbb R$ with at most $K_1N$ pieces and approximation error
		\[
		\|\eta - g_N\|_{L^2(\rho_X)} \le K_2\, N^{-\beta/k} .
		\]
		In particular, Assumption \ref{assumption:5_regular} holds with approximation rate $s$ as soon as $\beta/k> 4s/d$.
	\end{lemma}
	
	\begin{proof}
		In what follows, we work with the extension $\tilde\eta$ on $U$ and drop the tilde from the notation for convenience. First note that by Taylor's theorem at points $x \in S$, together with the vanishing assumption on derivatives of order $1,\dots,\beta-1$ and the boundedness of the
		order-$\beta$ derivatives give
		\begin{equation}\label{eq:taylor}
			|\eta(y) - \eta(x)| \le C\, |y - x|^\beta \quad \text{ for all } x \in S,\ y \in U
		\end{equation}
		for some constant $C$. Now, for $x \in S$ and $0 < r \le r_0$ let
		\begin{equation}
			\label{eqn:best_affine}
			\alpha(x,r)
			:= \inf_{a \ \text{affine}}
			\Big( \rho_X(B(x,r))^{-1} \int_{B(x,r)} |\eta(y) - a(y)|^2\, d\rho_X(y) \Big)^{1/2}
		\end{equation}
		be the local best affine approximation error. Using the constant affine function $a :y\mapsto \eta(x)$ in \eqref{eqn:best_affine}, and using inequality \eqref{eq:taylor} we obtain
		\begin{equation}\label{eq:alpha}
			\alpha(x,r) \le C\, r^\beta \quad \text{ for all } x \in S,\ 0 < r \le r_0.
		\end{equation}
		
		Now fix $N \ge 1$, and choose $r \sim N^{-1/k}$ small enough so that $r \le r_0$.
		Let $\{x_j\}_{j=1}^J \subset S$ be a family of $r$-separated points of maximal cardinality, and set $B_j := B(x_j,r)$.
		Then $S \subset \bigcup_j B_j$ and the balls $B(x_j,r/2)$ are disjoint.
		By \eqref{eq:Ahlfors} and the fact that $\rho_X$ is a probability measure, we get
		\[
		1 \ge \sum_{j=1}^J \rho_X\bigl(B(x_j,r/2)\bigr) \ge J\, c_\rho (r/2)^k ,
		\]
		so $J \lesssim r^{-k} \sim N$. Similarly, we can show that the number of balls $(B_j)_j$ to which an arbitrary $x\in S$ can belong is bounded above by a constant depending only on $k,c_\rho,C_\rho$: indeed, if we set $J(x) := \{ j : x \in B_j \}$, then the disjointness of the balls $B(x_j,r/2)$ and \eqref{eq:Ahlfors} give
		\[
		|J(x)|\, c_\rho (r/2)^k \le \sum_{j\in J(x)} \rho_X(B(x_j,r/2)) \le \rho_X(B(x,2r)) \le C_\rho (2r)^k.
		\]
		Hence $|J(x)| \le (C_\rho/c_\rho)\,4^k$ for all $x$. Now, choose a measurable partition $(P_j)_{j=1}^J$ of $S$ with $P_j \subset B_j$ for each $j$, and add $\mathcal X \setminus S$ as an extra cell to cover $\mathcal{X}$. For each $j$ pick an affine function $a_j$ such that
		\[
		\left( \rho_X(B_j)^{-1} \int_{B_j} |\eta - a_j|^2\, d\rho_X \right)^{1/2}
		\le 2\, \alpha(x_j,r) .
		\]
		Define $g_N(x) := a_j(x)$ for $x \in P_j$ and define $g_N$ arbitrarily on $\mathcal X \setminus S$. By construction, $g_N$ is affine on each $P_j$, hence CPWL with at most $J \lesssim N$ pieces. Furthermore, since $P_j\subset B_j$, we have
		\[
		\int_{P_j} |\eta - g_N|^2\, d\rho_X
		\le \int_{B_j} |\eta - a_j|^2\, d\rho_X
		\lesssim \alpha(x_j,r)^2\, \rho_X(B_j)
		\lesssim r^{2\beta}\, \rho_X(B_j),
		\]
		where the last inequality follows from \eqref{eq:alpha}. Summing over $j$ and using the boundedness of the overlap $J(x)$ gives
		\[
		\|\eta - g_N\|_{L^2(\rho_X)}^2
		= \int_{\mathcal X} |\eta - g_N|^2\, d\rho_X
		\lesssim r^{2\beta}.
		\]
		Since $r \sim N^{-1/k}$, this yields $\|\eta - g_N\|_{L^2(\rho_X)} \lesssim N^{-\beta/k}$, as claimed.
	\end{proof}
	
	\subsubsection{Discrete distribution}
	
	We now give a second, simpler example for which the faster approximation rates in Assumption \ref{assumption:5_regular} can be achieved. We begin by formally defining \textit{discrete distributions} on the cube $\mathcal{X}=[0,1]^d$.
	
	\begin{definition}[Discrete distribution]
		\label{def:discrete}
		For any $x\in\mathcal{X}$, let $\delta_{x}$ denote the Dirac probability measure at $x$, which satisfies $\delta_{x}(A) = \ind_A(x)$ for any measurable $A\subseteq \mathcal{X}$. We say that the marginal data distribution $\rho_{X}$ is \textit{discrete} if there exist a sequence $x_1,x_2,\ldots\subseteq \mathcal{X}$ of pairwise distinct points, and a sequence $\lambda_1,\lambda_2,\ldots$ of non-negative real numbers, such that $\sum_{i\ge1} \lambda_i = 1$ and
		\[\rho_X := \sum_{i\ge1}\lambda_i \delta_{x_i}.\]
	\end{definition} 
	
	\begin{lemma}
		\label{lemma:adapted_distribution}
		Assume that $\rho_X = \sum_{i\ge1}\lambda_i \delta_{x_i}$ is a discrete distribution, and fix $s>0$. If the coefficients $(\lambda_i)_{i\ge 1}$ are such that
		\[\sum_{i\ge n+1} \lambda_i \le C_\rho n^{-4s/d} \!\quad \text{ for all } n\ge1,\]
		where $C_\rho$ is a positive universal constant, then Assumption \ref{assumption:5_regular} is satisfied with approximation rate $s$ and constant $C_2(s)=0$.
	\end{lemma}
	
	\begin{proof}[Proof of Lemma \ref{lemma:adapted_distribution}]
		We will prove the lemma by constructing a sequence $(\Psi_n)_{n\ge1}$ of DNN with 2 hidden layers and width $O(n)$ which, for every $n\in\N$, interpolate the Bayes regression function $\eta$ at every $x\in \{x_1,\ldots,x_n\}$.  
		
		To that end, fix $n\in\N$: since $x_1,\ldots,x_n$ are pairwise distinct, the set $\R^d \setminus \cup_{i\ne j} (x_i - x_j)^\perp$ is not empty, and we can find a direction $v\in\R^d$ such that $v\cdot x_1,\ldots,v\cdot x_n$ are pairwise distinct. Now let $b = 2\max_{1\le i \le n} |v\cdot x_i|$, and note that the map $NN_n : x\mapsto \text{ReLU}(v\cdot x + b)$ maps each $x_i$ to $v\cdot x_i + b$.
		
		Given the $n$ real numbers $v\cdot x_1 + b<\ldots<v\cdot x_n + b$ (reindexing as necessary), one can construct a continuous piecewise linear map $PL_n:\R\to\R$ with breakpoints $(v\cdot x_1 + b, \eta(x_1)), \ldots, (v\cdot x_n +b, \eta(x_n))$. Such a map $PL_n$ can easily be realized by a shallow ReLU network of width $n$ (see e.g. \citep{arora2018understanding} for an explicit construction). Each $\Psi_n = PL_n \circ NN_n$ is thus realized by a DNN with 2 hidden layers and width equal to $\max(d,n)$. Furthermore we have $|\Psi_n(x)|\le \|\eta(x)\|_{L^\infty(\mathcal{X})}$ for all $x\in[0,1]^d$, and by construction, we have 
		\[\|\Psi_n - \eta\|_{L^2(\rho_X)}^2 = \sum_{i\ge n+1} \lambda_i \big(\Psi_n(x_i) - \eta(x_i)\big)^2 \le 4\|\eta\|_{L^\infty(\mathcal{X})}^2\sum_{i\ge n+1} \lambda_i.\]
		Lastly, note that for any depth $L\ge 2$, the identity mapping $Id:x\in \R^d \mapsto x$ can be realized by the following parameter vector
		\[\boldsymbol{\theta} = \left(\left( \begin{pmatrix}
			I_d \\ 
			-I_d
		\end{pmatrix}, 0 \right), (I_{2d}, 0), \ldots, (I_{2d}, 0), \left( (I_d - I_d), 0\right)\right),\]
		where $I_d$ is the $d\times d$ identity matrix, and the tuple $(I_{2d}, 0)$ is repeated $L-2$ times. Note that it is easy to modify the above architecture so that it yields a representation of the identity mapping with same depth and arbitrary width $W\ge 2d$. Hence, by ``padding" the width and depth of $\Psi_n$ as necessary, we see that Assumption \ref{assumption:5_regular} is indeed satisfied with rate $s$ and constant $C_2(s)=0$ whenever $\sum_{i\ge n+1} \lambda_i \le C_\rho n^{-4s/d}$, as claimed.
	\end{proof}
	
	\subsubsection{Quasi-discrete distribution with adaptivity}
	
	Slightly generalizing the discrete example from above, we propose to consider the following:
	
	\begin{definition}[Quasi-discrete distribution]
		\label{def:quasidiscrete}
		We say that the marginal data distribution $\rho_{X}$ is \textit{quasi-discrete} if there exist a sequence $E_1,E_2,$ of measurable and pairwise disjoint subsets of $\mathcal{X}$, and a sequence $\lambda_1,\lambda_2,\ldots$ of non-negative real numbers, such that $\sum_{i\ge1} \lambda_i = 1$ and
		\[\rho_X := \sum_{i\ge1}\lambda_i \ind_{E_i}.\]
		If the Bayes regression function $\eta$ is such that, for all $i\ge1$, $\eta$ can be exactly represented on $E_i$ by a CPWL function whose number of pieces does not depend on $i$, then we say that $\rho_X$ is \emph{adapted} to $\eta$. 
	\end{definition}
	
	In the same vein as the Ahlfors distribution example discussed earlier, the notion of ``adaptivity" captures the idea that $\rho_{X}$ concentrates its mass only where $\eta$ is not just flat, but actually (piecewise) linear. Trivially, we get the following Lemma:
	
	\begin{lemma}
		\label{lemma:qdiscrete_a5}
		Assume that $\rho_X = \sum_{i\ge1}\lambda_i \ind_{E_i}$ is a quasi-discrete distribution adapted to $\eta$, and fix $s>0$. If the coefficients $(\lambda_i)_{i\ge 1}$ are such that
		\[\sum_{i\ge n+1} \lambda_i \le C_\rho n^{-4s/d} \!\quad \text{ for all } n\ge1,\]
		where $C_\rho$ is a positive universal constant, then Assumption \ref{assumption:5_regular} is satisfied with approximation rate $s$ and constant $C_2(s)=0$.
	\end{lemma}
	
	\subsubsection{Distribution with fractal support}
	
	The last example we give of pairs $(\rho_X,\eta)$ for which \ref{assumption:5_regular} holds is that of $\rho_{X}$ having fractal support of sufficiently small dimension, with $\eta$ being an arbitrary $\mathcal{C}^2$ function. In contrast with the previous examples where $\rho_{X}$ was concentrating its mass on the regions where $\eta$ looks ``almost linear", here we don't need to assume such nice regions exist, and the low dimensionality of $\rho_{X}$'s support alone ensures the result. We will be using the following notion of fractal dimension, which can be found, e.g., in \citep{mattila1999geometry}:
	
	\begin{definition}[Upper Minkowski dimension]
		\label{def:minkowski_dim}
		For $A$ a non-empty bounded subset of $\R^d$, and $\varepsilon>0$, denote by $N(A,\varepsilon)$ the smallest number of cubes of sidelength $\varepsilon$ needed to cover $A$. The upper Minkowski dimension of $A$ is then defined as
		\[\overline\dim_M A:=\inf\left\{t:\limsup_{\varepsilon\downarrow 0} N(A,\varepsilon)\varepsilon^t=0\right\}.\]
	\end{definition}
	
	In words, the upper Minkowski dimension of $A$ is the smallest exponent $d_M$ such that $N(A,\varepsilon)\le C\varepsilon^{-d_M}$. We can now state our result:
	\begin{lemma}
		\label{lemma:fractal}
		Denote by $\rho_{X}$ the marginal data distribution, and by $S:=\text{supp}(\rho_X)\subset \mathcal{X}$ its support. Let ${d_{\rho}:= \overline{\dim}_M S}$ denote the upper Minkowski dimension of $S$. If $\eta\in \mathcal C^2(\mathcal X)$, then for any $\epsilon>0$, there exist constants $K_1,K_2>0$, which may depend on $\|\eta\|_{\mathcal{C}^2(\mathcal{X})}, S,d_\rho,d$ and $\epsilon$ only, such that for every integer $N \ge 1$ there exists a CPWL function $g_N : \mathcal X \to \mathbb R$ with at most $K_1N$ pieces and approximation error satisfying
		\[
		\|\eta - g_N\|_{L^2(\rho_X)} \le K_2\, N^{-2/(d_{\rho}+\epsilon)}.
		\]
		In particular, Assumption \ref{assumption:5_regular} holds with approximation rate $s$ as soon as $2/d_{\rho} > 4s/d$.
	\end{lemma}
	
	\begin{proof}
		Let $S=\text{supp}(\rho_X)\subset\mathcal X$ and $d_\rho=\overline\dim_M S$. By Definition~\ref{def:minkowski_dim}, for every $t>d_\rho$ there exist constants $C_t>0$ and $\varepsilon_0>0$ such that $N(S,\varepsilon)\le C_t\varepsilon^{-t}$ for all $0<\varepsilon\le\varepsilon_0$, where $N(S,\varepsilon)$ is the minimal number of cubes of sidelength $\varepsilon$ needed to cover $S$.
		
		Fix $t>d_\rho$ and $N\ge1$ and set $\varepsilon_N:=(N/C_t)^{-1/t}$. For all $N$ large enough we have $\varepsilon_N\le\varepsilon_0$, hence there exist cubes $Q_1,\dots,Q_M$ of sidelength $\varepsilon_N$ with $M\le C_t\varepsilon_N^{-t}\le N$ and $S\subset\bigcup_{j=1}^M Q_j$. For the finitely many smaller $N$ we can enlarge the constants $K_1,K_2$ at the end of the proof.
		
		For each $j$ choose $x_j\in Q_j\cap S$ and let $p_j$ be the first order Taylor polynomial of $\eta$ at $x_j$. Since $\eta\in\mathcal C^2(\mathcal X)$ and $\mathcal X$ is compact, there exists $C_\eta>0$, depending only on $\|\eta\|_{\mathcal C^2(\mathcal X)}$ and $d$, such that
		\[
		|\eta(x)-p_j(x)| \le C_\eta\,\varepsilon_N^2 \quad \text{for all } x\in Q_j\cap S \text{ and all } j .
		\]
		Define $g_N:\mathcal X\to\R$ by $g_N(x)=p_j(x)$ if $x\in Q_j$ for some $j$ and $g_N(x)=0$ otherwise. Then $g_N$ is CPWL with at most $M+1\le 2N=: K_1N$ pieces. Since $\rho_X$ is supported on $S$, we have $|\eta(x)-g_N(x)|\le C_\eta\varepsilon_N^2$ for $\rho_X$-almost every $x$, hence
		\[
		\|\eta-g_N\|_{L^2(\rho_X)} \le C_\eta \varepsilon_N^2
		= C_\eta (N/C_t)^{-2/t}
		= K_2(t)\,N^{-2/t}
		\]
		for a constant $K_2(t)>0$ depending only on $t$, $S$, $d$ and $\|\eta\|_{\mathcal C^2(\mathcal X)}$. This concludes the proof.
	\end{proof}
	
	\subsection{Lower bounds}
	\label{section:lower_bounds}
	
	\subsubsection{Preliminary results}
	
	The main tool we will need to obtain our minimax lower bounds is Assouad's lemma. Before stating the lemma, we define the \textit{probability hypercube}, following notation from \citep{audibert2004classification}.
	
	\begin{definition}[Probability Hypercube]
		\label{def:hypercube}
		Let $m \in \N, w \in (0,1], b \in (0,1]$, and $b' \in(0,1]$. A \((m, w, b, b')\)-hypercube of probability distributions is a family
		\[
		\{\mathbb{P}_{\vec{\sigma}} \mid \vec{\sigma} = (\sigma_1, \dots, \sigma_m) \in \{-1,1\}^m\}
		\]
		of \(2^m\) probability distributions on $\mathcal{X}\times\{-1,1\}$ having the same first marginal:
		\[
		\mathbb{P}_{\vec{\sigma}}(dX) = \mathbb{P}_{(+1, \dots, +1)}(dX) =: \mu
		\!\quad \text{ for all } \vec{\sigma}\in \{-1,1\}^m,\]
		and such that there exists a partition $\mathcal{X}_0, \dots, \mathcal{X}_m$ of the unit cube $\mathcal{X}$ satisfying
		
		\begin{itemize}
			\item for any $j \in \{1, \dots, m\}$, we have $\mu(\mathcal{X}_j) = w$
			\item for any $j \in \{0, \dots, m\}$, and any $x \in \mathcal{X}_j$, we have
			\[
			\mathbb{E}_{\vec{\sigma}}[Y \mid X=x] = \sigma_j\,\xi(x),
			\]
		\end{itemize}
		where $\xi : \mathcal{X} \rightarrow [0,1]$ is such that for any $j \in \{1, \dots, m\}$,
		\[
		\begin{cases}
			wb = \sqrt{w^2 - \left(\Exp_{X\sim\mu}\bigl[\sqrt{1 - \xi^2(X)} \ind_{\mathcal{X}_j}\bigr]\right)^2}, \\[1mm]
			wb' = \Exp_{X\sim\mu}\bigl[\xi(X)\ind_{\mathcal{X}_j}\bigr].
		\end{cases}
		\]
	\end{definition}
	
	We are now ready to state Assouad's lemma, in a version adapted to the setting in which the index set is given by $\mathcal{Y}=\{-1,1\}$:
	
	\begin{lemma}[Adapted from Lemma 5.1 in \citep{audibert2004classification}]
		\label{lemma:assouad} If a set $\mathcal{P}$ of probability distributions contains a $(m, w, b, b')$-hypercube, then for any measurable estimator 
		\[
		\hat{c} : (\mathcal{X}\times\{-1,1\})^N \to  \mathcal{M}(\mathcal{X},\{-1,1\}),
		\]
		we have
		\[
		\sup_{\mathbb{P} \in \mathcal{P}} \Bigl\{\Exp_{\mathbb{P}^{\otimes N}} \left[\mathcal R_{\mathbb{P}}(\hat{c})\right] - \mathcal{R}_{\mathbb{P}}^*\Bigr\} \geq \frac{1 - b\sqrt{Nw}}{2}\; m\,w\,b',
		\]
		where $ \mathcal R_{\mathbb{P}}(f)=\mathbb{P}\{f(X)\neq Y\} $ and $\mathcal{R}_{\mathbb{P}}^*$ is the Bayes optimal risk under $\Prob$.
	\end{lemma}
	
	Although the original result in \citep{audibert2004classification} only proves an analogous version of Lemma \ref{lemma:assouad} for the case $\mathcal{Y}=\{0,1\}$, one can readily check that Lemma \ref{lemma:assouad} is a direct consequence of applying the transformation $t\mapsto (t + 1)/2$ to the labels.

	\subsubsection{Proof of Theorem \ref{thm:lower_bound}}
	
	We now prove the minimax lower bound. Note that our argument mostly follows the approach taken in \citep{audibert2007fast}, with suitable adjustments made as needed. We first prove the result for the low noise condition \ref{assumption:1_low_noise}. The lower bound for the hard margin condition \ref{assumption:2_hard_margin} follows from a straightforward modification of the argument, which we explain at the end.
	
	For an integer $h\ge1$, we partition the unit cube $\mathcal{X}=[0,1]^d$ as follows: define the regular grid $G_h$ as
	\[G_h := \left\{\left(\frac{2k_1+1}{2h},\ldots,\frac{2k_d+1}{2h}\right): k_i\in \{0,\ldots,h-1\}, i=1,\ldots,d \right\},\]
	
	and for $x\in\mathcal X$ let $n_h(x)\in G_h$ be the minimum-norm element of $G_h$ closest to $x$ in Euclidean norm, so that $n_h:\mathcal{X}\to G_h$ is a well-defined (single-valued) function. We then define $\mathcal{X'}_1,\ldots\mathcal{X'}_{h^d}$ as the canonical partition of $\mathcal{X}$ induced by $n_h$. That is, $x,y\in\mathcal{X}$ belong to the same subset $\mathcal{X}_i$ if and only if $n_h(x) = n_h(y)$. Now fix an integer $m\ge 1$, and for $1 \le i\le m$, define $\mathcal{X}_i:=\mathcal{X'}_i$, and let $\mathcal{X}_0:=[0,1]^d\setminus\cup_{1\le i\le m} \mathcal{X}_i$, so that $\mathcal{X}_0,\ldots,\mathcal{X}_m$ is a partition of $\mathcal{X}$ as well.
	
	Let $u:\R^+\to\R^+$ be the continuous piecewise linear function defined by $u(x)=1$ for $x\in[0,1/4]$, $u(x)=-4x+2$ for $x\in[1/4,1/2]$, and $u(x)= 0$ for $x\in [1/2,\infty)$, and define $\phi:\mathcal{X}\to \R^+$ by $\phi(x)= u(|x|_2)$, where $|\cdot|$ denotes the $\ell_2$ (Euclidean) norm.  
	
	We now define the hypercube $\mathcal{H}=\left\{\Prob_{\vec{\sigma} } \mid \vec{\sigma}\in\{-1,1\}^m\right\}$ of probability distributions on $\mathcal{X}\times\{-1,1\}$. For all $\vec{\sigma}$, we set the marginal distribution $\Prob_{\vec{\sigma}}(dX)=:\mu$ on $\mathcal{X}$ as a discrete distribution (as per Definition \ref{def:discrete}) defined in the following way: denote $z_1,\ldots,z_{h^d}$ the centers of the grid $G_h$, and for $1\le i\le m$, fix $\widetilde{\mathcal{X}}^{(i)}:=(x_j^{(i)})_{j\ge 1}$ as a dense countable subset of $B(z_i,1/4h)$, the Euclidean ball with center $z_i$ and radius $1/4h$. Likewise, let $\widetilde{\mathcal{X}}^{(0)}:=(x_j^{(0)})_{j\ge 1}$ be a dense countable subset of $\mathcal{X}_0$. Finally, let $0<w\le m^{-1}$, and define the marginal probability $\mu$ for all $x\in\mathcal{X}$ by
	\[
	\mu(\{x\}):=
	\begin{cases}
		C_sw2^{-js} &\text{if } x=x_j^{(i)} \text{ for some } (i,j)\in\{1,\ldots,m\}\times\N\\
		C_s(1-mw)2^{-js} &\text{if } x=x_j^{(0)} \text{ for some } j\ge1\\
		0 &\text{else},
	\end{cases}
	\]	
	where $C_s := \left(\sum_{j\ge1}2^{-js}\right)^{-1}$ is a normalization constant. Observe that $\mu$ does not depend on $\vec{\sigma}$.
	
	We now define for each $\vec{\sigma}\in\{-1,1\}^m$ the regression function $\eta_{\vec{\sigma}}:x\mapsto \Exp_{(X,Y)\sim \Prob_{\vec{\sigma}}}[Y\mid X=x]$, which will fully determine $\Prob_{\vec{\sigma}}$. For all $x\in\mathcal{X}_j$, $1\le j\le m$, we set $\eta_{\vec{\sigma}}(x):= \sigma_j \varphi(x)$, where $\varphi(x):= h^{-1} \phi(h[x-n_h(x)])$, and for $x\in\mathcal{X}_0$, we let $\eta_{\vec{\sigma}}(x)=1$. Note that by Lemma \ref{lemma:adapted_distribution}, we have that Assumption \ref{assumption:5_regular} is satisfied.  
	
	We now check the margin assumption \ref{assumption:1_low_noise}: fix $z_1 = (1/2h,\ldots,1/2h)\in \mathcal{X}_1$. For any $\vec{\sigma}\in \{-1,1\}^m$ we have
	\begin{align*}
		\Prob_{\vec{\sigma}}(|\eta_{\vec{\sigma}}(X)|\le t) &= m	\Prob_{\vec{\sigma}}(\phi(h[X - z_1])\le t h)\\
		&=m \sum_{x\in B(z_1, 1/4h)} \ind_{\{\phi(h[X - z_1])\le th\}}(x) \mu(\{x\})\\
		&= C_s mw\sum_{j\ge1}2^{-js} \ind_{\{\phi(h[X - z_1])\le th\}}(x_j^{(1)})\\
		&= C_s mw\sum_{j\ge1}2^{-js} \ind_{\{\phi^{-1}([0,th])\}}(x_j^{(1)}/h + z_1)\\
		&= mw \ind_{\{th\ge 1\}}.
	\end{align*}
	We thus see that the low noise condition \ref{assumption:1_low_noise} holds as long as $mw \le Ch^{-q}$.  
	
	We are now ready to prove the lower bound. By applying Lemma \ref{lemma:assouad}, we have for any classifier $\hat{c}_n$
	\begin{equation}
		\label{eqn:lower_bound}
		\sup_{\mathbb{P} \in \mathcal{H}} \Bigl\{\Exp_{\mathbb{P}^{\otimes n}} \left[\mathcal R_{\mathbb{P}}(\hat{c}_n)\right] - \mathcal{R}_{\mathbb{P}}^*\Bigr\} \geq \frac{1 - b\sqrt{nw}}{2}\; m\,w\,b',
	\end{equation}
	where $b=b'=h^{-1}$. Denote by $\alpha^* := s/(s + C_2B_1 + B_2)$ the optimal rate, and remember that $q>0$ denotes the noise exponent in Assumption \ref{assumption:1_low_noise}. By taking
	\[h = \begin{cases}
		\left\lceil n^{2/(d+q-2)}\right\rceil &\text{ if } q\le 1,\\
		\left\lceil n^{\alpha^*/(q-1)}\right\rceil &\text{ if } q> 1,
	\end{cases}\quad m=h^d, \quad w = h^{-d - q},\]
	and replacing in \eqref{eqn:lower_bound}, we obtain the claimed lower bounds in equation \ref{eqn:lower_bound_1}, which proves the first case of Theorem \ref{thm:lower_bound}.  
	
	We now address the case where we require the probability hypercube $\mathcal{H}$ to satisfy the stronger Assumption \ref{assumption:2_hard_margin}. In that case, we can simply modify the previous construction as follows: define the grid $G_h$, partition $(\mathcal{X}_i)_{1 \le i \le m}$, functions $u, \phi$, and marginal probability distribution $\mu$ in the exact same way as before, but now let $\varphi(x)=\delta \phi(h[x-n_h(x)])$ for all $x\in\mathcal{X}$. It is straightforward to check that with these choices, all distributions in this probability hypercube satisfy Assumptions \ref{assumption:2_hard_margin} and \ref{assumption:5_regular}. Furthermore, $\mathcal{H}$ as defined is a $(m,w,b,b')$-hypercube with $b = b' = \delta$. Hence by taking
	\[h = \left\lceil m^{1/d}\right\rceil,\quad m = \max\{1,\left\lceil n^{1-\alpha^*}\right\rceil \},\quad w= \frac{4}{9\delta^2n},\]
	where $\alpha^*=s/(C_2 B_1 + B_2)$ is the optimal rate, and plugging these values into \eqref{eqn:lower_bound}, we find that the lower bound \eqref{eqn:lower_bound_2} holds. The proof is thus complete.

	\subsubsection*{Acknowledgments}
	The authors would like to thank the reviewers for their constructive feedback which greatly improved the quality of this paper. X.Z. acknowledges support from the Hong Kong General Research Funds (Grants No. 11304525, No. 11318522, and No. 11308323). N.T. was supported by the Hong Kong PhD Fellowship Scheme, which also funded a research visit to The University of Sydney where part of this work was carried out.

	\bibliographystyle{plainnat}
	\bibliography{citations-exponential-rates.bib}
	
	\appendix
	
	\section{Margin conditions for real-life datasets}
	
	The goal of this section is to empirically evaluate the validity of margin conditions on two widely-used benchmark classification datasets: Fashion MNIST and CIFAR-10. Since both datasets contain more than two classes, we fall back to the binary classification setting by restricting our analysis to two arbitrarily selected classes from each dataset. Following a similar approach to \citep{kim2018fast}, we investigate the margin conditions in two ways: first, we generate interpolated images between the two classes and visually examine whether such interpolated samples, which are inherently harder to classify, could realistically appear in the original datasets. Second, we train a deep Convolutional Neural Network (CNN) with ReLU activation and squared loss until convergence, and analyze the histogram of its outputs for all images in the test set. For both cases, the resulting histograms strongly suggest that the weak margin condition \ref{assumption:1_low_noise} hold. \\
	The CNN used to obtain the numerical results consists of two convolutional layers with 32 and 64 filters of size $3 \times 3$, each followed by ReLU activation and $2 \times 2$ max pooling. The output of the second convolutional layer goes through a fully connected ReLU network with architecture $\mathbf{a}=(64 \times 8 \times 8, 128, 1)$.

	\subsection{Fashion MNIST: ``T-Shirt" vs ``Pullover"}
	
	\begin{figure}[H]
		\centering
		\includegraphics[width=\textwidth]{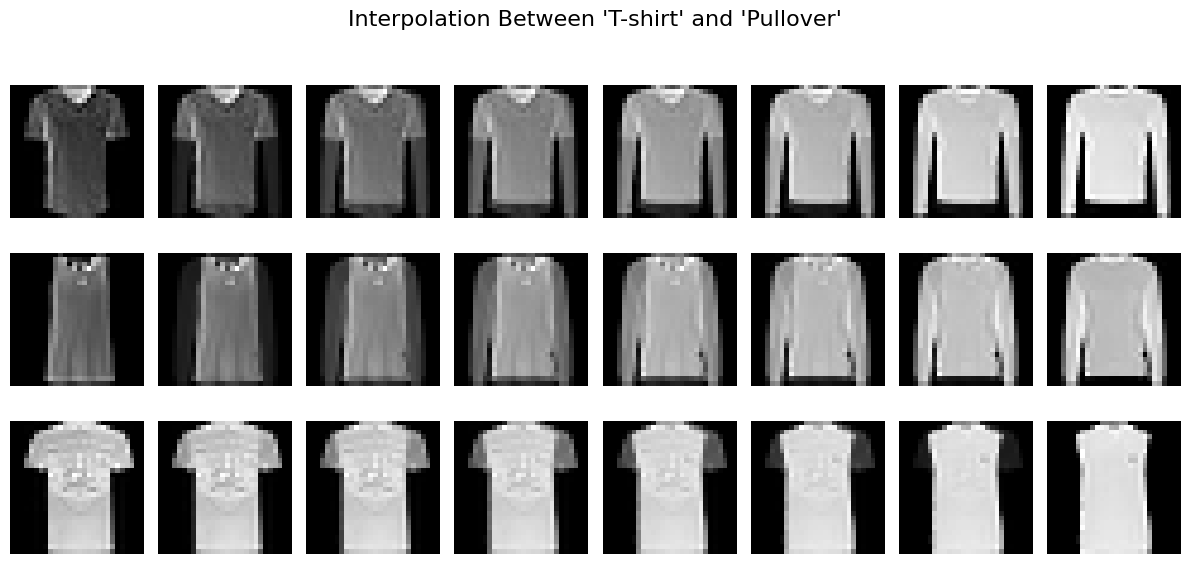} 
		\caption{Interpolation of randomly selected images respectively in the ``T-shirt" and ``Pullover" class.}
	\end{figure}

	\begin{figure}[H]
		\centering
		\includegraphics[width=0.5\textwidth]{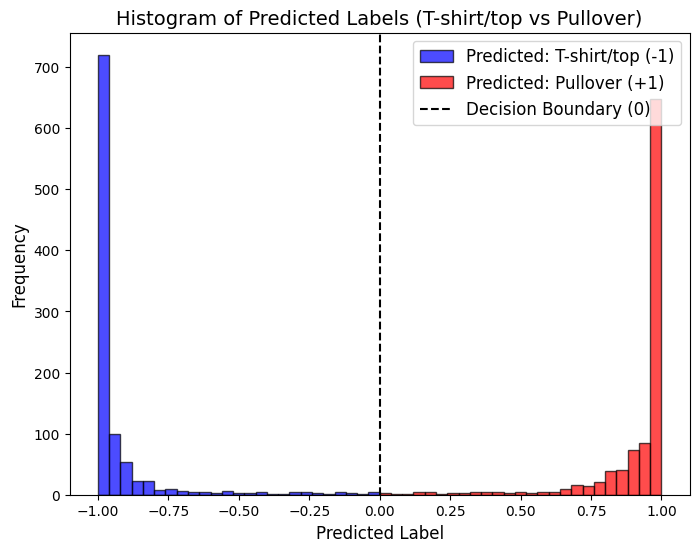} 
		\caption{Histogram of predicted labels for images in the testing dataset: the histogram matches the function $t\mapsto C|t|^q$.}
	\end{figure}
	
	Despite the classes ``T-Shirt" and ``Pullover" exhibiting a relatively high level of visual similarity, the obtained histogram for the CNN approximation $\hat{\eta}$ of the Bayes regression function strongly suggests that the low-noise condition \ref{assumption:1_low_noise} holds with a seemingly large exponent $q$.
	
	\subsection{CIFAR-10: ``Automobile" vs ``Truck"}
	
	\begin{figure}[H]
		\centering
		\includegraphics[width=\textwidth]{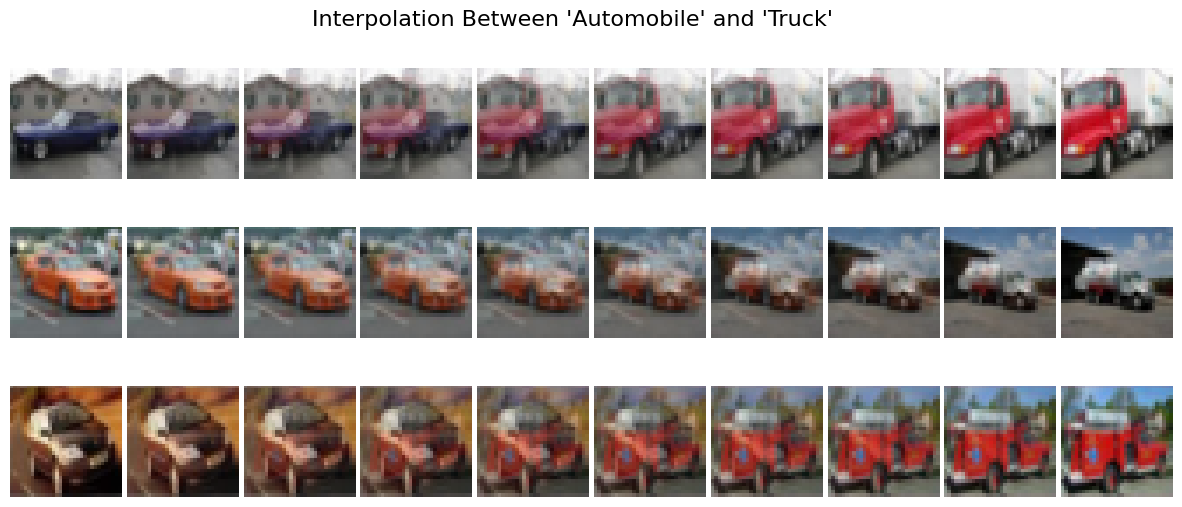} 
		\caption{Interpolation of randomly selected images respectively in the ``Automobile" and ``Truck" class.}
	\end{figure}

	\begin{figure}[H]
		\centering
		\includegraphics[width=0.5\textwidth]{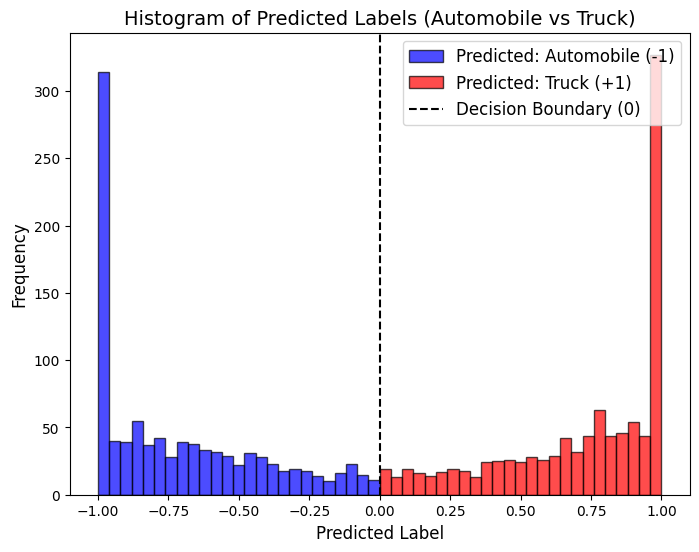} 
		\caption{Histogram of predicted labels for images in the testing dataset: the decay as $t\to0$ is much slower than for the Fashion MNIST dataset.}
	\end{figure}
	
	As for the Fashion MNIST dataset, the two selected classes have a relatively high amount of visual similarity. In this case, the histogram of labels predicted by CNN approximation $\hat{\eta}$ of the Bayes regression function suggests that the low-noise condition \ref{assumption:1_low_noise} holds, but for noticeably lower values of the exponent $q$ and multiplicative constant $C$. 
	
\end{document}